\documentclass[11pt]{article}
\usepackage[margin=1in]{geometry}
\usepackage{amsfonts}       
\usepackage{booktabs}       
\usepackage[utf8]{inputenc} 
\usepackage[T1]{fontenc}
\usepackage{microtype}      
\usepackage{url}            
\usepackage{hyperref}       
\usepackage{nicefrac}       
\usepackage[dvipsnames]{xcolor}         

\usepackage{libertine}
\usepackage{libertinust1math}

\hypersetup{
           breaklinks=true,   
           colorlinks=true,   
}



\usepackage{wrapfig}


\usepackage{booktabs}       
\usepackage{amsfonts}       
\usepackage{nicefrac}       
\usepackage{xcolor}         
\usepackage{amsmath}
\usepackage{amsthm}
\usepackage{mathtools}
\usepackage[hang,flushmargin]{footmisc}
\usepackage{authblk}

\providecommand{\customgenericname}{}
\newcommand{\newcustomtheorem}[2]{%
  \newenvironment{#1}[1]
  {%
   \renewcommand\customgenericname{#2}%
   \renewcommand\theinnercustomgeneric{##1}%
   \innercustomgeneric
  }
  {\endinnercustomgeneric}
}

\newcustomtheorem{customthm}{Theorem}
\newcustomtheorem{customlem}{Lemma}
\newcustomtheorem{customdef}{Definition}

\usepackage[capitalise]{cleveref}
\usepackage[most]{tcolorbox}

\newcommand{\cD}{\mathcal{D}}
\newcommand{\EE}[2]{\underset{#1}{\mathbb{E}} \left[ #2  \right]  }

 \usepackage[scr = zapfc]{mathalpha}
\usepackage{mathtools}

\newcommand{\lin}{\textbf{Lin}}
\newcommand{\cE}{\mathscr{E}}

 \newcommand{\lrc}[1]
    {\left\{ #1 \right\}}
    
    \newcommand{\lrp}[1]
    {\left( #1 \right)}

 \newcommand{\lrv}[1]
    {\left| #1 \right|}

\newcommand{\mc}[1]{\mathcal{#1}}

\newcommand{\cH}{\mc{H}}
\newcommand{\cX}{\mc{X}}
\newcommand{\cY}{\mc{Y}}

\newcommand{\cU}{\mc{U}}
\newcommand{\cN}{\mc{N}}

\newcommand{\cA}{\mc{A}}

\usepackage{bbm}

\newcommand{\VC}{\mathsf{VC}}

\newcommand{\RR}{\mathbb{R}}
\newcommand{\PP}{\mathbb{P}}
\newcommand{\DD}{\mathbb{D}}

  \newcommand{\ERM}{\operatorname{ERM}}

\DeclareMathOperator{\sign}{sign}

\newcommand{\gc}[2]{\bigl[\!\begin{smallmatrix} #1 \\ #2 \end{smallmatrix}\!\bigr]}

\newcommand{\1}{\textbf{1}}
\newcommand{\Eexp}{\mathbb{E}}
 
\newtheorem*{remark}{Remark} 
\newtheorem{theorem}{Theorem}
\newtheorem*{theorem*}{Theorem}
\newtheorem{definition}{Definition}
\newtheorem{lemma}{Lemma}

\newtheorem{question}{Question}
\crefname{customdef}{Definition}{Definitions}

\definecolor[named]{myLipicsLightGray}{rgb}{0.85,0.85,0.86}
\newenvironment{ShadedBox}{\begin{tcolorbox}[colback=myLipicsLightGray,colframe=myLipicsLightGray,arc=-1pt]\begin{minipage}{0.975\textwidth}}{\end{minipage}\end{tcolorbox}}
\usepackage{MnSymbol}


\hypersetup{
    colorlinks,
    linkcolor={blue!50!black},
    citecolor={blue!50!black},
    urlcolor={blue!50!black}
}

 \newcommand{\ido}{}%

\usepackage{makecell}
\usepackage{scrextend}
\usepackage{bbding}
\usepackage{xfrac}

\title{Fantastic Generalization Measures \\ are Nowhere to be Found}


	
\DefineFNsymbols*{customsymb}{~~~~}
\setfnsymbol{customsymb}

\author[1]{Michael Gastpar}
\author[1]{Ido Nachum}
\author[2]{Jonathan Shafer}
\author[1]{Thomas Weinberger}
\affil[1]{EPFL  \thanks{\scriptsize{\textsuperscript{1} Email: \texttt{\{michael.gastpar, ido.nachum, thomas.weinberger\}@epfl.ch}.}}}
\affil[2]{UC Berkeley  \thanks{\scriptsize{\textsuperscript{2} Email: \texttt{shaferjo@berkeley.edu}.}}}

\date{November 2023}

\begin{document}

\maketitle

\begin{abstract}
\ido{We study the notion of a generalization bound being \emph{uniformly tight}, meaning that the difference between the bound and the population loss is small for all learning algorithms and all population distributions.}
Numerous generalization bounds have been proposed in the literature as potential explanations for the ability of neural networks to generalize in the overparameterized setting. 
\ido{However, in their paper ``Fantastic Generalization Measures and Where to Find Them,'' Jiang et al. (2020) examine more than a dozen generalization bounds, and show empirically that none of them are uniformly tight.} This raises the question of whether uniformly-tight generalization bounds are at all possible \ido{in the overparameterized setting.} We consider two types of generalization bounds: (1) bounds that \ido{may} depend on the training set and the learned hypothesis (e.g., margin bounds). We prove mathematically that no such bound can be uniformly tight in the overparameterized setting; (2) bounds that \ido{may in addition also} depend on the learning algorithm (e.g., stability bounds). For these bounds, we show a trade-off between the algorithm's performance and the bound's tightness. Namely, if the algorithm achieves good accuracy on certain distributions, then no generalization bound can be \ido{uniformly tight for it in the overparameterized setting. 
We explain how these formal results can, in our view, inform research on  generalization bounds for neural networks, while stressing that other interpretations of these results are also possible.}  
\end{abstract}

\newpage

\thispagestyle{empty}

\tableofcontents

\newpage

\section{Introduction}
There has been extensive research in recent years aiming to understand generalization in neural networks.
Principled mathematical approaches often focus on proving \emph{generalization bounds}, which bound the population risk from above by quantities depending on the training set and the trained model.
Unfortunately, many known bounds of this type are often very weak, or even vacuous\footnote{A generalization bound is \emph{vacuous} if it implies a population loss no better than guessing random labels.}, and they do not imply performance guarantees that could explain the strong real-world generalization of neural networks.
Incidentally, there \ido{might be} a good reason for this: in this paper we show that it is  mathematically impossible for certain types of generalization bounds to be tight in a specific sense.

Generalization bounds in the literature often take the following form:
\begin{equation}\label{eq:generalization-bound-basic-form}
    L_{\mathcal{D}}(\mathcal{A}(S)) < L_S(\mathcal{A}(S)) + C(\mathcal{A}(S),S),
\end{equation}
where $S$ is the training set, $\mathcal{A}(S)$ is the hypothesis selected by the learning algorithm $\mathcal{A}$, $L_{\mathcal{D}}$ and $L_S$ denote the population and empirical risk respectively, $C$ is some measure of complexity, and the inequality holds with high probability over the choice of $S$. 
For example, in their paper ``Fantastic Generalization Measures
and Where to Find Them,'' \cite{fantastic} examine more than a dozen generalization bounds of this form that have been suggested in the literature.

For a generalization bound to be useful, it should \ido{ideally} be \emph{tight}, meaning that the difference between the two sides of \cref{eq:generalization-bound-basic-form} is small with high probability. Moreover, we shall call a bound \emph{uniformly tight} if it is tight over all possible (distribution, algorithm)-pairs.

In order to explain generalization in deep neural networks, it is necessary that the bound be tight in the \emph{overparameterized} setting, which roughly means that the number of parameters in the networks is much larger than the number of examples in the training set (\cref{definition:overparameterized}; all definitions appear in \cref{sec:notation,sec:no_measures,sec:no_A_measures}). Given that essentially all known generalization bounds do not satisfy these two criteria, it is natural to ask:

\begin{ShadedBox}
    \begin{question}\label{question:basic-bounds}
        Does there exist a generalization bound of the form of \cref{eq:generalization-bound-basic-form} that is uniformly tight in the overparameterized setting?
    \end{question}
\end{ShadedBox}

Obviously, one can always bound the population loss using \ido{a \emph{validation set}}.
However, the upper bound in \cref{eq:generalization-bound-basic-form} depends only on the hypothesis $\cA(S)$ and the training set $S$, whereas \ido{using a validation set}  does not technically satisfy the requirement of \cref{question:basic-bounds}.
Beyond technicalities, \ido{using a validation set} is conceptually very different from a generalization bound.
\ido{Using a validation set} is a post hoc measurement that provides little insight as to why a certain algorithm does or does not generalize.
In contrast, a meaningful generalization bound (like the VC bound\footnote{The VC bound does not satisfy \cref{question:basic-bounds} because it is vacuous in the overparameterized setting.} for example) provides a scientific theory that predicts the behavior of learning algorithms in a wide range of conditions, and can inform the design of novel learning systems.

One might imagine that the generalization bounds for neural networks surveyed by \cite{fantastic} are not uniformly tight simply because the analyses in the proofs of these bounds are not optimal, and that a more careful proof might establish a tighter bound with better constants.
Or perhaps, none of these measures yield uniformly-tight bounds for large neural networks, but in the future researchers might devise better complexity measures of the form of \cref{eq:generalization-bound-basic-form} that do.
We show that obtaining a bound of the form of \cref{eq:generalization-bound-basic-form} that is uniformly tight requires more assumptions than are typically found in current literature.

\section{Our Contributions}
For a high-level overview of our contributions, see Table~\ref{tab:overview}. Our results are stated using a notion of \emph{estimability}, which is presented informally in \cref{eq:estimability-informal} below (for formal definitions, see Definitions~\ref{def:pred} and \ref{def:alg_pred}). \textbf{All proofs appear in the appendices.}

\subsection{Distribution- and Algorithm-Independent Generalization Bounds}

One central message of this paper is that the answer to \cref{question:basic-bounds} is negative.
The conclusion we draw from this and further analysis is that generalization bounds can be uniformly tight in the overparameterized setting, but only under suitable assumptions on the population distribution or the learning algorithm. 
\ido{Arguably, many} bounds in the literature are presented without assumptions of the type we show are necessary for uniform tightness --- so their tightness for any specific use case is \ido{not guaranteed.}

To reason about \cref{question:basic-bounds}, we introduce the notion of estimability. Informally, a hypothesis class $\cH$ is \emph{estimable} with accuracy $\varepsilon$ if there exists an \emph{estimator} $\cE$ such that for every algorithm $\mathcal{A}$ and every $\cH$-realizable distribution $\cD$, the inequality

\begin{equation}\label{eq:estimability-informal}
    \Big|L_{\mathcal{D}}\big(\mathcal{A}(S)\big) - \cE\big(\mathcal{A}(S), S\big)\Big| < \varepsilon
\end{equation}

holds with high probability over the choice of $S$ (see \cref{def:pred}).
In the realizable setting, a uniformly tight generalization bound like \cref{eq:generalization-bound-basic-form} exists if and only if $\cH$ is estimable. 
Furthermore, if $\cH$ is not estimable then there exists no uniformly tight bound as in \cref{eq:generalization-bound-basic-form} also for learning in the agnostic (non-realizable) setting. \ido{Our negative results for the realizable setting are stronger than (i.e., they imply) negative results for the agnostic setting.}\footnote{If a bound cannot be uniformly tight even just with respect to realizable distributions, then it definitely cannot be uniformly tight with respect to all distributions (both realizable and not) in the more general agnostic setting.}

\begin{table}[!ht]
\hspace{-0.35cm}\begin{tabular}[c]{|p{0.32\textwidth}|p{0.32\textwidth}|p{0.32\textwidth}|}
 \hline
     \makecell{Algorithm-Independent \\ Distribution-Independent} &  
     \makecell{Algorithm-Dependent \\ Distribution-Independent} &
     \makecell{Algorithm Dependent \\ Distribution-Dependent}
 \\
 \hline
    \makecell{\Huge \color{Mahogany} \XSolid \\ ~ \\ Bounds not uniformly tight} &  
    \makecell{\\ \includegraphics[width=2.5cm]{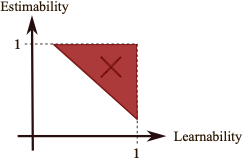} \\ Estimability-learnability \\ trade-off (see Figure~\ref{fig:learn_est}) \\[1em] } & 
    \makecell{\Huge \color{OliveGreen} \Checkmark \\ ~ \\ Bounds can be tight}
 \\
   {
       Theorem~\ref{thm:no_C_general}: In the overparameterized setting, any bound fails for a large fraction of (algorithm, distribution) combinations.
       \newline
       
       Theorem~\ref{thm:no_C} (quantitative result):  For sample size $n\leq d/2$, classes with VC dimension $d$ are not estimable for at least $49\%$ of (algorithm, distribution) combinations.
   }
   &
   {
        Theorem~\ref{thm:h0h1}: In the overparameterized setting, for any specific algorithm there is a trade-off between population loss and estimability.
        \newline
        
        Theorems~\ref{thm:lin} and~\ref{thm:parities} (quantitative results): A learning algorithm cannot simultaneously perform well over the class of linear functions {\it and} be estimable.\vspace{2mm}
   }
   &
   We suggest that future work focus on generalization bounds for specific combinations of algorithms and distributions.
 \\
 \hline
\end{tabular} \hspace{-1cm}\caption{
    An overview: when can generalization bounds be tight in the overparameterized setting?
}
\label{tab:overview}
\end{table}

Our first result shows that no hypothesis class $\cH$ is estimable in the overparameterized setting:

\begin{theorem*}[\textbf{Informal Version of \cref{thm:no_C}}]
    Let $\cH$ be a hypothesis class. If $\cH$ has VC dimension $d$ and the size of the training set is at most $d/2$, then every estimator $\cE$ satisfying \cref{eq:estimability-informal} has $\varepsilon \geq 1/8-o(1)$.
\end{theorem*}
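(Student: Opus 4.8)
The plan is to exhibit a single ``hard'' pair of distributions that an estimator cannot distinguish, and to do so via a symmetrization/randomization argument over the shattered set. Let $x_1,\dots,x_d$ be a set of points shattered by $\cH$, and restrict attention to distributions supported on these points. Fix the sample size $n \le d/2$. I would draw a uniformly random labeling $y \in \{0,1\}^d$ of the shattered set; by shattering, for each such $y$ there is $h_y \in \cH$ consistent with it, so the distribution $\cD_y$ that places its mass on the pairs $(x_i, y_i)$ is $\cH$-realizable. The key point is that, after seeing a sample $S$ of size $n \le d/2$, the learner (and hence the estimator) has observed at most $n$ of the $d$ coordinates of $y$; the remaining $\ge d/2$ coordinates are, conditionally, still uniform and independent. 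Choose the distribution to spread mass so that a constant fraction of its weight lies on the unseen points --- e.g.\ uniform on $\{x_1,\dots,x_d\}$, so an unseen point is redrawn at test time with probability $\gtrsim 1/2$.

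The adversarial algorithm $\cA$ is the one that, on sample $S$, outputs a hypothesis agreeing with the observed labels on the seen points and predicting, say, the \emph{constant $0$} (or a fixed pattern) on all unseen points. Then $L_{\cD_y}(\cA(S))$ equals (up to the seen/unseen split) the fraction of unseen points $x_i$ with $y_i = 1$, which --- conditioned on $S$ --- is an average of $\ge d/2$ i.i.d.\ fair coin flips, hence concentrated around $1/2$ with fluctuations $O(1/\sqrt d) = o(1)$. Crucially, the estimator $\cE(\cA(S), S)$ is a function of quantities that do not reveal the unseen labels, so $\cE$'s output is (essentially) independent of that conditional average. Therefore $\cE$ must commit to a single value while the true loss is a near-deterministic constant $\approx 1/2 \cdot (\text{unseen mass})$; but this is not yet an $\varepsilon$ lower bound, so the real mechanism must be a \emph{two-algorithm} comparison.

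Here is the sharper version. Use two algorithms: $\cA_0$ predicts $0$ on unseen points and $\cA_1$ predicts $1$ on unseen points (both fit $S$ on seen points). Under $\cD_y$ with $y$ uniform, and conditioning on the set of seen points and their labels, the unseen-point test loss of $\cA_0$ is $p := \Pr[y_i = 1 \mid \text{unseen}]$-weighted mass, and that of $\cA_1$ is $(1-p)$-weighted mass; these two losses sum to the unseen mass $\approx 1/2$ (for the uniform marginal, up to $o(1)$), so at least one of them is $\ge 1/4 - o(1)$ and they differ by $|1-2p|\cdot(\text{unseen mass})$. But the estimator sees the \emph{same} input $(\cA_b(S), S)$ for $b \in \{0,1\}$ \emph{only if} $\cA_0(S)$ and $\cA_1(S)$ are indistinguishable to $\cE$ --- which they are not, since they differ as hypotheses. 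So instead one should randomize over $y$: for a \emph{fixed} algorithm $\cA_0$, average over the uniform choice of $y$. Then $\E_y L_{\cD_y}(\cA_0(S))$, conditioned on $S$, is exactly $\tfrac12 \cdot (\text{expected unseen mass}) \approx 1/4$, with concentration, while $\cE(\cA_0(S), S)$ depends on $y$ only through $S$. Pairing the event ``true loss $\approx 1/4$'' against the symmetric construction with all labels flipped (or against $\cA_1$), and using that $\cE$ cannot be simultaneously within $\varepsilon$ of two values $\approx 1/4$ apart --- more precisely $1/4$ and something like $3/4$ for the extreme flip, or a refined averaging giving the stated $1/8$ --- forces $\varepsilon \ge 1/8 - o(1)$. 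I would make this rigorous by a clean averaging argument: over the random labeling $y$ and random sample $S$, show $\E |L_{\cD_y}(\cA_0(S)) - \cE(\cA_0(S),S)| + \E|L_{\cD_{\bar y}}(\cA_0(S')) - \cE(\cA_0(S'),S')|$ is bounded below by (half the separation) $-o(1) = 1/4 - o(1)$, so one of the two terms is $\ge 1/8 - o(1)$, yielding a $(\cD,\cA)$ pair on which $\cE$ fails.

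The main obstacle is pinning down exactly which auxiliary pair (two distributions, two algorithms, or a distribution-flip symmetry) yields the clean constant $1/8$ rather than a worse constant, and handling the concentration of the unseen-mass and unseen-label averages carefully enough that all the slack is genuinely $o(1)$ as $d \to \infty$ with $n \le d/2$. In particular one must verify that the estimator truly gains no information about the unseen labels: this requires checking that $\cA_0(S)$ restricted to the relevant region is a fixed function of $S$ alone, so that conditioning on $(\cA_0(S), S)$ leaves the unseen $y$-coordinates uniform. Everything else (realizability via shattering, choice of the uniform marginal on $d$ points, the high-probability-vs-expectation conversion) is routine.
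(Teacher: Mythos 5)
Your setup (a shattered set of size $d$, uniform marginal, uniformly random labeling $y$, and the observation that the $\geq d/2$ unseen labels remain conditionally uniform, so that the loss of the ``predict $0$ on unseen points'' ERM $\cA_0$ concentrates at $\approx 1/4$) matches the paper's, and corresponds to its Hoeffding step establishing the overparameterization parameters $\alpha = \frac{d-n}{2d}-o(1)$ and $\beta = 1-o(1)$. But there is a genuine gap exactly where you write that ``the real mechanism must be a two-algorithm comparison'': you never exhibit two scenarios that present the estimator with the \emph{same input distribution} while having true losses $\approx 1/4$ apart. Every pairing you try fails, for the reason you yourself note in one case: $\cA_0$ versus $\cA_1$ hand $\cE$ different hypotheses; $\cD_y$ versus $\cD_{\bar y}$ hand it differently labeled samples, and since $\bar y$ is also uniform the flip produces no separation anyway ($\cA_0$'s loss under $\cD_{\bar y}$ is again $\approx 1/4$, not $3/4$). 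More fundamentally, showing that $\cE$ must output $\approx 1/4$ on inputs of the form $(\cA_0(S),S)$ creates no contradiction with the requirement that $\cE(h_y,S)\approx 0$ for the constant algorithms $\cA_{h_y}$: under the joint law of (true hypothesis, sample), the conditional probability that the true hypothesis equals the specific $0$-extension $\cA_0(S)$ is $2^{-(d-n)}$, so the constant-algorithm constraint simply does not bind on the exponentially rare inputs where the $\cA_0$ constraint binds.

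The missing idea --- the crux of the paper's proof --- is a distributional identity rather than a fixed adversarial ERM. Let $\cA_{\mathrm{ERM}}$ output a \emph{uniformly random} hypothesis consistent with $S$ (equivalently, average over all deterministic ERMs). Then the joint law of $\lrp{\cA_{\mathrm{ERM}}(S),S}$, with $I$ uniform and $S\sim\cD_I^n$, is identical to the joint law of $\lrp{h_I,S}$; this is the paper's $P_1=P_2$ computation. Hence the event $\lrc{\cE(h,S)<\epsilon}$ has the same probability under both scenarios: accuracy on the constant algorithms forces this probability to be at least $1-\delta$, while the concentration of $L_{\cD_I}(\cA_{\mathrm{ERM}}(S))$ around $\frac{d-n}{2d}\geq 1/4$ forces an estimation error of at least $\alpha-\epsilon$ on essentially that same event; a union bound then yields $\epsilon\geq\frac{d-n}{4d}-o(1)\geq 1/8-o(1)$. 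Replacing your fixed $\cA_0$ by this uniformly random consistent ERM is precisely the ``refined averaging'' your proposal leaves unspecified, and without it the contradiction does not close.
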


\vspace*{-0.5em}
We emphasize that the lower bound $\varepsilon \geq 1/8-o(1)$ in \cref{thm:no_C} does not hold merely for a single `pathological' hard distribution. 
Rather, it holds for many ERMs over a \emph{sizable fraction} of all $\cH$-realizable distributions.

We believe \cref{thm:no_C} is worthy of attention because it precludes uniform tightness in the overparameterized setting for any generalization bound that depends solely on the training set, the learned hypothesis, and the hypothesis class. 
\ido{Determining which bounds in the literature on neural networks fall within this category is a matter of some debate. This category may arguably include some subset of the following bounds:} VC bounds \cite{bartlett19}, Rademacher bounds \cite{bartlett2002rademacher}, bounds based on the spectral norm \cite{pitas2017pac}, Frobenius norm \cite{DBLP:conf/colt/NeyshaburTS15}, path-norm \cite{DBLP:conf/colt/NeyshaburTS15}, Fisher-Rao norm \cite{DBLP:conf/aistats/LiangPRS19}, as well as PAC-Bayes-flatness and sharpness-flatness measures (e.g., see  the appendices of \cite{fantastic} and \cite{robust}), and some compression bounds like \cite{DBLP:conf/icml/Arora0NZ18}. For further details on the aforementioned bounds, see Appendix~\ref{appendix:examples-of-alg-and-dist-independent-bounds}. \ido{We take an expansive view, arguing that the abovementioned bounds, when applied to large neural networks, fall within the framework of \cref{thm:no_C}, and therefore are not uniformly tight; however, we acknowledge that other scholarly views (mentioned in \cref{section:conclusions}) also have merit, and the reader is encouraged to form an independent opinion on this matter.}

\subsection{Algorithm-Dependent Generalization Bounds}

An important facet of generalization not addressed by the formalism of \cref{eq:generalization-bound-basic-form} involves the choice of the training algorithm. 
The bound in \cref{eq:generalization-bound-basic-form} depends only on the training set $S$ and the selected hypothesis $\cA(S)$, and therefore it cannot capture certain beneficial aspects of the training algorithm. As a simple example, consider the case of a constant algorithm, that ignores the input $S$ and always outputs a specific fixed hypothesis $h_0 \in \cH$. 
For such an algorithm, choosing $\cE$ such that $\cE(\mathcal{A}(S), S) = L_S(h_0)$ yields an excellent estimator of the population loss.\footnote{This estimator works only for the constant algorithm that outputs $h_0$, so its existence does not contradict \cref{thm:no_C}.}

Similarly, in the context of neural networks, it is possible that certain training algorithms like SGD perform `implicit regularization', or satisfy various stability properties, etc. Therefore there might exist generalization bounds that are tight specifically for these algorithms.
The work of \cite{DBLP:conf/icml/HardtRS16} is a prominent example.
We formalize this notion by considering generalization bounds of the form 
\begin{equation}\label{eq:generalization-bound-algorithm-dependent}
    L_{\mathcal{D}}(\mathcal{A}(S)) < L_S(\mathcal{A}(S)) + C(\cA,S),
\end{equation}
where the complexity $C$ depends also on the algorithm $\cA$.\footnote{
    In \cref{eq:generalization-bound-algorithm-dependent} $C$ receives a complete description of the algorithm $\cA$, whereas in \cref{eq:generalization-bound-basic-form} it received $\cA(S)$, which is the algorithm's output. Specifically, in \cref{eq:generalization-bound-algorithm-dependent}, if $\cA$ is deterministic then $C$ can compute the value $\cA(S)$ using the inputs $\cA$ and $S$.
}
\cref{eq:generalization-bound-algorithm-dependent} leads to the following question:

\begin{ShadedBox}
    \begin{question}\label{question:algorithm-dependent-bounds}
        For which algorithms does there exist a generalization bound of the form of \cref{eq:generalization-bound-algorithm-dependent} that is tight for all population distributions in every overparameterized setting?
    \end{question}
\end{ShadedBox}

\cref{question:algorithm-dependent-bounds} prompts us to define \emph{algorithm-dependent estimability} (\cref{def:alg_pred}) which is analogous to estimability (\cref{eq:estimability-informal,def:pred}), but involves an estimator $\cE(\cA,S)$ that depends also on the learning algorithm. Our second result establishes a trade-off between learning performance and estimability:

\newpage
\begin{theorem*}[\textbf{Informal Version of \cref{thm:h0h1}}]
    Let $\cH \subseteq \cY^\cX$ be a hypothesis class that is rich enough in a certain technical sense.\footnote{\label{footnote:details}The precise details are specified in the formal version of \cref{thm:h0h1}.} Then, for any learning algorithm $\cA$, at least one of the following conditions does \textbf{not} hold:
    \vspace*{-0.5em}
    \begin{enumerate}
        \item{\label{item:alg-dependent-dichotomy-learns}
            $\cA$ learns a subset $\cH_0 \subseteq \cH$ with certain properties.\textsuperscript{\ref{footnote:details}}
        }
        \item{\label{item:alg-dependent-dichotomy-estimable}
            $\cA$ is algorithm-dependent estimable.
        }
    \end{enumerate}
\end{theorem*}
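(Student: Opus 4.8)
The plan is to prove the contrapositive in a quantitative, trade‑off form: if $\cA$ \emph{both} learns $\cH_0$ well and is algorithm‑dependent estimable, then $\cA$ would have to be a good learner for a hypothesis class that shatters more than $2n$ points, contradicting the no‑free‑lunch lower bound in the overparameterized regime. First I would unpack the technical hypotheses of the formal statement into the form I expect them to take: ``rich enough'' should mean that $\cH$ shatters some set $X^\ast$ with $|X^\ast|\ge 2n$, and the ``certain properties'' of $\cH_0$ should say that $\cH_0$ restricted to $X^\ast$ is maximally rich among classes of VC dimension $n$ --- concretely, that $\cH_0|_{X^\ast}$ behaves like an MDS code: the restriction of a uniformly random member of $\cH_0|_{X^\ast}$ to any (or, it suffices, to a typical) subset of at most $n$ coordinates is a uniform labeling. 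This is exactly the property that linear functions and parities enjoy on points in general position, which is why Theorems~\ref{thm:lin} and \ref{thm:parities} are the concrete instantiations.

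Fix $\cA$ satisfying item~\ref{item:alg-dependent-dichotomy-learns} with learning accuracy $\alpha$ and confidence $1-\beta'$, and an estimator $\cE$ witnessing item~\ref{item:alg-dependent-dichotomy-estimable} with accuracy $\varepsilon$ and confidence $1-\beta$. For $g\in\cH_0|_{X^\ast}$ let $\cD^A_g$ be uniform on $X^\ast$ with labels given by $g$; this is $\cH_0$‑realizable, so both guarantees apply, and chaining them gives: with probability at least $1-\beta-\beta'$ over $S\sim(\cD^A_g)^n$, one has $\cE(\cA,S)\le L_{\cD^A_g}(\cA(S))+\varepsilon\le\alpha+\varepsilon$. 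Now average over a uniformly random $g$. The resulting law of $S$ --- points i.i.d.\ uniform on $X^\ast$, labels the restriction of a uniform codeword of $\cH_0|_{X^\ast}$ to the sampled coordinates --- is, by the MDS‑type property, within total variation $2^{-\Omega(n)}$ of the law of $S$ obtained by drawing a uniformly random $f\in\cY^{X^\ast}=\cH|_{X^\ast}$ and sampling from $\cD^\ast_f$ (uniform on $X^\ast$, labels $f$). Since the event $\{\cE(\cA,S)\le\alpha+\varepsilon\}$ depends on $S$ only, it therefore also holds with probability at least $1-\beta-\beta'-2^{-\Omega(n)}$ over $S\sim(\cD^\ast_f)^n$, $f$ uniform.

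Next, each $\cD^\ast_f$ is $\cH$‑realizable because $\cH$ shatters $X^\ast$, so a second application of estimability gives $L_{\cD^\ast_f}(\cA(S))\le\cE(\cA,S)+\varepsilon\le\alpha+2\varepsilon$ with probability at least $1-2\beta-\beta'-2^{-\Omega(n)}$ over $f$ uniform and $S\sim(\cD^\ast_f)^n$; hence $\EE{f,S}{L_{\cD^\ast_f}(\cA(S))}\le \alpha+2\varepsilon+2\beta+\beta'+o(1)$. On the other hand, the classical averaging argument shows that for \emph{any} algorithm $\EE{f,S}{L_{\cD^\ast_f}(\cA(S))}\ge \tfrac12\bigl(1-1/|X^\ast|\bigr)^n\ge \tfrac12 e^{-1/2}-o(1)$, since a fixed point of $X^\ast$ is unseen with probability $(1-1/|X^\ast|)^n$ and, conditioned on being unseen, its true label is a fair coin independent of $(S,\cA(S))$, so $\cA(S)$ errs there with probability exactly $1/2$. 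Comparing the two bounds yields $\alpha+2\varepsilon+2\beta+\beta'\ge \tfrac12 e^{-1/2}-o(1)$, so the learning parameters and the estimability parameters cannot all be small at once; this is the dichotomy, and in quantitative form it is the trade‑off curve of Figure~\ref{fig:learn_est}.

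The main obstacle --- and the only place where the richness hypotheses are genuinely used --- is the marginal‑matching step in the second paragraph: a uniformly random $\cH_0$‑labeling and a uniformly random $\cH$‑labeling must be (nearly) indistinguishable on a training set of size $n$, which is precisely the MDS‑type property of $\cH_0$ on $X^\ast$. Some such condition on $\cH_0$ is unavoidable: if $\cH_0$ is ``too small'' (e.g.\ a single hypothesis $h_0$), the theorem is false, since the constant algorithm outputting $h_0$ is estimable via $\cE(\cA,S)=L_S(\cA(S))$. Secondary, purely technical points are the handling of repeated points in $S$ (which only shrink the effective sample, strengthening both the MDS property and the no‑free‑lunch bound), allowing $\cA$ and $\cE$ to be randomized (extra expectations over their internal coins, changing nothing), and tracking $\beta,\beta'$ through the two uses of estimability; none of these is conceptually hard. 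Finally, checking that linear functions and parities satisfy the $\cH_0$‑condition --- exhibiting $2n$ points in general position and verifying the projection‑to‑uniform property --- is done separately in the proofs of Theorems~\ref{thm:lin} and \ref{thm:parities}.
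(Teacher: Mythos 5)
Your proposal is correct, and the engine is the same as the paper's: the algorithm's success on $\cH_0$-realizable distributions forces the estimator to output small values on training samples that are statistically indistinguishable from samples drawn from distributions on which the algorithm must fail, and a single estimate cannot be close to both a small and a large population loss. The packaging, however, differs in two genuine ways. First, the paper's formal Theorem~\ref{thm:h0h1} does not assume shattering or any MDS-type structure; it takes as \emph{hypotheses} an abstract overparameterization condition on $(\cH,\mathbb D)$ and a bound $d_{TV}(S_0,S_1)\leq 1-\gamma$ between samples from the two sub-families $\mathbb D_0,\mathbb D_1$, and it localizes the algorithm's failure onto $\mathbb D_1$ by a short averaging argument. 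You instead re-derive the hardness from scratch via the classical no-free-lunch computation over all labelings of a shattered set of size $2n$; this makes your argument self-contained and quantitatively explicit, but it proves a concrete instantiation rather than the abstract statement (the paper defers exactly this kind of verification to Theorems~\ref{thm:lin} and~\ref{thm:parities}). Second, where you apply the estimability guarantee twice (once to upper-bound $\cE$ on $\cH_0$-samples, once to convert that back into a loss bound after the TV transfer) and then contradict the no-free-lunch lower bound, the paper builds an explicit optimal coupling $\omega$ under which $S_0=S_1$ with probability $\gamma$ and argues that on the event $\{S_0=S_1,\ L_{\cD_{I_0}}(\cA(S_0))<\epsilon,\ L_{\cD_{I_1}}(\cA(S_1))>\alpha\}$ the single value $\cE(\cA,S_0)=\cE(\cA,S_1)$ must miss one of the two losses by more than $(\alpha-\epsilon)/2$; these are two faces of the same TV/coupling coin and yield the same trade-off shape. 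Two small points to tighten: the learnability guarantee in \cref{definition:learnability} is an on-average-over-$I$ statement, not a per-distribution one, so your ``fix $g$ and chain'' step should be phrased as an average over $g$ (harmless, since you immediately average anyway); and for an exact MDS restriction the TV gap in your marginal-matching step is $0$ rather than $2^{-\Omega(n)}$, with the exponentially small slack needed only to absorb rank-deficient samples in the parity/linear instantiations.
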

\vspace*{-0.5em}

\cref{thm:h0h1} states that if an algorithm learns well enough in the sense of \cref{item:alg-dependent-dichotomy-learns}, then the algorithm is not estimable, and this implies that there exists no generalization bound for that algorithm that is tight across all population distributions.

\begin{wrapfigure}{r}{0.4\textwidth}
\centering
    \includegraphics[width=.40\textwidth]{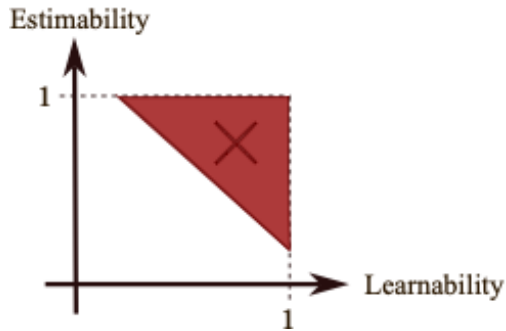}
    \caption{ Illustration of the trade-off implied by Theorem~\ref{thm:h0h1}. `1' represents perfect learning of $\cH_0$ and perfect estimation of $\cA$'s accuracy. An algorithm cannot simultaneously perform well and be certain that it does so. }
    \label{fig:learn_est}
    \vspace{-00.25\baselineskip}
\end{wrapfigure}

\cref{thm:h0h1} hinges on the algorithm satisfying \cref{item:alg-dependent-dichotomy-learns} for a suitable choice of $\cH$ and $\cH_0$. We emphasize that it is known that this assumption can indeed be satisfied for some large neural network architectures when trained with SGD. For instance, the class of parity functions\footnote{Namely, the class of functions that are an XOR of a fixed subset of the input bits.} is one suitable choice for $\cH$. It is well known that even a simple fully-connected neural network is expressive enough to represent this class (e.g., Lemma 2 in \cite{DBLP:conf/latin/NachumY20}). Furthermore, there exist specific network architectures that can provably learn a suitable subset $\cH_0$ of the class of parities using SGD (e.g., Theorem~1 in~\cite{colin}). 

To illustrate the utility of \cref{thm:h0h1}, in \cref{sec:illu} we conduct a detailed mathematical study of classes $\cH$ that are suitable for use in the theorem, and of the quantitative limitations on the tightness of generalization bounds that these classes entail.
Specifically, we show that an algorithm that can learn a suitable subset of the class of parity functions is not estimable with $\varepsilon = 1/4$ (see \cref{thm:parities}). More generally, we consider the class of linear functions over finite fields, which is a generalization of parities, and show even stronger results. For example, for a field of size $11$ (which corresponds to a multiclass classification task with $11$ labels), we show that an algorithm that learns a suitable subclass is not estimable with $\varepsilon=0.45$.

\section{Related Work}

Here we address two works that also study cases where generalization bounds are vacuous. For further related works, see Appendix~\ref{sec:Bounds}.

The main theorems in \cite{nagarajan2019uniform}  (Theorem 3.1) and in  \cite{bartlett2021failures}  (Theorem 1)  preclude the existence of tight algorithm-dependent generalization bounds in the overparameterized setting. They show this only for bounds based on uniform convergence and for a linear classifier (a single neuron). Also, Theorem 3.1 and Theorem 1 in  \cite{nagarajan2019uniform, bartlett2021failures} consider the failure over a specific kind of distribution (Gaussian) and specific type of SGD algorithm, and in the proof of Theorem 1, the authors use a different distribution for every sample.

Compared to these works, our results are more general and stronger: we show limitations for any kind of algorithm-dependent generalization bound, for many algorithms, and for any architecture in the overparameterized setting while using the same distribution across all sample sizes, with concrete quantitative implications (see Section~\ref{sec:illu}).

\section{Preliminaries}\label{sec:notation}

Following is a summary of the standard learning theory notation used in this paper.

\begin{ShadedBox}
\begin{tabular}{r l}
 $\cX$   &   the set of possible inputs (the domain)     \\   
  
  $\cY$   &     the set of possible labels  \\ 

   $\mathcal D$   &  a distribution over $\cX \times \cY$  \\
   
   $\mathcal D_\mathcal X$   &  the marginal distribution of $\cD$ on $\cX$  \\

   $\mathbb D$   &  a set of distributions over $\cX \times \cY$  \\
   
   $ \mathcal{H} \subseteq \mathcal{Y}^\mathcal{X} $   & a hypotheses class  \\
  
   $S=\lrp{ \lrp{x_1,y_1},...,  \lrp{x_n,y_n}   } \sim \cD^n$   & an i.i.d.\ sample or  a training set \\

$(\cX\times \cY)^*= \cup_{k=1}^\infty (\cX\times \cY)^k$ &  the set off all possible finite samples    \\
   
  $\mathcal{A}: \lrp{\mathcal{X \times Y}}^* \rightarrow  \cY^\cX  $ &   a learning algorithm (possibly randomized)  \\
  
    $\mathbb A$   &  a set of learning algorithms \\
   
   $\ell:\mathcal{H} \times \cX \times \cY \rightarrow \mathbb{R}  $ &   a loss function  \\

   $\ell _{0-1} \lrp{h,x,y}= \mathbf{1}_{ h(x) \neq y} $ &  the 0-1 loss function \\
  
  $  L_S\lrp{h} \coloneqq  \frac{1}{n} \sum_{i=1}^n \ell (h , x_i,y_i)  $    &  the empirical loss of $h$ on the sample $S$ \\
  
  $  L_\cD\lrp{h}\coloneqq  \mathbb{E}_{(x,y) \sim \cD}  \ell (h , x,y) $                    &  the population loss of $h$ with respect to distribution $\cD$\\

$f \diamond \mathcal D_\cX$

& the distribution of the random variable $(X,f(X))$
\\ 
& where $X \sim \mathcal D_\cX$\\

$\mathcal U(\Omega)$ & the uniform distribution over a set $\Omega$\\
$ [m]$ & the set $\{1,2,\dots,m\}$\\

$\mathbb F_q$ & the finite field with $q$ elements, where $q$ is prime\\
$\mathbf{Lin}_q\lrp{d}$ & the set of all linear functionals over $\mathbb F_q^d$\\
$R_q(n_1, n_2, r)$ & the probability of an $n_1$ times $n_2$ random matrix with \\
& entries drawn i.i.d. uniformly at random from $\mathbb F_q$ to \\
& have rank $r$ (see Lemma~\ref{lemma:probRank})\\
$\gc{n}{k}_q$ & the Gaussian coefficient $\prod_{i=0}^{k-1}\frac{q^{n-i}-1}{q^{k-i}-1}$

\end{tabular}
\end{ShadedBox}

\vspace*{0.5em}

\ido{ For simplicity,  all spaces we consider are \textbf{finite}.}

\begin{definition}
    A distribution $\mathcal D$ is \emph{realizable} with respect to a hypothesis class $\cH$ if there exists $ h\in \cH$ such that  $L_\cD(h)=0$.
\end{definition}

\textbf{Note:} in the case of a random algorithm $\cA$, we define $L_\cD\lrp{\cA(S)}\coloneqq  \Eexp_{h\sim Q(S)}L_\cD\lrp{h} $ where $Q(S)$ is the posterior distribution over $\cY^\cX$ that $\cA$ outputs. This fits the PAC-Bayes framework that upper bounds this quantity.

\begin{definition}[learnability]\label{definition:learnability}
    Let $\cH$ be a hypothesis class  and let $\mathbb D=\lrc{\cD_i}_{i=1}^T$ a set of realizable distributions. $(\cH,\mathbb D)$ is $(\alpha,\beta,n)$-learnable if there exists an algorithm (possibly randomized) $\cA$ such that
    $L_{\cD_I}(\cA(S))< \alpha$
    with probability at least $1-\beta$ over $I \sim \cU([T])$ and $S\sim \cD_I^n$. We say that such an algorithm  $(\alpha,\beta,n)$-learns $(\cH,\mathbb D)$.
 \end{definition}

Learning with neural networks is \ido{often considered an} example of an overparameterized setting: in most practical scenarios, the number of parameters of the network greatly exceeds the number of data points $n$ in the training set. Hence, for any given dataset of size $n$, there exist many different sets of weights (or hypotheses) that fit the data. This implies that in the absence of assumptions on the population distribution, the network will overfit in many scenarios. 
Since we  study a general learning setting (the hypothesis class is not necessarily parametrized), we take the above implication as our definition of an overparameterized setting:

\begin{definition}[overparameterized setting]\label{definition:overparameterized}
    Let $\cH$ be a hypothesis class, let $n,T \in \mathbb{N}$, let $\alpha, \beta \geq 0$, and let $\mathbb D=\lrc{\cD_i}_{i=1}^T$ be a finite collection of $\cH$-realizable distributions. We say that $(\cH,\mathbb D)$ is an $(\alpha, \beta, n)$-overparameterized setting if $(\cH,\mathbb D)$ is not $(\alpha,\beta,n)$-learnable.
\end{definition}

\ido{For a detailed discussion of how this definition compares to some common notions of overparamterization, see \cref{section:definitions-of-overparameterization}.}

\section{Bounds that are Algorithm- and Distribution-Independent Cannot be Uniformly Tight}\label{sec:no_measures}

To answer Question~\ref{question:basic-bounds}, we introduce the following framework:

\begin{definition}[estimability] \label{def:pred}
A hypothesis class $\cH\subset \cY^\cX$ is $(\epsilon,\delta ,n)$-estimable  with respect to a loss function $\ell$ if there exists a function $\cE:\cH\times S \rightarrow \RR$ such that for all algorithms  $\cA:\lrp{\cX\times \cY}^n \rightarrow \cH$ and all realizable distributions $\cD$ over $\cX \times \cY$ it holds that
$$  \left|  \cE(\cA(S),S)  -  L_\cD( \cA( S)) \right| < \epsilon $$
with probability at least $1-\delta$ over $S\sim \cD^n$. We call such $\cE$ an estimator of $\cH$.
\end{definition}

Note that given an algorithm-independent and distribution-independent bound $C$, it is not hard to construct a \emph{single} (algorithm, distribution) pair that makes it vacuous. To illustrate this, consider the ERM defined as $\mathcal A_C(S):=\arg\min_{h\in \mathcal H: \: L_S(h)=0} C(h,S)$. Assume for simplicity that $C$ is a generalization bound such that \cref{eq:generalization-bound-basic-form} holds with probability $1$ for every $\mathcal H$-realizable distribution $\mathcal D$ with labeling function $h_\mathcal D$. Then, with probability $1$ over $S\sim \mathcal D^n$, $L_{\mathcal D}(\mathcal A_C(S)) < L_S(\mathcal A_C(S)) + C(\mathcal A_C(S),S)\leq C(h_\mathcal D,S)$, where the first inequality is \cref{eq:generalization-bound-basic-form} and the second follows from the construction of $\mathcal A_C$.  

Now assume that the setting is overparameterized (say with large $\alpha$ and $\beta$ for a fixed $n$), i.e., no algorithm can learn (with error $\alpha$) the ground-truth labeling with probability at least $1- \beta$ jointly over the uniform choice of the distributions, and $n$ samples from the chosen distribution. 
This implies that for every algorithm $\mathcal A$, there exists at least one distribution $\mathcal D'$ such that with probability at least $\beta$, $\mathcal A$ fails to learn when $S \sim \mathcal (D')^n$. 
Hence there exists a realizable distribution $\mathcal D'$ with deterministic labeling function $h_{\mathcal D'}$ such that, w.h.p., $L_{\mathcal D'}(\mathcal A_C(S))> \alpha$, which implies by the previous inequality that $C(h_{\mathcal D'},S)>\alpha$ w.h.p. 
But then it holds w.h.p.\ that $C(h_{\mathcal D'},S) > \alpha \gg 0 = L_{\mathcal D'}(h_{D'}) - L_S(h_{D'})$. 
Hence the bound is w.h.p.\ not $\alpha$-tight for the pair $(\mathcal A_{h_{\mathcal D'}},\mathcal D')$, where $\mathcal A_{h_{\mathcal D'}}$ is the constant algorithm that always outputs $h_{\mathcal D'}$.

The argument above considers the binary classification setting with the $0-1$~loss and shows failure over a \emph{single} (algorithm, distribution) pair.  In the next theorem, we consider arbitrary hypotheses classes (not necessarily binary) and show that any estimator fails over a    \emph{large fraction} of possible (algorithm, distribution) pairs.

\newpage

\begin{theorem}\label{thm:no_C_general}

    Let $\cH$ be a hypothesis class,  $\ell$ be the $0-1$ loss, $\mathbb D=\lrc{\cD_i}_{i=1}^T$ be a finite collection of $\cH$-realizable distributions each associated with a hypothesis $h_i$, and $(\cH ,\mathbb D)$ an $(\alpha, \beta, n)$ overparameterized setting. For any $h \in \cH$, let $\cA_h$ be an ERM algorithm that outputs $h$ for any input sample $S$ consistent with $h$. Then, there exists a distribution $\cD_{ERM}$ over $ERM_ \cH$ (the set of all deterministic ERM algorithms over $\cH$) such that for any estimator $\cE$ of $\cH$ and for any $ \epsilon,\gamma \in [0, 1]$ at least one of the following conditions does \textbf{not} hold:

\begin{enumerate}
 \item    With probability at least $1-\gamma$ over $I\sim \mathcal U([T])$ and $S\sim \cD_I^n$,
$$ \lrv{ \cE \lrp{\cA_{h_I} (S)  ,S } - L_{\cD_I}(\cA_{h_I}(S) ) } < \epsilon.  $$

    \item With probability at least $1-\beta+\gamma$  over $I\sim \mathcal U([T])$, $S\sim \cD_I^n$, and $\cA_{ERM} \sim \cD_{ERM} $,
 $$ \lrv{ \cE \lrp{\cA_{ERM} (S)  ,S } - L_{\cD_I}(\cA_{ERM}(S) ) } < \alpha-\epsilon .  $$

\end{enumerate}
In particular, $\cH$  is not $\lrp{ \alpha/2, \beta/2  ,n}$-estimable.
\end{theorem}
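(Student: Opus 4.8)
The plan is to exhibit a single distribution $\cD_{ERM}$ over ERM algorithms and argue that the estimator $\cE$ cannot be simultaneously accurate on the "informed" ERMs $\cA_{h_I}$ (which always output the true labeling hypothesis) and on a random ERM drawn from $\cD_{ERM}$, because on a typical training set $S$ these two kinds of algorithms are indistinguishable to $\cE$ (they see the same $S$ and often output hypotheses with the same empirical loss), yet their population losses differ substantially whenever the setting is overparameterized.

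More concretely, here are the steps I would carry out. First, define $\cD_{ERM}$: draw an index $J \sim \cU([T])$ and output $\cA_{h_J}$, the ERM that returns $h_J$ on every sample consistent with $h_J$ (and, say, does something arbitrary but fixed otherwise). The key observation is a coupling/symmetry argument: if we sample $I \sim \cU([T])$, $S \sim \cD_I^n$, and independently $J \sim \cU([T])$, then on the event that $h_J$ is consistent with $S$ — equivalently $\cA_{h_J}(S) = h_J$ — the pair $(\cA_{h_J}(S), S)$ that $\cE$ sees is distributed just like $(\cA_{h_J}(S),S)$ would be if $J$ were "the real" index generating $S$. So the value $\cE(\cA_{h_J}(S), S)$ that $\cE$ outputs is, conditioned on consistency, governed by the same law as in Condition 1 with $J$ in the role of $I$; hence if Condition 1 holds then $\lvert \cE(\cA_{h_J}(S),S) - L_{\cD_J}(h_J)\rvert = \lvert \cE(\cA_{h_J}(S),S) - 0\rvert < \epsilon$, i.e. $\cE$ outputs a value $<\epsilon$ with high probability on this event.

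Next, invoke overparameterization. Since $(\cH,\mathbb D)$ is not $(\alpha,\beta,n)$-learnable, the particular randomized algorithm "draw $J\sim\cU([T])$, output $\cA_{h_J}(S)$" — call it $\cA_{ERM}$ — must fail: with probability at least $\beta$ over $I\sim\cU([T])$, $S\sim\cD_I^n$, $\cA_{ERM}\sim\cD_{ERM}$, we have $L_{\cD_I}(\cA_{ERM}(S)) \geq \alpha$. On this failure event $\cA_{ERM}(S) = h_J$ for the drawn $J$, and $h_J$ is automatically consistent with $S$ (an ERM outputting $h_J$ on $S$ means $L_S(h_J)=0$, which by the structure of $\cA_{h_J}$ is exactly the consistency condition). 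Combining with the previous step, conditioned on this failure event the estimator outputs $\cE(\cA_{ERM}(S),S) < \epsilon$ with high probability (if Condition 1 holds), while $L_{\cD_I}(\cA_{ERM}(S)) \geq \alpha$, so $\lvert \cE(\cA_{ERM}(S),S) - L_{\cD_I}(\cA_{ERM}(S))\rvert > \alpha - \epsilon$. Quantitatively, a union bound over "Condition 1 fails" (probability $\leq \gamma$) and "learning fails" (probability $\geq \beta$) shows that Condition 2 fails with probability at least $\beta - \gamma$, contradicting Condition 2. The final "in particular" statement then follows by setting $\epsilon = \alpha/2$, $\gamma = \beta/2$: Condition 2's required slack becomes $\alpha - \epsilon = \alpha/2 = \epsilon$ and its failure probability $\beta - \gamma = \beta/2$, so an $(\alpha/2,\beta/2,n)$-estimator would satisfy both conditions, which is impossible.

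The main obstacle I anticipate is handling the bookkeeping of the two independent copies of the index ($I$ generating the sample versus $J$ defining the ERM) and making the coupling argument airtight — in particular, verifying that conditioning on the consistency event $\{h_J \text{ consistent with } S\}$ does not distort the conditional law of $(\cA_{h_J}(S),S)$ relative to the "honest" setup in Condition 1, and tracking the probabilities carefully so that the union bound yields exactly the $\beta-\gamma$ slack. A secondary subtlety is that $\cE$ may be an arbitrary (even adversarial) function, so the argument must nowhere use structure of $\cE$ beyond its guarantee in Condition 1; the symmetry is doing all the work. One should also note the edge case where $h_i = h_j$ for distinct $i,j$, which only helps (it enlarges the consistency event) and does not break the counting.
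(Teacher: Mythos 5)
Your high-level architecture matches the paper's: use Condition~1 together with $L_{\cD_I}(\cA_{h_I}(S))=0$ to conclude that $\cE$ typically outputs a value below $\epsilon$, use non-learnability to conclude that a random ERM typically incurs population loss at least $\alpha$, and combine the two via a distributional identity plus a union bound to contradict Condition~2. The gap is exactly the obstacle you flag but do not resolve: with your choice of $\cD_{ERM}$ (draw $J\sim\cU([T])$ independently of everything and output $\cA_{h_J}$), the claimed coupling identity is false in general. Writing $P_{\cD_i}(S)$ for the probability of the sample $S$ under $\cD_i^n$, the law of $(S,h_I)$ appearing in Condition~1 assigns the pair $(S,h)$ mass $\frac1T\sum_{i:h_i=h}P_{\cD_i}(S)$, i.e.\ the total likelihood of $S$ under the distributions labeled by $h$, whereas your construction restricted to the consistency event assigns it mass $\bigl(\frac1T\sum_{i=1}^T P_{\cD_i}(S)\bigr)\cdot\frac{|\{j:h_j=h\}|}{T}\cdot\mathbf{1}[h\text{ consistent with }S]$, i.e.\ the marginal likelihood of $S$ times the \emph{prior} count of $h$. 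These coincide (up to a harmless domination factor) only when all indices consistent with $S$ have equal likelihood $P_{\cD_i}(S)$ --- e.g.\ when all $\cD_i$ share one marginal on $\cX$, as in Theorem~2 --- but in general the ratio between the two measures is unbounded, so Condition~1 gives no control over $\cE(\cA_{h_J}(S),S)$ under your $(I,S,J)$ law, and the union bound yielding $\beta-\gamma$ collapses. A secondary, compounding issue: your claim that on the learning-failure event $h_J$ is ``automatically consistent with $S$'' is circular ($\cA_{h_J}(S)=h_J$ only \emph{if} $h_J$ is consistent; otherwise $\cA_{h_J}$ outputs some other consistent hypothesis, the failure event can still occur, and $\cE$ is uncontrolled there).

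The paper closes this gap by making $\cD_{ERM}$ \emph{Bayes-like}: for each fixed $S$ the random ERM outputs $h$ with probability equal to the posterior $\sum_{i:h_i=h}P_{\cD_i}(S)\big/\sum_{j=1}^T P_{\cD_j}(S)$, i.e.\ your $J$ must be reweighted by the likelihood of $S$ rather than drawn uniformly. With that choice the joint law of $(S,\cA_{ERM}(S))$ --- with $I\sim\cU([T])$, $S\sim\cD_I^n$, and $\cA_{ERM}$ drawn independently --- equals the joint law of $(S,h_I)$ \emph{exactly}, by Bayes' rule, with no conditioning on a consistency event at all; the rest of your argument (the $L_{\cD_I}(h_I)=0$ observation, the non-learnability step applied to a mixture of deterministic ERMs, the union bound, and the specialization $\epsilon=\alpha/2$, $\gamma=\beta/2$) then goes through essentially as you wrote it.
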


Theorem~\ref{thm:no_C} below is an application of Theorem~\ref{thm:no_C_general} for VC classes.  Theorem~\ref{thm:no_C} shows that when Theorem~\ref{thm:no_C_general} is applied, we get substantial numerical values which highlight Theorem~\ref{thm:no_C_general} prevalence.

\begin{theorem}\label{thm:no_C}

Let $\cH$ be a hypothesis class of VC dimension $d\gg 1$, and $\ell$ be the $0-1$ loss. Let $X\subset \cX$ be a set of size $d$ shattered by $\cH_X=\{h_i\}_{i=1}^{2^d} \subset \cH$ and let $\lrc{\cD _i}_{i=1}^{2^d} $ be the set of realizable distributions that correspond to $\cH_X$, where for all $i$ the marginal of $\cD_i$ on $\cX$ is uniform over $X$.
Let $\ERM_{\cH_X}$ be the set of all deterministic ERM algorithms for $\cH_X$. 
For any $h \in \cH_X$, let $\cA_h$ be an ERM algorithm that outputs $h$ for any input sample $S$ consistent with $h$. 
Then, for any estimator $\cE$  of $\cH$, at least one of the following conditions does \textbf{not} hold:

\begin{enumerate}
 \item{
    With probability at least $1/2$ over $I\sim \mathcal U([2^d])$ and $S\sim \cD_I^n$,
    \[
        \lrv{ \cE \lrp{\cA_{h_I} (S)  ,S } - L_{\cD_I}(\cA_{h_I}(S) ) } < \frac{d-n}{4d}.
    \]
 }
 \item
    {
        With probability at least $1/2-o(1)$  over $I\sim \mathcal U([2^d])$, $S\sim \cD_I^n$, and $\cA_{ERM} \sim \mathcal U(\ERM_{\cH_X})$,
        \[
            \lrv{ \cE \lrp{\cA_{ERM} (S)  ,S } - L_{\cD_I}(\cA_{ERM}(S) ) } < \frac{d-n}{4d} - o(1),
        \]
        where $\mathcal U(\ERM_{\cH_X})$ denotes the uniform distribution over $\ERM_{\cH_X}$.
 }
\end{enumerate}
In particular, $\cH$  is not $\lrp{ \frac{d-n}{4d}-o(1),1/2-o(1),n}$-estimable for any $n\leq d/2$. The notation $o(1)$ denotes quantities that vanish as $d$ goes to infinity. 
\end{theorem}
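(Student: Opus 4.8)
The plan is to recognize $\mathbb D=\{\cD_i\}_{i=1}^{2^d}$ as an overparameterized setting and then invoke \cref{thm:no_C_general}; the only substantive step is computing the overparameterization parameters, after which the two numbered conditions and the ``in particular'' clause follow by instantiating \cref{thm:no_C_general} (and tracking its proof) with these parameters, absorbing lower-order quantities into the $o(1)$'s.

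\textbf{Step 1 (the family is overparameterized).} I would show that $(\cH,\mathbb D)$ is an $(\alpha,\beta,n)$-overparameterized setting with $\alpha=\tfrac{d-n}{2d}-o(1)$ and $\beta=1-o(1)$. Since $\cD_i=h_i\diamond\cU(X)$ we have $L_{\cD_i}(h)=\tfrac1d\sum_{x\in X}\mathbf 1_{h(x)\neq h_i(x)}$, and drawing $I\sim\cU([2^d])$ then $S\sim\cD_I^n$ is the same as drawing $x_1,\dots,x_n\sim\cU(X)$ i.i.d., drawing a uniformly random $\ell\colon X\to\{0,1\}$ (which fixes $h_I|_X$), and setting $S=((x_j,\ell(x_j)))_{j\le n}$. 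Fix any (possibly randomized) learner $\cA$; set $X_S=\{x_1,\dots,x_n\}$, $k=|X_S|\le n$, and condition on the points, on $\cA$'s internal coins, and on $(\ell(x))_{x\in X_S}$. Then $\cA(S)$ is determined while $(\ell(x))_{x\in X\setminus X_S}$ is still i.i.d.\ uniform on $\{0,1\}$, so $\sum_{x\in X\setminus X_S}\mathbf 1_{\cA(S)(x)\neq\ell(x)}$ is a sum of $d-k\ge d-n$ i.i.d.\ $\mathrm{Bernoulli}(\tfrac12)$ variables. Hoeffding's inequality gives $L_{\cD_I}(\cA(S))\ge\frac{d-k}{2d}-\frac td\ge\frac{d-n}{2d}-\frac td$ with probability at least $1-e^{-2t^2/d}$, and taking $t=\sqrt{d\ln d}$ yields: for \emph{every} $\cA$, $L_{\cD_I}(\cA(S))\ge\frac{d-n}{2d}-\sqrt{\ln d/d}$ with probability at least $1-d^{-2}$. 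Hence no algorithm $\bigl(\tfrac{d-n}{2d}-\sqrt{\ln d/d},\,1-2d^{-2},\,n\bigr)$-learns $(\cH,\mathbb D)$, which is the claimed overparameterization (and for $n\le d/2$ the error level $\tfrac{d-n}{2d}-o(1)$ stays bounded away from $0$, while $d-k\ge1$).

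\textbf{Step 2 (conclude via \cref{thm:no_C_general}).} The ``in particular'' clause is immediate: the corresponding clause of \cref{thm:no_C_general}, applied with $T=2^d$ and the $\alpha,\beta$ of Step~1, states that $\cH$ is not $(\alpha/2,\beta/2,n)$-estimable, and $\alpha/2=\tfrac{d-n}{4d}-o(1)$, $\beta/2=\tfrac12-o(1)$. The two numbered conditions are the detailed version of the same phenomenon: one runs the argument underlying \cref{thm:no_C_general} with $\epsilon=\tfrac{d-n}{4d}$ and $\gamma=\tfrac12$ in the present setting, using that $\cA_{h_I}$ returns $h_I$ on every realizable sample (so $L_{\cD_I}(\cA_{h_I}(S))=0$ and Condition~1 takes its stated form), that $\alpha-\epsilon=\tfrac{d-n}{4d}-o(1)$ gives the bound in Condition~2, and that the concentration in Step~1 supplies the $o(1)$'s appearing in Condition~2's probability and accuracy.

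\textbf{Expected main obstacle.} Step~1 is the mathematical core but is a single Hoeffding estimate. The delicate part is in Step~2: (i) identifying the distribution $\cD_{ERM}$ delivered by \cref{thm:no_C_general} with the uniform distribution $\cU(\ERM_{\cH_X})$ named in \cref{thm:no_C}, and (ii) pinning down the probability constants in the two numbered conditions — both require opening up the proof of \cref{thm:no_C_general} in this symmetric setting rather than using it as a pure black box. Point (i) is not a genuine difficulty: since $X$ is shattered by $\cH_X$, for every realizable $S$ the empirical-risk minimizers in $\cH_X$ are exactly $\{h\in\cH_X:h|_{X_S}=h_I|_{X_S}\}$, so for $\cA_{ERM}\sim\cU(\ERM_{\cH_X})$ the output $\cA_{ERM}(S)$ is, conditionally on $S$, uniform over that set — which is precisely the conditional law of $h_I$ given $S$. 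Thus $(\cA_{ERM}(S),S)$ and $(\cA_{h_I}(S),S)=(h_I,S)$ have the same law, and this indistinguishability is exactly the lever behind \cref{thm:no_C_general}; so one either checks that its construction specializes to $\cU(\ERM_{\cH_X})$ here or simply reruns its short contradiction-with-learnability argument verbatim with $\cD_{ERM}=\cU(\ERM_{\cH_X})$.
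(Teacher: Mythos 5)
Your proposal is correct and follows essentially the same route as the paper's proof: the paper also reduces \cref{thm:no_C} to \cref{thm:no_C_general} by (a) proving via Hoeffding that conditioned on the sample points and observed labels the unseen labels remain i.i.d.\ uniform, so every algorithm has loss at least $\frac{d-n}{2d}-o(1)$ with probability $1-o(1)$ (your Step~1), and (b) verifying that $\cU(\ERM_{\cH_X})$ is the Bayes-like distribution required by \cref{thm:no_C_general}, i.e.\ that $\cA_{ERM}(S)$ is conditionally uniform over the hypotheses consistent with $S$ and hence matches the posterior of $h_I$ (your point~(i)). The only cosmetic difference is your choice of deviation $t=\sqrt{d\ln d}$ versus the paper's $d^{0.6}$, which is immaterial.
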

Theorem~\ref{thm:no_C} states that any estimator  $\cE$ fails to predict the performance of many ERM algorithms over many scenarios. If Item 1 does not hold, then $\cE$ fails to estimate the success of algorithms $\cA_h$ in a situation where they perform very well (since by definition $L_{\cD_I}(\cA_{h_I}(S))=0$), despite the fact that these are very simple algorithms. If Item 2 does not hold, then $\cE$ fails to estimate the performance of many ERM algorithms across many distributions, namely, on roughly 50\% of (ERM, distribution) pairs.  

\subsection{Discussion of Theorem~\ref{thm:no_C}}

Theorem~\ref{thm:no_C} shows that bounds that depend solely on the training set, the learned hypothesis, and the hypothesis class cannot be uniformly tight \ido{in the overparameterized setting}. More broadly, \cite{fantastic} showed empirically that many published generalization bounds are in fact not tight. This is an empirical fact that calls for an explanation. Theorem~\ref{thm:no_C} implies that algorithm-independent bounds being not-tight must be a very common phenomenon, in the sense that any algorithm-independent bound that is tight on (a specific set of) simple cases, must be far from tight for many natural algorithms and distributions. Thus, Theorem~\ref{thm:no_C} can at least partially explain the empirical findings of \cite{fantastic}. To clarify, Theorem~\ref{thm:no_C} does not imply that any specific bound is not tight for a specific (algorithm, distribution) pair, but it qualitatively matches the observation that lack of tightness is ubiquitous among algorithm-independent bounds. %

\ido{Many} published generalization bounds \ido{are stated and proved without explicit restrictions on the set of distributions or algorithms}. Hence, these bounds \ido{are valid upper bounds on the population loss} in all scenarios. \ido{Due to the limitations on estimability shown in \cref{thm:no_C}}, these bounds cannot fully distinguish between distributions for which the algorithm performs well and distributions for which it does not. Therefore, such bounds have no other option but to \ido{predict a large population error} (making them loose for some cases \ido{in which the population error is not as large}).

We emphasize that Theorem~\ref{thm:no_C} shows that  VC classes are not estimable, and hence do not admit tight generalization bounds, not merely over some pathological distribution, but in a \ido{fairly} simple setting. Let us elaborate: take a dataset of natural images $\mathcal{N}$ (e.g., MNIST, CIFAR, etc.) and consider the uniform distribution over this set with a sample size $n=|\mathcal N|/2$. This is a simple setting; we merely consider a small set of natural images. 
Still, Theorem~\ref{thm:no_C} shows that without any assumption on the relation between the images and their labels, any estimator would not perform better than a random guess. That is, any estimator will fail over many ERMs (Item 1 and Item 2 in the theorem) with probability $1/2$ over  $I\sim \cU([2^d])$ and $S\sim \cD_I^n$ to produce an estimate of the true error with an accuracy of $1/8$. Hence, distribution-independent bounds can be loose even in a simple setting.

\section{Algorithm-Dependent Bounds are Limited by a Learnability-Estimability Trade-Off }\label{sec:no_A_measures}

To prove Theorem~\ref{thm:no_C}, we used constant algorithms, that is, algorithms that output the same hypotheses regardless of the input $S$. The true error of such algorithms is easy to estimate using  the empirical error (by Hoeffding's inequality). Thus, it might be that adding the algorithm we use as a parameter for the estimator $\cE$ might help. For example,  sharpness-based measures cannot estimate well the accuracy of a neural network for all algorithms, as Theorem~\ref{thm:no_C} shows. Yet, these measures might be a good estimator when just considering SGD from random initialization. 

\begin{definition}[algorithm-dependent estimability]\label{def:alg_pred}
    A hypotheses class and a collection of algorithms $(\cH,\mathbb{A})$ is $(\epsilon,\delta ,n)$-estimable  with respect to a loss function $\ell$ if there exists a function $\cE: \mathbb{A} \times \mathcal S \rightarrow \RR$ such that for all algorithms $ \cA \in \mathbb{A}$ and all realizable distributions $\cD$ over $\cX \times \cY$ it holds 
    \[
        \left|  \cE(\cA,S)  -  L_\cD( \cA( S)) \right| < \epsilon
    \]
with probability at least $1-\delta$ over $S\sim \cD^n$. We call such $\cE$ an algorithm-dependent  estimator~of~$\cH$. If $  (\cH,  \{ \cA\})$ is $(\epsilon,\delta ,n)$-estimable,  we say $\cA$ is $(\epsilon,\delta ,n)$-estimable with respect to $\cH$ and loss $\ell$.
\end{definition}

The function $\cE$ is now provided with a complete description of the algorithm used to generate a hypothesis. Yet, the following theorem provides a more subtle negative answer to Question~\ref{question:algorithm-dependent-bounds}  compared to Theorem~\ref{thm:no_C}. Learnability and estimability are mutually exclusive.

\begin{theorem}\label{thm:h0h1}
       Let $\cH=\cH_0\cup \cH_1$ be a hypothesis class, let $\ell$ be a loss function, and let $T= T_0+T_1$ be integers. Let $\mathbb D=\lrc{\cD _i}_{i=1}^{T}$ be a set of $\cH$-realizable distributions such that $\mathbb D_0 = \lrc{\cD_i}_{i=1}^{T_0}$ is realizable over $\cH_0$, and  $\mathbb D_1 =  \lrc{\cD_i}_{i=T_0+1}^{T}$ is realizable over $\cH_1$. Assume that $(\cH,\mathbb D)$ is an~$(\alpha,\beta,n)$-overparameterized setting, and furthermore  assume:
    \begin{center}
         $d_{TV}( S_0 , S_1)\leq 1-\gamma$, where $S_0\sim \cD_{I_0}^n$, and  $S_1\sim \cD_{I_1}^n$, $I_0 \sim \cU([T_0])$ and $I_1 \sim T_0+ \cU([T_1])$.
    \end{center}
    Let  $\eta=  \frac{\gamma    }{2} - \frac{1  - \beta  \frac{T}{T_1}  + \delta \lrp{ 1+ \frac{T_0}{T_1} } }{2}$. Then, for any learning algorithm $\cA$ (possibly randomized), at least one of the following conditions does \textbf{not} hold:
    \begin{enumerate}
        \item{
            $\cA$ $(\epsilon,\delta,n)$-learns $\cH_0$.
        }
        \item{
            $\cA$ is $\lrp {\frac{\alpha-\epsilon}{2},\eta ,n}$-estimable with respect to  $\cH$ and loss $\ell$.

        }
    \end{enumerate}

      In particular, for any estimator $\cE$   it holds 
            $\left|  \cE(\cA,S)  -  L_{\cD_I}( \cA( S)) \right| > \frac{\alpha-\epsilon}{2} $ 
with  probability of at least $ \eta $ over $I\sim \cU([T])$ and $S\sim \cD_I^n$.
            
\end{theorem}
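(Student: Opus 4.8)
The plan is to prove the dichotomy in the contrapositive-friendly direction: assume $\cA$ satisfies Condition~1 (it $(\epsilon,\delta,n)$-learns $\cH_0$) and deduce that Condition~2 fails with exactly the quantitative strength asserted in the ``in particular'' clause. Throughout I assume $\epsilon<\alpha$, since otherwise $\tfrac{\alpha-\epsilon}{2}\le 0$ and non-estimability is vacuous. The conceptual point is that the estimator $\cE$ sees only $(\cA,S)$, the samples coming from $\mathbb D_0$ and from $\mathbb D_1$ are statistically close (by the $d_{TV}$ hypothesis), yet $\cA$'s true loss is small on $\mathbb D_0$ and large on a sizable portion of $\mathbb D_1$; hence $\cE$ cannot simultaneously be accurate on both, and the error probability is controlled by $1-\gamma$.

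\emph{Step 1 (locating the hard distributions inside $\mathbb D_1$).} Since $(\cH,\mathbb D)$ is $(\alpha,\beta,n)$-overparameterized, $\cA$ is not an $(\alpha,\beta,n)$-learner for it, so $\Pr[L_{\cD_I}(\cA(S))\ge\alpha]>\beta$ over $I\sim\cU([T])$, $S\sim\cD_I^n$. Split this according to whether $I\in[T_0]$ or $I\in\{T_0+1,\dots,T\}$. On the first part, Condition~1 gives $\Pr[L_{\cD_I}(\cA(S))\ge\epsilon\mid I\in[T_0]]\le\delta$, so (using $\epsilon<\alpha$) the contribution of $I\in[T_0]$ to the bad event is at most $\tfrac{T_0}{T}\delta$. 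Therefore, conditioned on $I\in\{T_0+1,\dots,T\}$, $\Pr[L_{\cD_I}(\cA(S))\ge\alpha]>\frac{T}{T_1}\beta-\frac{T_0}{T_1}\delta$.

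\emph{Step 2 (total-variation indistinguishability).} Fix any candidate estimator $\cE$; as $\cA$ is fixed, $g(S):=\cE(\cA,S)$ depends on the sample only. Put $\tau:=\tfrac{\alpha+\epsilon}{2}$ and $B:=\{s:g(s)\ge\tau\}$. Two deterministic implications drive the argument: (i) on a sample from $\mathbb D_0$, if learning succeeds ($L\le\epsilon$) and estimation succeeds ($|g-L|<\tfrac{\alpha-\epsilon}{2}$) then $g<\tau$, i.e. $s\notin B$; (ii) on a sample from $\mathbb D_1$, if $L\ge\alpha$ and estimation succeeds then $g>\tau$, i.e. $s\in B$. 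Let $p_0$ (resp.\ $p_1$) be the probability that estimation fails conditioned on $I\in[T_0]$ (resp.\ $I\in\{T_0+1,\dots,T\}$), and let $P_0,P_1$ be the corresponding sample laws (these are the laws of $S_0,S_1$ in the hypothesis). Then (i) gives $\Pr_{P_0}[B]\le\delta+p_0$, and (ii) together with Step~1 gives $\Pr_{P_1}[B]>\frac{T}{T_1}\beta-\frac{T_0}{T_1}\delta-p_1$. Applying $d_{TV}(P_0,P_1)\le 1-\gamma$ to the event $B$ yields $\Pr_{P_1}[B]-\Pr_{P_0}[B]\le 1-\gamma$, and rearranging gives
$$p_0+p_1 \;>\; \gamma-1+\tfrac{T}{T_1}\beta-\delta\Bigl(1+\tfrac{T_0}{T_1}\Bigr) \;=\; 2\eta.$$

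\emph{Step 3 (conclusion) and the main obstacle.} From $p_0+p_1>2\eta$, at least one of $p_0,p_1$ exceeds $\eta$; say $p_0>\eta$ (the other case is symmetric). Since $p_0$ is the average over $i\in[T_0]$ of $\Pr_{S\sim\cD_i^n}[\,|\cE(\cA,S)-L_{\cD_i}(\cA(S))|\ge\tfrac{\alpha-\epsilon}{2}\,]$, some $\cD_i\in\mathbb D$ witnesses that this probability exceeds $\eta$; as $\cE$ was arbitrary, $(\cH,\{\cA\})$ is not $(\tfrac{\alpha-\epsilon}{2},\eta,n)$-estimable, so Condition~2 fails. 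The displayed ``in particular'' inequality is the quantitative form of this failure read over $I\sim\cU([T])$ (e.g.\ in the symmetric case $T_0=T_1$ the mixture failure probability equals $\tfrac12(p_0+p_1)>\eta$). I expect the main obstacle to be the bookkeeping in Step~2: correctly carrying the ``estimation fails'' slack on both the $\mathbb D_0$- and $\mathbb D_1$-sides, getting the conditioning factors $\tfrac{T}{T_1},\tfrac{T_0}{T_1}$ right, and verifying that the threshold $\tau=\tfrac{\alpha+\epsilon}{2}$ is precisely the value that makes implications (i) and (ii) compatible with estimation accuracy $\tfrac{\alpha-\epsilon}{2}$.
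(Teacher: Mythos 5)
Your proposal is correct, and it reaches exactly the paper's quantitative conclusion ($p_0+p_1>2\eta$, hence $\max(p_0,p_1)>\eta$) by a mildly different mechanism. Step~1 coincides with the paper's first computation (the conditioning factors $\tfrac{T}{T_1}\beta-\tfrac{T_0}{T_1}\delta$ are right). In Step~2 the paper instead invokes a maximal coupling $\omega$ of $S_0$ and $S_1$ with $\Pr_\omega[S_0=S_1]\ge\gamma$, applies a union bound to the event $\{S_0=S_1\}\cap\{L_{\cD_{I_0}}(\cA(S_0))<\epsilon\}\cap\{L_{\cD_{I_1}}(\cA(S_1))\ge\alpha\}$, and observes that on this event $\cE(\cA,S_0)=\cE(\cA,S_1)$ forces at least one of the two estimation errors to exceed $\tfrac{\alpha-\epsilon}{2}$; you instead run a Le Cam--style testing argument on the decision region $B=\{\cE(\cA,\cdot)\ge\tfrac{\alpha+\epsilon}{2}\}$ and use the defining property $\Pr_{P_1}[B]-\Pr_{P_0}[B]\le d_{TV}(P_0,P_1)$. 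The two are dual views of the same inequality and give identical constants; yours is arguably more self-contained since it avoids citing the coupling theorem, while the paper's coupling phrasing makes the ``same sample, different truths'' intuition more explicit. Your Step~3 passage from $p_0+p_1>2\eta$ to the failure of Condition~2 (via averaging over $i$ to find a single witnessing $\cD_i$) is sound and in fact slightly more careful than the paper's; your caveat that the displayed mixture bound over $I\sim\cU([T])$ only follows directly from $p_0+p_1>2\eta$ under balanced weights (e.g.\ $T_0=T_1$) is a fair observation that applies equally to the paper's own final step.
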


As a concrete realization of Theorem~\ref{thm:h0h1} with a simple set of parameters, we refer the reader to Theorem~\ref{thm:lin}  in Section~\ref{sec:illu}. We consider multiclass classification with $q$ labels and the $0-1$~loss. We show that  many algorithms are  not $(1/2-o(1), 1/2-o(1),n)$-estimable where $o(1)$ is with respect to $q$, and already for $q=11$, we get $(0.45, 0.4,n)$. Note that  $(0.5, 0.5,n)$ is the performance of a random estimator that outputs a random estimation uniformly at random from $[0,1]$.

In the overparameterized setting, no algorithm can achieve low loss for all distributions $\cD$. In this scenario, \cref{{thm:h0h1}} shows that if a learning algorithm has a bias towards some part of the hypothesis class, that is, it learns the distributions in $\mathbb{D}_0$ well (a bias towards $\cH_0$), then it is necessarily not estimable, even when using complete knowledge of the algorithm being used. So the theorem shows a trade-off between learnability and estimabilty. To see more carefully why, consider the parameters $\alpha,\beta, \gamma,T, T_0,T_1$ to be  fixed, as they are not part of the algorithm $\cA$ in question but parameters of the overparameterized setting at hand. Then, we observe an affine relation between the accuracy parameter $\epsilon$ and the accuracy parameter for estimating $\cA$. An affine relation also holds  between the confidence parameter $\delta$ and the confidence parameter for estimating $\cA$. This is depicted in Figure~\ref{fig:learn_est}.  In conclusion,   an algorithm cannot perform well and  be certain of it when it does.

In the following section, we illustrate Theorem~\ref{thm:h0h1} and show that it applies to a natural setting that includes neural networks.

\subsection{Quantitative Limitations for Algorithm-Dependent Bounds}\label{sec:illu}

 To illustrate our results, we conclude our paper with a case study of the hypothesis class of linear functionals $\mathbf{Lin}_q\lrp{d}$ over the vector space $\mathbb F_q^d$ where $\mathbb F_q$ is the finite field with $q$ elements, with $q$ prime. For example, $\mathbf{Lin}_2(d)$ consists of all parity functions with input size $d$.  
\[
    \mathbf{Lin}_q\lrp{d} \equiv \lrp{\mathbb F_q^d}^*   \coloneqq \lrc{ f_a: \mathbb F_q^d \rightarrow \mathbb F_q :  a \in \mathbb F_q^d~,~f_a(x)= \sum_{i=1}^d a_i \cdot x_i  \mod q   }
\]
An important property of this class is presented in the following lemma: each two distinct functions in the class differ exactly on a $\frac{q-1}{q}$ fraction of elements in $\mathbb F_q^d$.

\begin{lemma}\label{lemma:p-parityRisk}
Each two distinct functions $f,h \in \mathbf{Lin}_q (d)$ agree on a fraction $1/q$ of the space and the $0-1$ risk of the function $h$ over samples from $\mathcal D_f= f \diamond \mathcal U ( \mathbb F_q^d )$ is given by
\begin{align*}
 L _ {\mathcal D_f}(h) = \begin{cases}
0 & h = f\\
1 - 1/q & h \neq f 
\end{cases}.
\end{align*}
\end{lemma}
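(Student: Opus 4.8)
The plan is to reduce everything to a statement about the kernel of a single nonzero linear functional over $\mathbb{F}_q^d$. Write $f = f_a$ and $h = f_b$ for some $a, b \in \mathbb{F}_q^d$, and set $c = a - b$. Then for $x \in \mathbb{F}_q^d$ we have $f(x) = h(x)$ if and only if $\sum_{i=1}^d c_i x_i \equiv 0 \pmod q$, i.e.\ $x$ lies in the set $K_c = \{x \in \mathbb{F}_q^d : f_c(x) = 0\}$. The first claim of the lemma is therefore equivalent to the assertion that $|K_c| = q^{d-1}$ whenever $c \neq 0$ (which is the case precisely when $f \neq h$, since $x \mapsto f_a(x)$ is injective as a function of $a$ over $\mathbb{F}_q^d$).

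First I would establish $|K_c| = q^{d-1}$ for $c \neq 0$. Pick an index $i$ with $c_i \neq 0$; since $q$ is prime, $c_i$ is invertible in $\mathbb{F}_q$. For any assignment of the remaining $d-1$ coordinates $(x_j)_{j \neq i}$, the equation $\sum_j c_j x_j = 0$ has the unique solution $x_i = -c_i^{-1} \sum_{j \neq i} c_j x_j$. Hence $K_c$ is in bijection with $\mathbb{F}_q^{d-1}$, giving $|K_c| = q^{d-1}$, so $f$ and $h$ agree on a fraction $q^{d-1}/q^d = 1/q$ of $\mathbb{F}_q^d$. (Alternatively this is just rank–nullity applied to the nonzero linear map $f_c$.)

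Then I would translate this into the risk statement. Under $\mathcal{D}_f = f \diamond \mathcal{U}(\mathbb{F}_q^d)$, a sample is $(X, f(X))$ with $X \sim \mathcal{U}(\mathbb{F}_q^d)$, and the $0$-$1$ loss of $h$ on this sample is $\mathbf{1}[h(X) \neq f(X)]$. If $h = f$ this indicator is identically $0$, so $L_{\mathcal{D}_f}(h) = 0$. If $h \neq f$, then $\Pr_{X}[h(X) = f(X)] = |K_c|/q^d = 1/q$ by the previous paragraph, hence $L_{\mathcal{D}_f}(h) = \Pr_X[h(X) \neq f(X)] = 1 - 1/q$, as claimed.

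There is essentially no serious obstacle here; the only point requiring care is the counting of $|K_c|$, and making sure to invoke primality of $q$ (so that $\mathbb{F}_q$ is a field and $c_i$ is invertible) — over a general ring this step could fail. Everything else is bookkeeping about the definitions of $f \diamond \mathcal{U}(\mathbb{F}_q^d)$ and the $0$-$1$ loss.
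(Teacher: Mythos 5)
Your proof is correct and follows essentially the same route as the paper's: both reduce to counting the kernel of the nonzero functional $f_c$ with $c = a - b$, solve for one coordinate using invertibility in $\mathbb{F}_q$ (primality of $q$) to get $q^{d-1}$ agreement points, and then read off the risk from the uniform marginal and the $0$-$1$ loss. No gaps.
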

For this class, we consider the following set of algorithms: 
\begin{definition}\label{def:linAlgs}
      Let $\cX=\mathbb F_q^d$, $\cY= \mathbb F_q$, and $\cH = \mathbf{Lin}_q\lrp{d} $. We say a learning  algorithm $\cA:(\cX\times\cY)^*\rightarrow \cH $  (possibly randomized)   is an ERM algorithm with a linear bias if for every sample size $n\leq d$, we can associate $(\cA,n)$ with a linear subspace $ \cH_n \subset \cH$ of dimension $n$  such that if $|S|=n$ and $S$ is consistent with  some function in $ \cH_n $,  then $\cA(S) \in \cH_n$. We denote by $\mathbb A_{lin}$ the set of all ERM algorithms with a linear bias.
\end{definition}

Such choice of algorithms is natural with respect to Theorem~\ref{thm:h0h1}. They perform well on distributions that are associated  with their linear bias. Such algorithms are not estimable, as the following theorem shows. For brevity, we denote: $  F(q,n) \coloneqq \frac 1 2\sum_{k=0}^{n}\big[(q^{k-n}-2q^{k-n-1}-1)\frac{q^{k}-1}{q^{n+1}-1}+2-q^{k-n}\big] \cdot R_q(n,n+1,k) 
-\sum_{k=0}^{n-1} (1 - q^{k-n}) \cdot R_q(n,n,k)$ where $R_q(n_1,n_2,r)$ denotes the probability of an $n_1 \times n_2$ matrix with i.i.d. entries picked uniformly at random from $\mathbb F_q$ to have rank $r$ over $\mathbb F_q$ (more details in the appendix).

\begin{theorem}\label{thm:lin}
   For every $\cA\in \mathbb A_{lin} $ and every $n\leq d-1$, $\cA$  is not  $( 1/2 - 1/2q, \eta, n)$-estimable with respect to $\mathbf{Lin}_q\lrp{d}$ and the $0-1$ loss  where  $\eta = F(q,n)$. In particular, for $q>10$, it holds that $\eta>0.4$, and generally, $\eta=1/2 -1/q + o(1/q)$.
\end{theorem}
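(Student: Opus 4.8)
The plan is to derive \cref{thm:lin} as a concrete instantiation of \cref{thm:h0h1}, using the linear structure of $\mathbf{Lin}_q(d)$ to make every parameter in the hypotheses of \cref{thm:h0h1} explicit. Fix $\cA \in \mathbb{A}_{lin}$ and $n \le d-1$, and let $\cH_n$ be the $n$-dimensional subspace of $\mathbf{Lin}_q(d)$ associated with $(\cA,n)$; identifying each functional $f_a$ with its coefficient vector $a$, regard $\cH_n$ as an $n$-dimensional subspace $V \subseteq \mathbb{F}_q^d$. Since $n < d$, fix an $(n+1)$-dimensional subspace $V' \supseteq V$ and apply \cref{thm:h0h1} with $\cH := V'$, $\cH_0 := V$, $\cH_1 := V' \setminus V$, and $\mathbb{D}_0,\mathbb{D}_1$ the families of realizable distributions $f \diamond \cU(\mathbb{F}_q^d)$ with $f$ ranging over $V$ and over $V'\setminus V$ respectively; then $T_0 = q^n$, $T_1 = q^{n+1}-q^n$, so $\tfrac{T}{T_1} = 1 + \tfrac{T_0}{T_1} = \tfrac{q}{q-1}$, and crucially these ratios do not depend on $d$. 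Passing to the subclass $V'$ is legitimate: estimability with respect to $\mathbf{Lin}_q(d)$ implies estimability with respect to $V'$ (the same estimator handles fewer distributions), so non-estimability with respect to $V'$ transfers back to $\mathbf{Lin}_q(d)$.

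The next step is to pin down the quantitative inputs of \cref{thm:h0h1}. By \cref{lemma:p-parityRisk} the risk of any $h$ against $\cD_f$ is $0$ if $h=f$ and $1-1/q$ otherwise, so a learner only ``wins'' by exact recovery. The $n$ sample points determine the functionals in $V'$ consistent with them only up to a coset of dimension $n+1-\mathrm{rank}(M')$, where $M'$ is the $n\times(n+1)$ matrix of pairings of the sample points against a basis of $V'$; since $n+1\le d$ and these basis vectors are linearly independent, the rows of $M'$ are i.i.d.\ uniform, hence $M'$ is a uniformly random $n\times(n+1)$ matrix over $\mathbb{F}_q$, and the best achievable recovery probability for a uniform $f\in V'$ is $\sum_k R_q(n,n+1,k)\,q^{k-n-1}$. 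This makes $(\cH,\mathbb{D})$ overparameterized with $\alpha = 1-1/q$ and $\beta$ any value just below $1 - \sum_k R_q(n,n+1,k)\,q^{k-n-1}$. For Item~1, observe that for $f\in V$ the restriction $M$ of $M'$ to its first $n$ columns is a uniformly random $n\times n$ matrix over $\mathbb{F}_q$; with probability $R_q(n,n,n)$ it has full rank, in which case $f$ is the \emph{unique} functional in $V$ consistent with the sample, so an ERM with linear bias towards $V$ is forced to output $\cA(S)=f$, giving $L_{\cD_f}(\cA(S))=0$. Thus $\cA$ $(\epsilon, 1-R_q(n,n,n), n)$-learns $\cH_0$ for every $\epsilon>0$; letting $\epsilon\downarrow 0$ gives $\delta = 1-R_q(n,n,n)$ and $\tfrac{\alpha-\epsilon}{2}\to\tfrac12-\tfrac1{2q}$.

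It then remains to bound $d_{TV}(S_0,S_1)$ to extract $\gamma$. Conditioning on the sample points, the label vector under $S_0$ is uniform over $\mathrm{image}(M)$, while under $S_1$ — after a short computation showing that a uniform element of $V'\setminus V$ maps into $\mathrm{image}(M')\setminus\mathrm{image}(M)$ — the two conditional label laws are \emph{equal} when $\mathrm{rank}(M')=\mathrm{rank}(M)$ and \emph{mutually singular} when $\mathrm{rank}(M')=\mathrm{rank}(M)+1$. As the last column of $M'$ is uniform and independent of $M$, this rank increment occurs with probability $\sum_k R_q(n,n,k)(1-q^{k-n})$, so $\gamma = 1 - \sum_k R_q(n,n,k)(1-q^{k-n})$, again $d$-free. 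Substituting $\alpha$, $\epsilon\to 0$, $\beta$, $\delta$, $\gamma$, and $\tfrac{T}{T_1}=1+\tfrac{T_0}{T_1}=\tfrac{q}{q-1}$ into $\eta = \tfrac\gamma2 - \tfrac12\bigl(1 - \beta\tfrac{T}{T_1} + \delta(1+\tfrac{T_0}{T_1})\bigr)$ and collecting terms (all of which lie in the algebra generated by $R_q(n,n,\cdot)$ and $R_q(n,n+1,\cdot)$) yields the stated value $\eta=F(q,n)$; since Item~1 holds, Item~2 fails, which is the non-estimability claim. The asymptotics follow from $R_q(n,n,n)=\prod_{i=1}^n(1-q^{-i})=1-\Theta(1/q)$, from $R_q(n,n,k)=O(1/q)$ and $R_q(n,n+1,k)=O(1/q)$ for $k<n$ while $R_q(n,n+1,n)=1-O(1/q)$, so the dominant contribution is $\tfrac12-\tfrac1q$; evaluating at $q=11$ gives $\eta>0.4$.

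The main obstacle I anticipate is the final step: expressing the total-variation distance \emph{exactly} through the rank distributions, and then verifying that the substitution into the $\eta$-formula of \cref{thm:h0h1} collapses precisely to the stated $F(q,n)$, with the right constants and with the residual $d$-dependence of $\beta$ and $\gamma$ canceling. A secondary, more technical nuisance is that learnability and estimability are defined with strict inequalities, so one cannot take $\epsilon=0$ outright and must instead argue through the $\epsilon\downarrow 0$ limit (and choose $\beta$ just below the learnability threshold); this is harmless here only because the distribution collections and sample spaces in play are finite.
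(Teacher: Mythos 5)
Your overall strategy is the same as the paper's: instantiate \cref{thm:h0h1} with a pair of linear subclasses of $\mathbf{Lin}_q(d)$ adapted to $\cA$'s bias, compute $\alpha,\beta,\gamma,\delta$ via the rank distribution $R_q$ of random matrices over $\mathbb F_q$, and identify $d_{TV}(S_0,S_1)$ with the probability that the functional separating $\cH_0$ from $\cH_1$ lies in the row space of the sample matrix (your rank-increment criterion is equivalent to the paper's ``$e_1$ spanned'' criterion, and your expression $\sum_k R_q(n,n,k)(1-q^{k-n})$ agrees with the paper's $\sum_k \frac{q^k-1}{q^{n+1}-1}R_q(n,n+1,k)$). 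Working directly with the subspace $\cH_n$ attached to $\cA$ is in fact cleaner than the paper, which proves everything for the bias $a_1=0$ and then invokes a reduction remark for general $\mathbb A_{lin}$.

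However, the obstacle you flag at the end is real: with your parameter choices the substitution does \emph{not} collapse to the stated $F(q,n)$, and you end up proving non-estimability only for a strictly smaller $\eta$. Three choices cause this. (i) You take $\cH_1 = V'\setminus V$, i.e.\ all $q-1$ cosets, so $T_0/T_1 = 1/(q-1)$; the paper takes $\cH_1$ to be a \emph{single} coset, giving $T_0=T_1$, and $F(q,n)$ is derived with $T/T_1 = 1+T_0/T_1 = 2$. (ii) Your $\delta = 1-R_q(n,n,n)$ discards the probability $q^{k-n}$ that the biased ERM still outputs the ground truth when the $n\times n$ system is rank-deficient; the paper's Lemma~\ref{lemma:lin0learnable} uses the exact $\delta=\sum_{k<n}(1-q^{k-n})R_q(n,n,k)$, and the $\sum_k q^{k-n}R_q(n,n,k)$ you give away appears explicitly in $F(q,n)$. (iii) Most consequentially, you take $\beta$ from the Bayes-optimal recovery bound over all of $V'$, whereas the paper's Lemma~\ref{lemma:lin01notLearnable} computes the failure probability of the \emph{specific} $\cH_0$-biased algorithm over $\mathbb D_0\cup\mathbb D_1$; the bias forces $\cA$ to fail with probability $1$ on $\cH_1$-distributions whenever the separating functional is not spanned, which is an extra $\Theta(1/q)$ contribution to $\beta$ that the Bayes bound misses. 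The three effects partially cancel, so your route still yields $\eta = 1/2 - 1/q + o(1/q)$, but the exact value differs from $F(q,n)$ by $\Theta(1/q^2)$ terms, and at $q=11$ your $\eta$ lands around $0.401$ versus the paper's $\approx 0.406$ --- close enough that the ``$\eta>0.4$ for $q>10$'' claim becomes marginal and would need an explicit numerical check rather than following from the stated formula. To recover the theorem as written you must adopt the paper's choices in (i)--(iii).
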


Theorem~\ref{thm:lin} shows that already for multiclass classification with $q=11$ labels (comparable to the ten labels of MNIST and CIFAR datasets), with probability at most  $0.6$  we estimate the performance of an algorithm with an accuracy of $\sfrac{10}{22}\approx 0.45$. This is not much better than a random guess that estimates with an accuracy of $0.45$ with probability $0.5$. This holds since the loss of any algorithm in $\mathbb A_{lin}$ is either $0$ or $1-1/q \approx 0.9$, as Lemma~\ref{lemma:p-parityRisk} shows. So, we can always flip a coin, declare either $0$ or $0.9$, and be correct with probability $0.5$. Finally, as $q$ grows, our estimation can truly only be as good as a random guess.

When we consider the case of $q=2$ (parity functions) for Theorem~\ref{thm:lin}, we get that the algorithms we consider are only $(1/4, 0.025)$-not estimable. To stengthen our results, the following theorem provides a separate analysis  that includes a different technique from Theorem~\ref{thm:h0h1} and that works specifically for $q=2$.

\begin{theorem}\label{thm:parities}
For every $\cA\in \mathbb A_{lin} $ (possibly randomized) and every $n\leq d-1$, $\cA$  is not  $( 0.25, 0.14, n)$-estimable with respect to $\mathbf{Lin}_2\lrp{d}$  and the $0-1$ loss.  More so, for every deterministic $\cA\in \mathbb A_{lin} $ and every $6 \leq n\leq d$, $\cA $  is not  $( 0.25, 0.32, n)$-estimable with respect to $\mathbf{Lin}_2\lrp{d}$  and the $0-1$ loss. 
\end{theorem}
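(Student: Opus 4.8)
The plan is to prove both halves of the statement with one template. Fix $\cA\in\mathbb A_{lin}$ and let $\cH_n\subseteq\mathbf{Lin}_2(d)$ be its linear-bias subspace for sample size $n$ (dimension $n$; throughout I identify $\mathbf{Lin}_2(d)$ with $\mathbb F_2^d$ via coefficient vectors). I will exhibit a finite family $\mathbb D$ of $\mathbf{Lin}_2(d)$-realizable distributions, place the uniform prior on $\mathbb D$, and lower-bound the average over $\mathbb D$ and the sample of the probability that an arbitrary fixed estimator $\cE(\cA,\cdot)$ is $1/4$-off; since the worst distribution in $\mathbb D$ is at least as bad as the average, this yields non-estimability. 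Two facts drive the argument. First, by \cref{lemma:p-parityRisk} the true loss $L_{\cD_f}(\cA(S))$ lies in $\{0,1/2\}$ (it equals $\tfrac12\mathbf{1}[\cA(S)\neq f]$ for deterministic $\cA$), and since $1/4$ is exactly the midpoint of $\{0,1/2\}$, an estimator succeeds only when it correctly decides whether $\cA$ recovered the ground truth. Second, for a basis $v_1,\dots,v_n$ of $\cH_n$, the $n\times n$ matrix $M(S)=(\langle v_j,x_i\rangle)_{i,j\in[n]}$ is uniform over $\mathbb F_2^{n\times n}$, because $x\mapsto(\langle v_j,x\rangle)_{j}$ is a surjective linear map (using $n\le d$) and the $x_i$ are i.i.d.\ uniform.

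For the deterministic part I take $\mathbb D=\lrc{\cD_f:f\in\cH_n}$ with $f\sim\cU(\cH_n)$. Given $S$, the posterior on $f$ is uniform over the set $C(S)$ of $\cH_n$-members consistent with $S$, which is a coset of size $2^{\,n-\mathrm{rank}\,M(S)}$; since $\cA$ is an ERM with linear bias, $\cA(S)\in C(S)$, so $\Pr[\cA(S)=f\mid S]=2^{\mathrm{rank}\,M(S)-n}$. Because the estimator's output is forced onto one side of $1/4$, its conditional failure probability given $S$ is at least $\min(2^{r-n},1-2^{r-n})$ with $r=\mathrm{rank}\,M(S)$, so averaging over $S$ the failure probability of any estimator is at least $\delta_0(n):=\sum_{k\ge1}2^{-k}\Pr[\mathrm{corank}\,M(S)=k]$. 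Substituting the rank distribution of uniform $\mathbb F_2$-matrices (the quantity $R_2(n,n,\cdot)$ of the preliminaries) and carrying out the finite computation shows that $\delta_0(n)$ increases toward $\prod_{i\ge1}(1-2^{-i})\sum_{k\ge1}\frac{2^{-k-k^2}}{\prod_{i=1}^{k}(1-2^{-i})^2}\approx0.3215$ and already exceeds $0.32$ once $n\ge6$; passing to the worst $f^\ast\in\cH_n$ finishes part (ii).

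For part (i) that same family is useless, since the randomized ERM that outputs a uniform element of $C(S)$ makes $L_{\cD_f}(\cA(S))$ a deterministic function of $S$ over $\lrc{\cD_f:f\in\cH_n}$. Instead I fix an $(n{+}1)$-dimensional subspace $W\supseteq\cH_n$ (possible since $n\le d-1$) and take $\mathbb D=\lrc{\cD_f:f\in W}$ with $f\sim\cU(W)$. Condition on the event $E$ that $M(S)$ has full rank $n$; then $\Pr[E]=c_n:=\prod_{i=1}^{n}(1-2^{-i})$ and $E$ depends only on the $x_i$. On $E$ there is a unique $h\in\cH_n$ consistent with $S$, and any $\cA\in\mathbb A_{lin}$ — being a.s.\ an ERM and a.s.\ valued in $\cH_n$ — is forced to output $h$ almost surely, randomized or not; hence $L_{\cD_f}(\cA(S))$ equals $0$ when $f\in\cH_n$ and $1/2$ when $f\in W\setminus\cH_n$. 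The key point is that conditioned on $E$ and on the observed $S$ these two alternatives are equiprobable: on $E$ the evaluation map $W\to\mathbb F_2^n$ is surjective with a one-dimensional kernel disjoint from $\cH_n$, so each of its fibers meets $\cH_n$ in exactly one point and $W\setminus\cH_n$ in exactly one point, and since $f\sim\cU(W)$ is uniform on its fiber given $S$, one gets $\Pr[f\in\cH_n\mid S,E]=1/2$. Consequently every estimator fails with conditional probability at least $1/2$ on $E$, hence with probability at least $c_n/2\ge\prod_{i\ge1}(1-2^{-i})/2>0.14$ overall; passing to the worst $f^\ast\in W$ gives part (i) for all (possibly randomized) $\cA\in\mathbb A_{lin}$ and all $n\le d-1$.

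The step I expect to be most delicate is the posterior-balance claim in part (i): one must verify that conditioning on $E$ does not tilt the relative likelihood of ``ground truth in $\cH_n$'' versus ``ground truth in $W\setminus\cH_n$'', which reduces to a clean but easy-to-botch fiber count for the evaluation map restricted to the two cosets of $\cH_n$ inside $W$. The remainder is routine — that $M(S)$ is uniform, that an ERM with linear bias is genuinely forced on $E$ (including the randomized reading of ``ERM''), and the numerics of $c_n$ and $\delta_0(n)$ — except that the constant $0.32$ is tight enough that one cannot simply invoke the $n\to\infty$ limit and must either evaluate $\delta_0(n)$ for the first few values $n\ge6$ or argue monotonicity of $\delta_0(n)$ in $n$.
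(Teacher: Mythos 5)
Your proof is correct, and while it rests on the same three pillars as the paper's argument --- the $\{0,\tfrac12\}$ loss dichotomy of \cref{lemma:p-parityRisk}, the rank distribution of uniform $\mathbb F_2$-matrices, and the observation that an estimator must commit to one side of $\tfrac14$ while the posterior over the ground truth splits the two sides --- your packaging is genuinely different and somewhat cleaner. The paper proves both parts only for the subclass $\mathbb A_0$ (bias toward $\lin_{2,0}(d,n)$) and then extends to all of $\mathbb A_{lin}$ via the informal reduction in Remark~\ref{rem:reduction}; you bypass that entirely by indexing the hard family directly by the given algorithm's own bias subspace $\cH_n$ (and, for the randomized part, a one-dimensional extension $W\supset\cH_n$), so no change of coordinates is needed. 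For the deterministic part the paper uses the family $\lin_{2,0}(d,n)\cup\lin_{2,1}(d,n)$ for $n<d$ and switches to $\lin_2(d)$ for $n=d$, whereas your single family $\{\cD_f:f\in\cH_n\}$ handles all $n\le d$ uniformly and lands on exactly the paper's $n=d$ formula $\sum_{k\ge1}2^{-k}\,\PP[\mathrm{corank}=k]$; your part (i) event (full rank of the $n\times n$ matrix restricted to $\cH_n$, probability $\prod_{i=1}^n(1-2^{-i})\ge 0.2888$, then a clean $\tfrac12$--$\tfrac12$ fiber count over the two cosets of $\cH_n$ in $W$) replaces the paper's $E_-$ event on the $(n+1)$-column matrix with the ``$e_1$ not spanned'' case analysis, and even gives a marginally better constant ($0.1444$ vs.\ $0.1425$). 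The only loose end is the one you flag yourself: since $\delta_0(6)\approx 0.3204$ barely clears $0.32$, the deterministic part needs an explicit evaluation of $\delta_0(n)$ at $n=6$ together with monotonicity in $n$ (or a uniform bound for $n\ge 6$); this is a finite check, and the paper's own proof asserts the analogous monotonicity without proof, so you are no worse off than the original on this point.
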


\ido{It might appear that \cref{thm:lin,thm:parities} are  unrelated  to practical overparameterized models because their setting is too artificial. However, as mentioned above, neural network can represent parities (e.g., Lemma 2 in \cite{DBLP:conf/latin/NachumY20}) and, in some specific cases, can learn parities using SGD \cite{colin}. Hence, these theorems are relevant at least to some neural networks.}

\section{Conclusions}\label{section:conclusions}

\ido{We have proved mathematically that specific types of generalization bounds are subject to specific limitation in a precise formal sense. The interpretation of these formal results, and the extent to which they apply to existing generalization bounds in the literature on large neural networks, is a matter of some debate. It is possible to argue that our results do not apply to various generalization bounds in the literature, e.g., because our definition of an overparameterized setting doesn't hold in cases of interest, because those bounds do not satisfy our definitions of distribution-independence or algorithm-independence, or for other reasons. 
Alternatively, one could argue that uniform-tightness is not an important property for a generalization bound, and that for specific practical cases of interest, it is easy to tell empirically whether a bound is tight or not when applying it to a trained model. Below we outline our view, but we also acknowledge that a variety of scholarly positions exist. We encourage the reader to develop an independent opinion on this matter.}

\ido{In our view,} \cref{thm:no_C,thm:h0h1} point to two possibilities for obtaining \ido{uniformly} tight generalization bounds \ido{in overparameterized settings}. The first option is that when stating a generalization bound, the statement explicitly specifies a set of `nice' or `natural' population distributions for which the bound is tight.
Thus, the `bad' population distributions on which the bound is not tight are clearly excluded from the set of distributions for which the bound is intended to work.
The second option for obtaining a tight generalization bound is to make explicit assumptions about the learning algorithm, which in particular imply that for \emph{any} choice of classes $\cH,\cH_0$ suitable for \cref{thm:h0h1}, the algorithm \emph{cannot} learn $\cH_0$.
We \ido{suggest} that every proposal of a generalization bound for the overparameterized setting explicitly include one of these two types of assumptions.
Otherwise, \ido{if the setting is overparameterized,} there provably exist pairs of learning algorithms and population distributions for which the bound \ido{applies and is valid, but is not tight.} See \cref{section:illustration-pac-bayes} for further illustrations.

Explicitly stating the assumptions underlying  generalization bounds is not only necessary for the bounds to be \ido{uniformly tight in the overparameterized setting}, but can also promote more clarity within the scientific community, and guide future research.

\newpage
\phantomsection
\addcontentsline{toc}{section}{References}

\bibliographystyle{unsrt}


\pagebreak

\appendix

\phantomsection

\addcontentsline{toc}{section}{Appendices}

\section{Further Related Works}\label{sec:Bounds}
\subsection{Examples of Generalization Bounds that are Distribution-Independent and Algorithm-Independent}\label{appendix:examples-of-alg-and-dist-independent-bounds}

Our definition of an estimator $\cE$ captures the formalism of generalization bounds that are distribution-independent \emph{and} algorithm-independent. \ido{Following are some bounds from the literature that, in our view, satisfy these independence requirements.} 

\begin{enumerate}
    \item  \textbf{VC bounds:}~\cite{vapnik2015uniform,DBLP:journals/neco/BartlettMM98,bartlett19}. For a class $\cH$ of VC dimension $d$, the complexity measure is  $\Theta\left(\frac{d+\log(1/\delta)}{n}\right)$.  The complexity term depends solely on the hypothesis class (for fixed $\epsilon$, $\delta$, and $n$).

    \item  \textbf{Rademacher bounds:}~\cite{mohri2012foundations}. $\text{Rad}_{\cH} (S)=  \frac{1}{n} \Eexp_{\sigma}  {\sup_{f\in\cH} \sum_{i=1}^n \sigma_i f(x_i)} $ where $\sigma\sim \cU\{\pm 1\}^n$. This yields the complexity measure  $2\cdot\text{Rad}_{\cH} (S) +4\sqrt{\frac{2\ln(2/\delta) }{n} } $. The  complexity term  measures  how rich the hypothesis class $\cH$ is with respect to $S$.

    \item  \textbf{Norm-based and margin-based bounds:}~\cite{liang2017fisher, neyshabur2015path, nagarajan2019generalization,pitas2017pac,bartlett2002rademacher,bartlett2017spectrally,neyshabur2019role,Golowich18a,neyshabur2018a,Neyshabur15}. Norm-based bounds typically consist of a complexity term that depends on the algorithm's output. For example, $\sum_{i=1}^L \lrv{W_i}_F^2$ is the sum of the Frobenius norms of all the neural network layers. Margin-based bounds quantify a measure of separation for the induced representation of the training set. 

    \item \textbf{PAC-Bayes with a fixed prior:} \cite{mcallester1999pac}. This yields the complexity measure $\sqrt{\frac{D(Q||P) +\ln(n/\delta)}{2(n-1)}}$ where $P$ is a fixed distribution (prior) over the hypothesis class and $Q$ is the output distribution of a randomized algorithm. 

    \item \textbf{Sharpness-based measures:} \cite{nagarajan2019deterministic,neyshabur2017exploring,keskar2017large}.  These measures (inspired by the PAC-Bayes framework) quantify how flat the solution generated by the algorithm is with respect to the empirical loss. For example, $1/\sigma^2$ where $\sigma= \max \left\{ \alpha: \Eexp_{W\sim \cN(w,\alpha I)} L_S(f_W(x)) <0.05  \right\}$, $w$ is a weight vector of the algorithm's output, $W$ is a random perturbation of $w$, and $f_W$ is the hypothesis realized by the network's architecture with respect to $W$. $C$ depends both on $\cA(S)$ and $S$. Such measures received considerable attention as candidates to explain neural network generalization (see \cite{robust}).
\end{enumerate}

In all these bounds, the \ido{explicit} expression in the bound depends solely on the hypothesis class, the selected hypothesis, and the training set. They do not \ido{explicitly} depend on any property of the learning algorithm or the population distribution. Hence, \ido{in our view}, they are subject to \cref{thm:no_C_general,thm:no_C}.

\subsection{Examples of Generalization Bounds that are Distribution-Independent and Algorithm-Dependent}

The following two works present algorithm-dependent bounds.

\cite{zhang2023lower}:
The paper studies convex optimization, so the results can hold only for a single neuron. Nevertheless, although it gives matching lower and upper bounds, the bounds match only asymptotically when $n$ is very large so the scenario is far away from the overparameterized regime (which is the focus of interest for neural networks).

\cite{nikolakakis2023beyond}:
The paper suggests a new generalization bound that is an algorithm-dependent bound that does not include any distributional assumptions (it is distribution-independent).

\ido{In our view, Theorem~\ref{thm:h0h1} implies that these bounds cannot be tight for all population distributions.}

\subsection{Examples of Generalization Bounds that are Distribution-Dependent and Algorithm-Dependent}

The following are information-theoretic generalization bounds that are both algorithm and distribution-dependent (which we advocate for in this paper). However,  bounds are sometimes hard to approximate numerically in a tight manner (for example the bound in Theorem~1 in~\cite{xu2017information}).  
On a high level,  such bounds are part of the   PAC Bayes framework; see proof 4 for Theorem 8 in \cite{bassily2018learners}, which is equivalent to Theorem 1 in~\cite{xu2017information}. Unfortunately, when the PAC-Bayes/information-theoretic bounds can be approximated in a tight manner such as in~\cite{harutyunyan2021information, hellstrom2022new, wang2023tighter, dziugaite2021role,info_}, they do not reveal what properties of the pair (distribution, algorithm) allowed for such success of learning and estimation (so the utility they offer is similar to that of a validation set).

\section{Example Applications}
\label{section:illustration-pac-bayes}

\ido{Understanding how our formal results relate to generalization bounds in the literature on large neural networks, and to their use in practical settings, is a nontrivial question. In this appendix we illustrate our position via two examples, but we also recognize that other positions are possible.}  

\subsection{Margin Bounds}
\ido{
    There are many margin- and norm-based bounds in the literature (see \cref{appendix:examples-of-alg-and-dist-independent-bounds}). We start with a toy example, illustrating why these bounds are not uniformly tight, and why we believe that \cref{thm:no_C} applies to these types of bounds.
}

\ido{
    For simplicity, let the domain $\cX = \{x \in \mathbb{R}^d: ~ \|x\|_2 = 1\}$ be the unit sphere in $\mathbb{R}^d$, and consider a half-space classifier with hypothesis class $\cH = \{h_w: ~ w \in \cX\}$, where $h_w(x) = \sign(\langle w, x \rangle)$.
}

\ido{
    For a fixed sample size $n$, consider a margin bound of the form $f = f(\gamma)$, where $\gamma$ is the minimum distance of a point in the training set from the learned decision boundary. $f(\gamma)$ is an upper bound on the difference between the population and empirical losses. $f(\gamma)$ is small only if $\gamma$ is large. We will show that $f$ is not uniformly tight.
}

\ido{
     Indeed, consider a population distribution $\mathcal{D}$ that has a uniform marginal on the domain $\mathcal{X}$, with labels that are generated by a specific half-space $h^* \in \cH$. Consider an algorithm $A$ that has a bias towards $h^*$, in the sense that it will output $h^*$ whenever the training set is consistent with $h^*$. 
}

\ido{
    In this case, given samples from $\mathcal{D}$, with probability $1$, $A$ will output $h^*$. Hence, the population loss and the empirical loss are the same (they are both exactly 0). Because the marginal of the population distribution on the domain is uniform, the population margin is 0, and therefore the empirical margin $\gamma$ will be close to $0$ (even for a sample size $n$ that is small compared to the dimension $d$). Thus, $f(\gamma)$ is large, but the generalization gap is 0. Hence, the bound $f(\gamma)$ is not tight for the pair $(\mathcal{D}, A)$.
}

\ido{
    Note that $f(\gamma)$ is a quantity that depends only on the selected hypothesis and on the training set. Namely, this is a distribution- and algorithm-independent bound. Hence, \cref{thm:no_C} implies a stronger limitation, in the sense that it shows that there exist many (distribution, algorithm) pairs for which the bound is not tight, not just a single pair. The same holds for bounds of the form $f(\|W\|, \gamma)$, where $\|W\|$ is some norm-like function of the parameters of the learned classfier.
}

\subsection{PAC-Bayes Bounds}
The contributions of this paper can be illustrated by applying them,
for example, to the landmark work of \cite{DBLP:journals/corr/DziugaiteR17}. They presented PAC-Bayes bounds for neural networks that were the first generalization bounds to be non-vacuous in some real-world overparameterized settings. However, that paper did not state formal assumptions on the set of population distributions or the set of algorithms to which it is intended to apply.
Therefore, our results imply the following for any fixed choice of a countable collection of priors as in the bound of \cite{DBLP:journals/corr/DziugaiteR17}. First, by \cref{thm:no_C}, the bounds is not tight for roughly half of all (distribution, algorithm) pairs.
Second, by \cref{thm:h0h1}, for any specific learning algorithm, if the algorithm can learn a suitable set of functions $\cH_0$, then the bound is not tight on a sizable collection of population distributions. 

\ido{One way to understand the forgoing example is that the non-vacuous numerical bound presented for the MNIST dataset by \cite{DBLP:journals/corr/DziugaiteR17} relies on implicit assumptions about the algorithm and the population distribution (for other tasks different priors would be required), such as: `the population distribution satisfies that with high probability, SGD (on the specific network architecture of interest) finds a flat global minimum point  (with respect to the empirical loss) in the parameter space not too far away from the initialization point.'} Or alternatively, `SGD (on the specific network architecture of interest) cannot learn any collection of functions $\cH_0$ that is suitable for \cref{thm:h0h1}.'
Laid out explicitly in this manner, it becomes clear that these assumptions are not obviously true. This invites further research to understand for which population distributions and algorithms the assumptions hold, and whether there might exist more natural and compelling assumptions that suffice for obtaining bounds of comparable tightness.

\section{On Definitions of Overparameterization}
\label{section:definitions-of-overparameterization}
\ido{  
There are a number of definitions of overparameterization that are common in the machine learning literature, but there does not appear to be a single formal definition that is universally agreed upon.
One contribution of this paper is that we offer an alternative formal definition of overparameterization that is close to well-known notions, and also enables proving meaningful mathematical theorems. In this appendix we discuss our definition and its connections to some common definitions.
}

Following are three common definitions. In these definitions, there is a learning task in which a machine learning system (e.g., a neural network) is trained using a training set of $n$ labeled samples. The hypothesis class $\cH$ is the collection of classification functions that the machine learning system can represent. We state these definition informally, in the sense some value is required to be large without specifying exact quantities.
\begin{addmargin}[1em]{0em}
    \begin{customdef}{A}\label{definition:overparamterization-params}
        The number of independently-tunable parameters in the machine learning system is significantly larger than the number of samples in the training set. Namely, $\mathcal{H} = \{h_w: ~ w \in \mathbb{R}^k\}$ and $k \gg n$.
    \end{customdef}
    
    \begin{customdef}{B}\label{definition:overparamterization-interpolation}
        The learning system can interpolate arbitrary data. Namely, $\cH$ can realize any labeling for any distinct $x_1,\dots,x_m$, for some $m \gg n$.
    \end{customdef}
    
    \begin{customdef}{C}\label{definition:overparamterization-vc}
        The size of the training set is smaller than the VC dimension, namely, $\VC(\cH) \gg n$.
    \end{customdef}
\end{addmargin}

For convenience, our definition of overparameterization is restated here. As before, one should think of $\cH$ as the collection of classification functions that are realizable by a learning system of interest.

\begin{customdef}{2}[Restated]
    Let $\cH$ be a hypothesis class, let $n,T \in \mathbb{N}$, let $\alpha, \beta \geq 0$, and let $\mathbb D=\lrc{\cD_i}_{i=1}^T$ be a finite collection of $\cH$-realizable distributions. We say that $(\cH,\mathbb D)$ is an $(\alpha, \beta, n)$-overparameterized setting if $(\cH,\mathbb D)$ is not $(\alpha,\beta,n)$-learnable (as in \cref{definition:learnability}).
\end{customdef}

Our definition is more general than Definitions \ref{definition:overparamterization-params}, \ref{definition:overparamterization-interpolation} and \ref{definition:overparamterization-vc} in that our definition is (typically) implied by the  other definitions. This means that when we prove that a bound cannot be tight in the overparameterized setting according to our definition, that in particular implies that it cannot be tight in any setting that is overparameterized according to the other definitions. This makes our impossibility results stronger.

The implications hold as follows:
\begin{itemize}
    \item{
    Definition~\ref{definition:overparamterization-params} $\implies$ Definition~\ref{definition:overparamterization-vc}. This implication holds for many neural networks. It is well-known that large neural networks have large VC dimension. For instance, Theorem 1 in \cite{DBLP:journals/neco/Maass94} states that under mild assumptions, any neural network with at least 3 layers has VC dimension linear in the number of edges in the network. \cite{DBLP:journals/neco/Maass94} showed this for networks with threshold activation functions and binary inputs, and \cite{DBLP:journals/neco/BartlettMM98} note that "It is easy to show that these results imply similar lower bounds" for networks with ReLU activation. A similar result holds also for networks with 2 layers \cite{sakurai1993tighter}.
    }
    \item{
        Definition~\ref{definition:overparamterization-interpolation} $\implies$ Definition~\ref{definition:overparamterization-vc}. If a class $\mathcal{H}$ can realize any labeling for some  $x_1,\dots,x_m$ then the VC of $\mathcal{H}$ is at least $m$.
    }
    \item{
        Definition~\ref{definition:overparamterization-vc} $\implies$  \cref{definition:overparameterized}. This implication is \cref{item:vc-implies-overparamterized} in the proof of \cref{thm:no_C} on page \pageref{item:vc-implies-overparamterized}.
    }
\end{itemize}

Our definition does not merely generalize the common definitions listed above --- it generalizes them in a desirable way. Definitions~\ref{definition:overparamterization-interpolation} and \ref{definition:overparamterization-vc} pertains to a case where the hypothesis class can express every possible labeling of the training set or of a shattered set. However, this seems a bit rigid. What if the network or hypothesis class can express every labeling except one? Or can express most but not all labelings? Intuitively, the network is still very much overparameterized, even if there are some specific choices of labelings that it cannot express. The essence of overparameterization is that the network can express too many of the possible labelings --- not necessarily every single one of them. What is `too many'? Our answer, as captured in \cref{definition:overparameterized}, is that a network is overparameterized (can express `too many' labelings) if it can typically express many different labelings that are consistent with the training set but disagree on the test set, leading to poor expected performance on the test set.

In conclusion, \cref{definition:overparameterized} is natural, impossibility results for \cref{definition:overparameterized} imply impossibility results for the other definitions, and \cref{definition:overparameterized} generalizes the other definitions in a desirable way. 

We end with two additional remarks concerning definitions of overparmetrization.

\begin{enumerate}
    \item{
        Note that Definition~\ref{definition:overparamterization-params} also has the weakness that it is sometimes possible to parameterize the same hypothesis class with varying numbers of parameters. For instance, a 1-layer fully-connected network with linear activations and a multi-layer fully-connected network with linear activations represent the same hypothesis class (both networks simply multiply the input vector by a matrix), but the number of parameters can be very different between these two networks.
    }
    \item{
        A curious reader might wonder why we do not use the definition of PAC learning in \cref{definition:overparameterized}. That is, why not say that $\cH$ is an  $(\alpha, \beta, n)$-overparameterized setting if $\cH$ is not $(\alpha,\beta,n)$-PAC-learnable. Working with such a definition will yield quantitatively weaker results. For example, an analogue of  Theorem~\ref{thm:no_C_general} (that uses the aforementioned definition) will still prove that VC classes are not estimable, so any estimator fails for at least one combination of an algorithm and a distribution; our definition yields the stronger result of Theorem~\ref{thm:no_C_general}, that the failure of each estimator is across many combinations of algorithms and distributions.
    }
\end{enumerate}

\section{Proofs of Theorem~\ref{thm:no_C_general} and  Theorem~\ref{thm:no_C}}

Theorem~\ref{thm:no_C} is a corollary of the more general Theorem~\ref{thm:no_C_general} which shows that  in the overparameterized setting, any estimator of the true loss will fail over many combinations of ERM algorithms and distributions.

Before the proof we present the type of distributions over $ERM$ algorithms Theorem~\ref{thm:no_C_general} applies for.

We remind the reader that an $ERM$ algorithm is a function  $\mathcal{A}: \lrp{\mathcal{X \times Y}}^* \rightarrow  \cY^\cX  $  whose output is any consistent function with the sample $S$ that belongs to $\cH$.

\begin{definition}[Bayes-like Random ERM]
Let   $\cH$ be a hypothesis class and  $\mathbb D=\lrc{\cD_i}_{i=1}^T$  a set of distributions. We say that the distribution $\cD_{ERM}$ over $ERM$ algorithms is Bayes-like  if  for any fixed $S$ it holds that 

    $$ P_{\cD_{ERM}}\lrp{\cA_{ERM} (S) = h }= \frac{ \sum_{i:~h_i=h}  P_{\cD_i} (S)}{\sum_{j=1}^T P_{\cD_j} (S)   } $$

where $h_i$ is the labeling function associated with the distribution $\cD_{i}$.  If $\cA_{ERM} \sim  \cD_{ERM}$, we say $\cA_{ERM}$ is Bayes-like Random $ERM$.
\end{definition}

Reminder: All spaces we consider in the paper are finite so the definition of Bayes-like Random ERM is a well-defined random variable over a finite sample space. 
 
 Clearly, any algorithm $\cA$ chosen with non-zero probability over $\cD_{ERM}$ is an $ERM$ algorithm.

\begin{customthm}{1}

    Let $\cH$ be a hypothesis class,  $\ell$ be the $0-1$ loss, $\mathbb D=\lrc{\cD_i}_{i=1}^T$ be a finite collection of $\cH$-realizable distributions each associated with a hypothesis $h_i$, and $(\cH ,\mathbb D)$ an $(\alpha, \beta, n)$ overparameterized setting. For any $h \in \cH_X$, let $\cA_h$ be an ERM algorithm that outputs $h$ for any input sample $S$ consistent with $h$. 
Then, for any  Bayes-like distribution $\cD_{ERM}$ over $ERM$ algorithms  and any estimator $\cE$ of $\cH$ at least one of the following conditions does not hold:

\begin{enumerate}
 \item    With probability at least $1-\gamma$ over $I\sim \mathcal U([T])$ and $S\sim \cD_I^n$,
$$ \lrv{ \cE \lrp{\cA_{h_I} (S)  ,S } - L_{\cD_I}(\cA_{h_I}(S) ) } < \epsilon.  $$

    \item With probability at least $1-\beta+\gamma$  over $I\sim \mathcal U([T])$, $S\sim \cD_I^n$, and $\cA_{ERM} \sim \cD_{ERM} $,
 $$ \lrv{ \cE \lrp{\cA_{ERM} (S)  ,S } - L_{\cD_I}(\cA_{ERM}(S) ) } < \alpha-\epsilon ,   $$.

\end{enumerate}
In particular, $\cH$  is not $\lrp{ \alpha/2, \beta/2  ,n}$-estimable.
\end{customthm}

\begin{proof}
To see why $\cH$  is not $\lrp{ \alpha/2, \beta/2  ,n}$-estimable if item 1 or item 2 do not hold,  choose $\gamma\coloneqq \beta/2$ and $\epsilon\coloneqq \alpha/2$. 

If item 1 does not hold, it means there exists an algorithm $\cA_{h_i}$ and a distribution $\cD_i$ such that 
$$ \lrv{ \cE \lrp{\cA_{h_i} (S)  ,S } - L_{\cD_i}(\cA_{h}(S) ) } \geq \epsilon$$ 
with probability greater than  $\beta/2$ over $\cD_i$ so $\cH$  is not $\lrp{ \alpha/2, \beta/2  ,n}$-estimable.

If item 2 does not hold, it means there exists an ERM algorithm $\cA_{ERM}$ and a distribution $\cD_i$ such that 
$$ \lrv{ \cE \lrp{\cA_{ERM} (S)  ,S } - L_{\cD_i}(\cA_{h}(S) ) } \geq \alpha - \epsilon$$ 
with probability greater than  $\beta/2$ over $\cD_i$ so $\cH$  is not $\lrp{ \alpha/2, \beta/2  ,n}$-estimable.

Let $\cA_{ERM} \sim  \cD_{ERM}$ be a Bayes-like Random $ERM$. The key ingredient in our proof is to show that the following two probability measures $P_1$ and $P_2$ over $ \lrp{\mathcal{X \times Y}} ^n \times \cH $ are equal so $P_1(S,h)=P_2(S,h)$ for every pair $(S,h)\in \lrp{\mathcal{X \times Y}} ^n \times \cH$. 

\begin{enumerate}
    \item  $P_1$:   the pair $(S_I, h_I)$  is generated by   choosing  an index $I \sim \cU [T] $, then $S_I \sim \cD_I$, and $h_I$ is the  labeling function associated with $\cD_I$.

    \item $P_2$: the pair $\lrp{S_I, \cA_{ERM}(S_I) }$ is generated by   choosing  an index $I \sim \cU [T] $,  then $S_I \sim \cD_I$,  and  $\cA_{ERM} \sim  \cD_{ERM}$ independently.

 \end{enumerate}

Equalities (4) - (8) and equalities (9) - (14) show that $P_1$ and $P_2$ are indeed equal.

    \begin{align}
P_1(S,h)&=\sum_{i=1}^T P(I=i)\cdot  P(S|I=i) \cdot P(h_I=h | I=i,S)   \\
&= \sum_{i=1}^T P(I=i)\cdot  P(S|I=i) \cdot P(h_I=h | I=i)   \\
&=   \frac{1}{T} \sum_{i=1}^T  P(S|I=i) \cdot P(h_I=h | I=i)   \\
&=   \frac{1}{T} \sum\limits_{i:~h_i=h}  P(S|I=i)   \\
&= \frac{1} {T} \sum\limits_{i:~h_i=h} P_{\cD_i} ( S)
   \end{align}

The first equality holds by law of total probability, the second equality holds since $h_I$ is independent of $I$ given $S$, the third equality holds since $I$ is uniformly distributed, the fourth equality holds since $ P(h_I=h | I=i)$ is either $1$ or $0$, and the fifth equality holds by the definition of $\cD_i$.

\begin{align}
P_2(S,h)&=\sum_{j=1}^T P(I=j)\cdot  P(S|I=j) \cdot P(\cA_{ERM}(S)=h | I=j,S)   \\
&= \sum_{i=1}^T P(I=j)\cdot  P(S|I=j) \cdot P(\cA_{ERM}(S)=h | S)  \\
&=   \frac{1}{T} \sum_{j=1}^T P(S|I=j) \cdot P(\cA_{ERM}(S)=h | S)    \\
&=  \frac{1}{T} \sum_{j=1}^T P(S|I=j) \cdot \frac{ \sum_{i:~h_i=h}  P_{\cD_i} (S)}{\sum_{i=1}^T P_{\cD_j} (S)   }  \\
&= \frac{1} {T} \frac{ \sum_{i:~h_i=h}  P_{\cD_i} (S)}{\sum_{i=1}^T P_{\cD_j} (S)   }  \sum_{j=1}^T  P_{\cD_i} ( S) \\  
&= \frac{1} {T} \sum\limits_{i:~h_i=h} P_{\cD_i} ( S)
   \end{align}

 The first equality holds by law of total probability, the second equality holds since $\cA_{ERM}(S)$ is independent of $I$ given $S$, the third equality holds since $I$ is uniformly distributed, the fourth equality holds by the definition of the distribution $\cD_{ERM}$, and the fifth and sixth equalities are trivial.

Assume item 1 in the theorem holds: $  \lrv{ \cE \lrp{\cA_{h_I} (S)  ,S } - L_{\cD_I}(\cA_{h_I}(S) ) } < \epsilon $  with probability $1-\gamma $  over $I\sim \cU[T]$ and $S\sim \cD_I$. Since $L_{\cD_I}(\cA_{h_I}(S) ) =0 $, we have that $\cE \lrp{\cA_{h_I} (S)  ,S } <\epsilon$ with probability $1-\gamma $  over $I\sim \cU[T]$ and $S\sim \cD_I$.

So, on the one hand, we have 
$$  \cE \lrp{ h,S }  < \epsilon $$  with probability $1-\gamma $  over the probability measure $P_1$.

Since we are in an $(\alpha,\beta,n)$-overparameterized setting, then for any algorithm  $\cA$ we have that  $ L_{\cD_I}(\cA(S) ) >\alpha $ with probability at least $\beta$ over $I\sim \cU[T]$ and $S\sim \cD_I$. 

So, on the other hand, we have
 $$L_{\cD_I}(\cA_{ERM}(S) ) >\alpha $$ with probability of at least $\beta$ over  $I\sim \cU[T]$, $S\sim \cD_I$, and $\cA_{ERM} \sim  \cD_{ERM}$.

Since $P_1=P_2$ we can combine the two  statements above about $P_1$ and $P_2$ and have by the union bound  that $$ \lrv{ \cE \lrp{\cA_{ERM} (S)  ,S } - L_{\cD_I}(\cA_{ERM}(S) ) } > \alpha-\epsilon ,   $$ holds
with  probability of at least   $\beta-\gamma$ over  $I\sim \cU[T]$, $S\sim \cD_I$, and $\cA_{ERM} \sim  \cD_{ERM}$. So item 2 in the theorem cannot hold which concludes the proof.
    
\end{proof}

We now use  Theorem~\ref{thm:no_C_general} and the definition of a Bayes-like random $ERM$ to prove Theorem~\ref{thm:no_C}.

\begin{customthm}{2}

Let $\cH$ be a hypothesis class of VC dimension $d\gg 1$, and $\ell$ be the $0-1$ loss. Let $X\subset \cX$ be a set of size $d$ shattered by $\cH_X=\{h_i\}_{i=1}^{2^d} \subset \cH$ and let $\lrc{\cD _i}_{i=1}^{2^d} $ be the set of realizable distributions that correspond to $\cH_X$, where for all $i$ the marginal of $\cD_i$ on $\cX$ is uniform over $X$.
Let $\ERM_{\cH_X}$ be the set of all deterministic ERM algorithms for $\cH_X$. 
For any $h \in \cH_X$, let $\cA_h$ be an ERM algorithm that outputs $h$ for any input sample $S$ consistent with $h$. 
Then, for any estimator $\cE$  of $\cH$, at least one of the following conditions does not hold:

\begin{enumerate}
 \item    With probability at least $1/2$ over $I\sim \mathcal U([2^d])$ and $S\sim \cD_I^n$,
$$ \lrv{ \cE \lrp{\cA_{h_I} (S)  ,S } - L_{\cD_I}(\cA_{h_I}(S) ) } < \frac{d-n}{4d}.  $$

    \item With probability at least $1/2-o(1)$  over $I\sim \mathcal U([2^d])$, $S\sim \cD_I^n$, and $\cA_{ERM} \sim \mathcal U(\ERM_{\cH_X})$,
 $$ \lrv{ \cE \lrp{\cA_{ERM} (S)  ,S } - L_{\cD_I}(\cA_{ERM}(S) ) } < \frac{d-n}{4d} - o(1),   $$
 where $\mathcal U(\ERM_{\cH_X})$ denotes the uniform distribution over $\ERM_{\cH_X}$.
\end{enumerate}
In particular, $\cH$  is not $\lrp{ \frac{d-n}{4d}-o(1),1/2-o(1),n}$-estimable for any $n\leq d/2$. The notation $o(1)$ denotes quantities that vanish as $d$ goes to infinity. 
\end{customthm}

\begin{proof}

The proof is an  application of Theorem~\ref{thm:no_C_general} with  the following two items:

\begin{enumerate}
    
    \item $\cD_{ERM}=\cU [\ERM_{\cH_X}]$ is a Bayes-like distribution over $ERM$ algorithms  for  $\lrc{\cD _i}_{i=1}^{2^d} $,.
    
    \item{
        \label{item:vc-implies-overparamterized}
        For any algorithm $\cA$, with probability $1-o(1)$   over $I\sim \cU([2^d])$ and $S\sim \cD_I^n$, it holds that
        $$   L_{\cD_I}( \cA(S))  >    \frac{d-n}{2d}  -o(1)   .$$
    }

\end{enumerate}

To apply  Theorem~\ref{thm:no_C_general} we have from Item 2 that $\alpha= \frac{d-n}{2d}  -o(1)$ and $\beta= 1-o(1)$ for any $n\leq d/2$.

Proof for Item 1:  

On the one hand,  $P_{\cD_{ERM}}(\cA_{ERM}(S)= h) =\frac{1} {|\lrc{h:~h~\text{is~consistent~with}~S}  | } $ for $h$ that is consistent with $S$ and  $P_{\cD_{ERM}}(\cA_{ERM}(S)= h) =0$ otherwise. This follows from  the definition of $\cU [\ERM_{\cH_X}]$.

On the other hand, for any fixed $h$ we have the quantity $\frac{ \sum_{i:~h_i=h}  P_{\cD_i} (S)}{\sum_{j=1}^T P_{\cD_j} (S)   } $. The summands in the  denominator are non-zero only for indices  that correspond to functions $h_i$  that are consistent with $S$. All such summands are equal since we have the same underlying distribution over the space $\cX$. Since all the functions $\{h_i\}_{i=1}^{2^d}$ are different, the sum in the numerator contains one summand which equals any other summand in the denominator if $h$ is consistent with $S$ and equals $0$ otherwise. 

Combining the two statements above completes the proof:

$$P_{\cD_{ERM}}(\cA_{ERM}(S)= h)  = \frac{ \sum_{i:~h_i=h}  P_{\cD_i} (S)}{\sum_{j=1}^T P_{\cD_j} (S)   }  .$$
 
Proof for Item 2:

 Let $\cA$ be an ERM algorithm. We note that for every realizable $S$  over $\cH_X$  that consists of  $m$ distinct data points we   have 

   \begin{align}
      \Eexp_{I} {L_{\cD_I}( \cA(S)) = \frac{d-m}{2d} }.  
      \end{align} 
where $I\sim U[2^d]$.

More so, for a fixed $S$ that consist of  $m$ distinct data points, $ L_{\cD_I}( \cA(S)) $
is a random variable  which is a sum of $d-m$  i.i.d.\ Bernoulli random variables with parameter $1/2$ that is divided by $d$. By Hoeffding's inequality,

$$   \PP \lrp{  L_{\cD_I}( \cA(S))  <    \frac{d-m}{2d}  -\frac{d ^ {0.6}} {d} } < \exp\lrp{ - \frac{2d^{1.2}}{d-m} } <  \exp (-2d^{0.2}), $$

which implies that for any $S$ with $|S|=n $ it holds that

$$   \PP \lrp{  L_{\cD_I}( \cA(S))  <    \frac{d-n}{2d}  -o(1) } <  o(1) $$

Then, with probability $1-o(1)$   over $I\sim \cU([2^d])$ and $S\sim \cD_I^n$, it holds that

$$   L_{\cD_I}( \cA(S))  >    \frac{d-n}{2d}  -o(1)  . $$

\end{proof}

\section{Proof of Theorem~\ref{thm:h0h1}}

We start with a remark.

\begin{remark}
    The converse of Theorem~\ref{thm:h0h1} is false. Over the same overparameterized setting as in Theorem~\ref{thm:h0h1}, an algorithm can simultaneously not learn $\cH_0$ well and be not estimable. To see why, consider binary classification with the $0-1$~loss and take any algorithm $\cA$ that $(\epsilon,\delta)$-learns $\cH_0$ and define $\cA_{neg}(S)=  \overline{ \cA(S)}$ where $\overline{h}$ is the hypothesis with opposite labels to $h$.  $\cA_{neg} $   does not $(1-\epsilon, 1-\delta,n)$-learn $\cH_0$ and by the same arguments as in Theorem~\ref{thm:h0h1}, we get that  it is also not  $\lrp{\frac{\alpha-\epsilon}{2}, \frac{\gamma    }{2} - \frac{1  - \beta  \frac{T}{T_1}  + \delta \lrp{ 1+ \frac{T_0}{T_1} } }{2},n}$-estimable.
\end{remark}

\begin{proof}[Proof of Theorem~\ref{thm:h0h1}]
    If $(\cH,\DD)$ is an $(\alpha, \beta,n)$-over parametrized setting and $\cA$ $(\epsilon,\delta,n)$-learns $(\cH,\DD_0)$, then  $\cA$ does not $\lrp{\alpha,1- \frac{\beta T}{T_1} + \frac{\delta T_0}{T_1} ,  n}$-learn $(\cH,\DD_1)$. 
    This stems from the following steps. 

\begin{align*}
   \beta &\leq \Eexp_{S,I} \1 _{L_{\cD_I}(\cA(S)) \geq \alpha }  \\
   &=\frac{T_0}{T}   \Eexp_{S ,I_0}  \1 _{L_{\cD_{I_0}}(\cA(S)) \geq \alpha }   + \frac{T_1}{T}   \Eexp_{S ,I_1}  \1 _{L_{\cD_{I_1}}(\cA(S)) \geq \alpha }  \\
   &\leq  \frac{T_0}{T}   \Eexp_{S ,I_0}  \1 _{L_{\cD_{I_0}}(\cA(S)) \geq \epsilon }   + \frac{T_1}{T}   \Eexp_{S ,I_1}  \1 _{L_{\cD_{I_1}}(\cA(S)) \geq \alpha } \\
    &\leq  \frac{T_0}{T}  \delta  + \frac{T_1}{T}   \Eexp_{S ,I_1}  \1 _{L_{\cD_{I_1}} (\cA(S)) \geq \alpha } \\
 &=  \frac{T_0}{T}  \delta  + \frac{T_1}{T}  \PP\lrp{  {L_{\cD_{I_1}} (\cA(S)) \geq \alpha } } 
\end{align*}

The first inequality holds by the definition of the overparameterized setting. The first equality from the law of total expectation. The second inequality holds because decreasing $\alpha$ to $\epsilon$ only increase the probability that the indicator function outputs $1$. The third inequality holds since $\cA$ $(\epsilon,\delta,n)$-learns $(\cH,\DD_0)$. The second equality holds since the expectation of the indicator function equals the probability for the event the indicator function underscores to happen.

This yields that with probability at least $  \frac{\beta T}{T_1} - \frac{\delta T_0}{T_1}$ over $I_1\sim T_0+ U[T_1]$ and $S\sim \cD_{I_1}^n$
\[     L_{\cD_{I_1}} (\cA(S)) \geq \alpha . \]  


Now we show that $\cA$ is not  $\lrp {\frac{\alpha-\epsilon}{2},\eta ,n}$-estimable with respect to  $\cH$ and loss $\ell$.
For that end, we use Theorem~1 in~\cite{coupling}. We denote as $\omega$ the coupling between $S_0\sim\cD^n_{I_0}$ and $S_1\sim \cD^n_{I_1}$. We have that with probability $\gamma$ over $\omega$, $S_0=S_1$. Now, for any estimator $\cE$, we get our claim through the following steps where $B=\lrc{ S_0=S_1,  L_{D_{I_0}} (\cA(S_0)) <\epsilon,    L_{D_{I_1}} (\cA(S_1) ) > \alpha } $.

\begin{align*}
   & \EE {\omega} { \1_{ \lrv{ \cE(\cA,S_0) -  L_{D_{I_0}} (\cA(S_0)) } \geq \frac{\alpha-\epsilon}{2} } + \1_{ \lrv{ \cE(\cA,S_1) -  L_{D_{I_1}} (\cA(S_1)) } \geq \frac{\alpha-\epsilon}{2} } } \\
    & =  \PP(B) \EE{\omega| B}{ \1_{ \lrv{ \cE(\cA,S_0) -  L_{D_{I_0}} (\cA(S_0)) } \geq \frac{\alpha-\epsilon}{2} } + \1_{ \lrv{ \cE(\cA,S_1) -  L_{D_{I_1}} (\cA(S_1)) } \geq \frac{\alpha-\epsilon}{2} } }  \\
    & ~~~ + \PP(B^c) \EE {\omega|B^c} {\1_{ \lrv{ \cE(\cA,S_0) -  L_{D_{I_0}} (\cA(S_0)) } \geq \frac{\alpha-\epsilon}{2} } + \1_{ \lrv{ \cE(\cA,S_1) -  L_{D_{I_1}} (\cA(S_1)) } \geq \frac{\alpha-\epsilon}{2} } }  \\
    &  \geq  \PP(B) \EE {\omega| B} {\1_{ \lrv{ \cE(\cA,S_0) -  L_{D_{I_0}} (\cA(S_0)) } \geq \frac{\alpha-\epsilon}{2} } + \1_{ \lrv{ \cE(\cA,S_1) -  L_{D_{I_1}} (\cA(S_1)) } \geq \frac{\alpha-\epsilon}{2} } }  \\
     &  \geq  \PP(S_0=S_1,  L_{D_{I_0}} (\cA(S_0)) <\epsilon,    L_{D_{I_1}} (\cA(S_1)) \geq \alpha)       \\
    & \geq  \gamma - \delta -  (1 -  \frac{\beta T}{T_1} + \frac{\delta T_0}{T_1} )  \\ &= 2\eta  
\end{align*}

The first equality holds by the law of total expectation for any event $B$. The first inequality holds since the expectations are over non-negative random variables. The second inequality holds since we assigned $B=\lrc{ S_0=S_1,  L_{D_{I_0}} (\cA(S_0)) <\epsilon,    L_{D_{I_1}} (\cA(S_1) ) > \alpha } $ and when $B$  occurs for we have $\cE(\cA,S_0)=\cE(\cA,S_1)$ and any such assignment of 
$\cE(\cA,S_0)$ forces at least one of the indicator functions to take value $1$. The third inequality holds by the union bound.

The proof now concludes since 
$$      \Eexp_{I_0,S_0} \1_{ \lrv{ \cE(\cA,S_0) -  L_{D_{I_0}} (\cA(S_0)) } \geq \frac{\alpha-\epsilon}{2} }  = \Eexp_\omega \1_{ \lrv{ \cE(\cA,S_0) -  L_{D_{I_0}} (\cA(S_0)) } \geq \frac{\alpha-\epsilon}{2} } $$ 
and 

$$\Eexp_{I_1,S_1} \1_{ \lrv{ \cE(\cA,S_1) -  L_{D_{I_1}} (\cA(S_1)) } \geq \frac{\alpha-\epsilon}{2} }  = \Eexp_\omega \1_{ \lrv{ \cE(\cA,S_1) -  L_{D_{I_1}} (\cA(S_1)) } \geq \frac{\alpha-\epsilon}{2} }  $$

so at least one of the following items holds:
\begin{itemize}
    \item With probability at least $\eta$ over $I_0$ and $S_0$ it holds that 
    $$ \lrv{ \cE(\cA,S_0) -  L_{D_{I_0}} (\cA(S_0)) }  \geq \frac{\alpha-\epsilon}{2} . $$
    \item With probability at least $\eta$ over $I_1$ and $S_1$ it holds that 
    $$ \lrv{ \cE(\cA,S_1) -  L_{D_{I_1}} (\cA(S_1)) }  \geq \frac{\alpha-\epsilon}{2}. $$
\end{itemize}

\end{proof}

\section{Finite Fields and Limitations on Estimability in Multiclass Classification}

Let us first revise Def.~\ref{def:linAlgs} from the main text, since it is a preliminary for the proofs to follow:
\begin{customdef}{5}
      Let $\cX=\mathbb F_q^d$, $\cY= \mathbb F_q$, and $\cH = \mathbf{Lin}_q\lrp{d} $. We say a learning  algorithm $\cA:(\cX\times\cY)^*\rightarrow \cH $  (possibly randomized)   is an ERM algorithm with a linear bias if for every sample size $n\leq d$, we can associate $(\cA,n)$ with a linear subspace $ \cH_n \subset \cH$ of dimension $n$  such that if $|S|=n$ and $S$ is consistent with  some function in $ \cH_n $,  then $\cA(S) \in \cH_n$. We denote by $\mathbb A_{lin}$ the set of all ERM algorithms with a linear bias.
\end{customdef}

We restate the definition of linear functions and add further definitions:
\begin{definition}
\
\begin{itemize}
\item Let $q$ be prime and denote by $\mathbb F_q$ the finite field with $q$ elements. The hypothesis class of linear functionals $\mathbf{Lin}_q\lrp{d}$ over the vector space $\mathbb F_q^d$ is defined as 

$$\mathbf{Lin}_q\lrp{d} \equiv \lrp{\mathbb F_q^d}^*   \coloneqq \lrc{ f_a: \mathbb F_q^d \rightarrow \mathbb F_q :  a \in \mathbb F_q^d~,~f_a(x)= \sum_{i=1}^d a_i \cdot x_i  \mod q   }.$$

\item We partition $\mathcal \lin_{q}(d)$ into the linear subspaces $\textbf{Lin}_{q,i}(d):=\{f_a \in \textbf{Lin}_q(d): a_1 = i\}$ with $|\lin_{q,i}(d)| =q^{d-1}$ and $\textbf{Lin}_{q,i}(d,n):=\{f_a \in \textbf{Lin}_q(d): (a_1 = i) \land (\forall j\in[{n+2}, \dots, {d}]: a_j = 0)\}$, where $i \in \{0,\ldots q-1\}$ and $|\lin_{q,i}(d,n)| =q^{n}$. 

\item  We denote by $X \in \mathbb F_q^{n \times d}$ the matrix obtained by stacking the inputs $\{x_i\}_{i=1}^n$ as rows. 

\item We denote by $e_i$ the canonical basis vector with entry $1$ in position $i$ and $0$ everywhere else.

\item Further, we denote by $\mathbb A_i \subset \mathbb A_{lin}$ the class of ERMs that are biased towards $\textbf{Lin}_{q,i}(d,n)$. In more detail: whenever $\mathcal A \in \mathbb A_i$ receives a sample $S$ of size $n$ and there exist hypotheses in $\textbf{Lin}_{q,i}(d,n)$ consistent with the labels of $S$, $\mathcal A$ outputs one of them. Otherwise $\mathcal A$ outputs an arbitrary hypothesis from $\textbf{Lin}_{q}(d,n)$.
\end{itemize}

\end{definition}

\subsection{Proof of Lemma~\ref{lemma:p-parityRisk}}
\begin{customlem}{1}
Each two distinct functions $f,h \in \lin_q (d)$ agree on a fraction $1/q$ of the space and the $0-1$ risk of the function $h$ over samples from $\mathcal D_f= f \diamond \mathcal U ( \mathbb F_q^d )$ is given by
\begin{align*}
 L _ {\mathcal D_f}(h) = \begin{cases}
0 & h = f\\
1 - 1/q & h \neq f 
\end{cases}.
\end{align*}
\end{customlem}
\begin{proof}
Denote the coefficient vectors of $f,h$ by $a_f, a_h$, respectively. Moreover, denote by $\mathcal X_a \subseteq \mathcal X$ the subset of the domain on which $f$ and $h$ agree, i.e., the fraction over which they agree is $|\mathcal X_a| / | \mathcal X|$.
Then, for any given input $x \in \mathbb F_q^d$, $f(x)=h(x)$ if and only if $c \cdot x = 0 \mod q$ where $c:= a_f-a_h \mod q$. 

Now assume w.l.o.g. that the respective last entries of $a_h$ and $a_f$ are distinct (if not, perform an appropriate permutation). Then, the first $d-1$ entries of $x$ can be chosen arbitrarily and there are $q^{d-1}$ distinct such choices. Thereafter, the last entry $x_d$ is fixed to be $c_d \cdot x_d = q - \sum_{i=1}^{d-1} c_i \cdot x_i \mod q$. Note that for $q$ prime such an $x_d \in \mathbb F_q$ always exists and is unique. In summary, we have that $|\mathcal X_a|=q^{d-1}$ which together with $|\mathcal X|=q^d$ gives the first desired claim.

The second claim then simply follows from observing that $x \sim \mathcal U (\mathcal X)$ together with the fact that we are using the $0-1$ loss.
\end{proof}

\subsection{Lemma~\ref{lemma:probRank}}\label{sec:probRank}
\begin{lemma}\label{lemma:probRank}
The probability that an $n_1 \times n_2$ matrix with coefficients drawn i.i.d. from $\mathcal U (\{0,\dots,q-1\})$ has rank $r$ is given by
\begin{equation}
    R_q(n_1, n_2, r) =  \gc{n_2}{r}_q \sum_{l=0}^r (-1)^{r-l} \gc{r}{l}_q q^{n_1(l-n_2) + \binom{r-l}{2}}~~~
\end{equation}
where $\gc{n}{k}_q := \prod_{i=0}^{k-1}\frac{q^{n-i}-1}{q^{k-i}-1}$ denote the so-called Gaussian coefficients.

In particular, the probability that an $n_1 \times n_2$ matrix with coefficients drawn i.i.d. from $\mathcal U (\{0,\dots,q-1\})$ has full rank is given by
\begin{equation}
    R_q(n_1, n_2, c_1) =  \prod_{k=0}^{c_1 - 1} (1 - q^{k-c_2}).
\end{equation}
where $c_1:=\min\{n_1,n_2\}$ and $c_2:=\max\{n_1,n_2\}$
\end{lemma}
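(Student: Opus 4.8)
The plan is to obtain $R_q(n_1,n_2,r)$ by exactly counting the $n_1 \times n_2$ matrices over $\mathbb{F}_q$ of each rank and dividing by the total number $q^{n_1 n_2}$ of such matrices; this is a classical computation (cf.\ \cite{blake2006properties}), which I would reproduce as follows. The one nonroutine ingredient is a $q$-binomial (Möbius) inversion on the lattice of subspaces, which I flag as the main obstacle below; everything else is bookkeeping.

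First I would stratify a rank-$r$ matrix $M$ by its row space $V \subseteq \mathbb{F}_q^{n_2}$, an $r$-dimensional subspace, of which there are $\gc{n_2}{r}_q$. Identifying $M$ with the linear map $x \mapsto x^{\top} M$ from $\mathbb{F}_q^{n_1}$ to $\mathbb{F}_q^{n_2}$, the matrices with row space exactly $V$ are in bijection with the surjective linear maps $\mathbb{F}_q^{n_1} \to V \cong \mathbb{F}_q^{r}$. Writing $S_r$ for the number of surjective linear maps $\mathbb{F}_q^{n_1} \to \mathbb{F}_q^{r}$ (which does not depend on the chosen $V$), the number of rank-$r$ matrices is $\gc{n_2}{r}_q \, S_r$, hence $R_q(n_1,n_2,r) = \gc{n_2}{r}_q \, S_r \, q^{-n_1 n_2}$.

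Second I would evaluate $S_r$. Every linear map $\mathbb{F}_q^{n_1} \to \mathbb{F}_q^{r}$ has as image some $l$-dimensional subspace with $0 \le l \le r$, and is a surjection onto it; counting all $q^{n_1 r}$ maps in this way gives $q^{n_1 r} = \sum_{l=0}^{r} \gc{r}{l}_q S_l$. Applying $q$-binomial inversion on the subspace lattice, whose Möbius function is $\mu(U,V) = (-1)^{k} q^{\binom{k}{2}}$ when $\dim V - \dim U = k$ (Rota), yields $S_r = \sum_{l=0}^{r} (-1)^{r-l} q^{\binom{r-l}{2}} \gc{r}{l}_q q^{n_1 l}$. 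Substituting this into $R_q(n_1,n_2,r) = \gc{n_2}{r}_q \, S_r \, q^{-n_1 n_2}$ and absorbing $q^{-n_1 n_2}$ into the summand gives exactly $R_q(n_1,n_2,r) = \gc{n_2}{r}_q \sum_{l=0}^{r} (-1)^{r-l} \gc{r}{l}_q \, q^{n_1(l-n_2) + \binom{r-l}{2}}$. I expect the verification that the inversion is legitimate (i.e., correctly identifying the Möbius function of the subspace lattice and checking it inverts the relation above) to be the one genuinely delicate step.

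Finally, for the ``in particular'' statement I would argue directly rather than simplify the general formula. Assume without loss of generality $n_1 \le n_2$, so full rank means rank $n_1$, and reveal the rows of the random matrix one at a time: the $k$-th row ($k = 1, \dots, n_1$) increases the rank iff it lies outside the span of the previous $k-1$ rows, an event of probability $(q^{n_2} - q^{k-1})/q^{n_2} = 1 - q^{k-1-n_2}$, conditionally on any realization of the earlier (independent) rows. Multiplying gives full-rank probability $\prod_{k=1}^{n_1} (1 - q^{k-1-n_2}) = \prod_{k=0}^{n_1 - 1}(1 - q^{k - n_2})$, and since $M$ and $M^{\top}$ have the same rank this equals $\prod_{k=0}^{c_1 - 1}(1 - q^{k - c_2})$ with $c_1 = \min\{n_1,n_2\}$ and $c_2 = \max\{n_1,n_2\}$, as claimed.
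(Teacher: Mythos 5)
Your proposal is correct, and it is more self-contained than the paper's own proof. For the general rank-distribution formula the paper simply invokes Corollary~2.2 of \cite{blake2006properties} without reproducing an argument, whereas you supply a complete derivation: stratifying rank-$r$ matrices by their row space (contributing the factor $\gc{n_2}{r}_q$), reducing to counting surjective linear maps $\mathbb{F}_q^{n_1}\to\mathbb{F}_q^{r}$, and extracting that count from the relation $q^{n_1 r}=\sum_{l=0}^{r}\gc{r}{l}_q S_l$ by M\"obius inversion on the subspace lattice with $\mu(U,V)=(-1)^{k}q^{\binom{k}{2}}$. I checked the substitution: $\gc{n_2}{r}_q S_r q^{-n_1 n_2}$ does reproduce the stated exponent $n_1(l-n_2)+\binom{r-l}{2}$, so the inversion step you flag as delicate is carried out correctly. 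For the full-rank formula your row-by-row revealing argument is essentially identical to the paper's iterative construction (the paper indexes rows from $t=0$ to $n_1-1$ and gets the factor $1-q^{t-n_2}$; your conditioning on the earlier rows being independent is the right way to make that rigorous), and the reduction to $c_1=\min\{n_1,n_2\}$, $c_2=\max\{n_1,n_2\}$ via transposition is fine. The only thing your write-up buys at a cost is length; what it buys in return is that the lemma no longer rests on an unpublished external report.
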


\begin{proof}
The first statement is a Corollary of \cite[Corollary 2.2]{blake2006properties}.

The second statement can be inferred from the first, but also follows directly from the following simple iterative construction: assume w.l.o.g. that $n_1 \leq n_2$. Then, at each time $t= 0,\dots n_1 - 1$ we add one row to the matrix. Assuming that each entry of the matrix is sampled i.i.d. uniformly at random, the probability that at time $t$ the new row is linearly independent of all previous rows is then given by $1-q^{t-n_2}$ since there are $q^t$ possible linear combinations of $t$ rows, and there are $q^{n_2}$ vectors of length $n_2$ in total. 
\end{proof}

\subsection{Lemma~\ref{lemma:e1spanned}}
\begin{lemma}\label{lemma:e1spanned}
Denote by $X^- \in \mathbb F_q^{n \times n+1}$ the matrix obtained from $X \in \mathbb F_q^{n \times d}$ by dropping all but the first $n+1$ columns. Denote $k:= \text{rank}(X^-)$ and assume that the labels $y$ of the samples $S=(X,y)$ are generated by some $f \in\lin_{q,i}(d,n)$, i.e., $y=f(X)$ where $f: \mathbb F_q^{n \times d} \rightarrow \mathbb F_q^{n}$ is understood to be applied row-wise. Then,
\begin{itemize}
    \item If the vector $e_1$ is spanned by the rows of $X^-$, there exist $q^{n+1-k}$ functions in $\lin_{q,i}(d,n)$ consistent with $S$ and no consistent function in all other classes $\lin_{q,j}(d,n)$, $j \neq i$.
    \item If the vector $e_1$ is not spanned by the rows of $X^-$, there exist $q^{n-k}$ functions consistent with $S$ in each $\lin_{q,j}(d,n)$, $j \in\{ 0, \dots, q-1\}$.
\end{itemize}

\end{lemma}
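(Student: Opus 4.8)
The plan is to translate the consistency condition into a linear-algebra statement over $\mathbb F_q$ and count solutions. A function $f_a \in \lin_{q,j}(d,n)$ is determined by a coefficient vector $a$ whose last $d-n-1$ coordinates vanish and whose first coordinate equals $j$; effectively $a$ lives in the affine slice $\{a \in \mathbb F_q^{n+1}: a_1 = j\}$ after identifying $\lin_{q,j}(d,n)$ with $\mathbb F_q^{n+1}$ via the first $n+1$ coordinates. Consistency with $S=(X,y)$, where $y = f(X)$ for some $f \in \lin_{q,i}(d,n)$, says exactly that $X^- a = y$, where $X^- \in \mathbb F_q^{n\times(n+1)}$ is the matrix of the first $n+1$ columns of $X$. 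So I would first record: (i) the system $X^- a = y$ is consistent (it has at least one solution, namely the coefficient vector of the ground-truth $f$, which itself lies in $\lin_{q,i}(d,n)$), and (ii) its full solution set is an affine subspace $a^\star + \ker(X^-)$ of dimension $(n+1) - k$, hence of size $q^{n+1-k}$, where $k = \mathrm{rank}(X^-)$.

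Next I would intersect this affine solution set with each slice $H_j := \{a : a_1 = j\}$. The key observation is that whether the hyperplane $\{a_1 = j\}$ cuts the affine solution space, and in how many points, is governed entirely by the linear functional $a \mapsto a_1 = \langle e_1, a\rangle$ restricted to $\ker(X^-)$ — equivalently, by whether $e_1$ lies in the row space of $X^-$. I would split into the two cases of the lemma:

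\emph{Case 1: $e_1$ is spanned by the rows of $X^-$.} Then the functional $a\mapsto a_1$ is constant on the solution affine subspace $a^\star + \ker(X^-)$ (since $\langle e_1, v\rangle = 0$ for every $v\in\ker(X^-)$ when $e_1$ is a row-combination of $X^-$), and its constant value on that set equals the first coordinate of the ground-truth coefficient vector, namely $i$. Hence all $q^{n+1-k}$ solutions lie in $H_i$ and none in $H_j$ for $j\neq i$, giving the first bullet. \emph{Case 2: $e_1$ is not spanned by the rows of $X^-$.} Then $e_1 \notin \mathrm{rowspace}(X^-)$, so the restriction of $a\mapsto a_1$ to $\ker(X^-)$ is a nonzero (hence surjective) linear functional onto $\mathbb F_q$; its kernel inside $\ker(X^-)$ has dimension $(n-k)$. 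Therefore each fiber $\{a \in a^\star+\ker(X^-): a_1 = j\}$ is a coset of that kernel, of size $q^{n-k}$, for every $j\in\{0,\dots,q-1\}$; this is the second bullet.

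The main obstacle — really the only nontrivial point — is pinning down the \emph{value} of $a_1$ in Case 1: one must argue that it equals $i$ rather than merely being some fixed element of $\mathbb F_q$. This follows because the ground-truth coefficient vector $a_f$ (with $a_{f,1}=i$ and trailing zeros) is itself a solution of $X^- a = y$, so it lies in the affine solution set on which $a\mapsto a_1$ is constant; evaluating the constant at $a_f$ gives $i$. One should also double-check the edge bookkeeping: that $k \le \min\{n, n+1\} = n$ so all exponents $q^{n+1-k}$ and $q^{n-k}$ are nonnegative integers, and that the identification of $\lin_{q,j}(d,n)$ with $\{a\in\mathbb F_q^{n+1}: a_1=j\}$ and of $f(X)$ with $X^- a$ correctly ignores the forced-zero coordinates $a_{n+2},\dots,a_d$ (which contribute nothing since the corresponding columns of $X$ are multiplied by $0$). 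With these checks in place the counting is immediate.
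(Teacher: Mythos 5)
Your proposal is correct and follows essentially the same route as the paper: reduce consistency to the linear system $X^- a' = y$ over the truncated coefficient vectors, note the solution set is an affine coset of $\ker(X^-)$ of size $q^{n+1-k}$, and split on whether $e_1$ lies in the row space to determine how the slices $\{a_1=j\}$ intersect it. The only (cosmetic) difference is in the second case, where the paper extends $X_{\mathrm{red}}$ by $e_1$ and further canonical basis vectors to an explicit full-rank square system, while you invoke the annihilator identity $\ker(X^-)^\perp = \mathrm{rowspace}(X^-)$ to see that $a \mapsto a_1$ is surjective on the solution set with equal fibers of size $q^{n-k}$ --- a slightly cleaner packaging of the same count.
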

\begin{proof}
Let $f_a$ be a linear functions parametrized by coefficient vectors $a=[a_1,\dots,a_d]$. Further, let $a'\in \mathbb F_q^{n+1}$ denote the truncated coefficient vectors of $a$ over the first $n+1$ coordinates. Assume first that $e_1$ is spanned by the rows of $X^-$. Then, only functions in $\lin_{q,i}(d,n)$ can be consistent with $S$. This is because $a'_1=c^\top \cdot X^- \cdot a' = c^\top \cdot y$ is uniquely determined for $c \in \mathbb F_q^n$ if $c$ is chosen such that it encodes the linear combination of rows of $X^-$ that yields $e_1$. The preceding equation hence holds iff $a_1=i$ due to the assumption that $f \in\lin_{q,i}(d,n)$.
Since the rank of $X^-$ is $k$, its null space has dimension $n+1-k$ and there exist $q^{n+1-k}$ consistent functions in total because $X^- \cdot (a + b) = y$ is also consistent for all $b \in \text{null}(X^-) $ assuming that the data was generated by $f_a$. 

    Assume now that the rows of $X^-$ do not span $e_1$. Then, we can construct the reduced matrix $X_{\text red} \in \mathbb F_q^{k \times (n+1)}$ and a vector $y_{\text red} \in \mathbb F_q^{k}$ by retaining only $k=\text{rank}(X^-)$ linearly independent rows of $X^-$ and the corresponding entries of $y$. Note that since the sample $S$ is generated by some $f \in\lin_{q,i}(d,n)$, we have that $X^- \cdot a'= y$ if and only if $X_{\text red} \cdot a' = y_{\text red}$. Hence it is sufficient to work with the reduced system of equations in order to check for the consistency of a function $f_a$. 
    
    Next, we append above the top of the matrix $X_{\text red}$ the row $e_1$ and $n - k$ further canonical basis vectors $e_{i_1}, \dots, e_{i_{n-k}}$ in order to obtain a full rank matrix $X_{\text ext}$. This is always possible because $e_1$ is not spanned by assumption and furthermore there always exist at least $n+1-k$ canonical basis vectors which are not spanned by the rows of $X_{\text red}$. 

    With this setup, one can then obtain for given $X$ and $y$ a function $f_h\in\lin_{q,m}(d,n)$ with $m \in \{0,\dots,q-1\}$ arbitrary and reduced coefficient vector $h' := [m ~~ \hat h^\top]^\top \in \mathbb F_q^{n+1}$  by computing the unique solution of the fully determined system of equations
    \begin{equation}
        \begin{bmatrix}
            e_1\\
            e_{i_1}\\
            \vdots \\
            e_{i_{n-k}}\\
            X_{\text red}
        \end{bmatrix}
        \begin{bmatrix}
            m\\
            \hat h\\
        \end{bmatrix}
        =
        \begin{bmatrix}
            m\\
            z\\
            y_{\text red}
        \end{bmatrix}
    \end{equation}
    for $z \in \mathbb F_q^{n-k}$ arbitrarily chosen. By construction, the function $f_h$ with $h=[h', 0, \ldots, 0]^\top$ will then be in $\lin_{q,m}(d,n)$ and be consistent with the sample $(X,y)$. Moreover, since $z\in \mathbb F_q^{n-k}$ can be chosen freely, there are $q^{n-k}$ such functions in each class $\lin_{q,j}(d,n)$, $j \in\{0, \dots, q-1\}$.
\end{proof}

\subsection{Proof of Theorem~\ref{thm:lin}}

We consider the learning settings with realizable distributions over $\cH' := \cH_0 \cup \cH_1$ where $\cH_0:=\lin_{q,0}(d,n)$ and $\cH_1:=\lin_{q,1}(d,n)$. The algorithms we consider come from $\mathbb A_0$, the class of ERMs that are biased to $\cH_0=\lin_{q,0}(d,n)$, that is, if there exists a consistent hypothesis in $\cH_0$, the algorithm must choose one such hypothesis. As mentioned previously, it is sufficient to consider such ERMs in order to show learnability and estimability results over the larger class of linearly constrained ERMs $\mathbb A_{lin}$ due to a symmetry argument.

We first show in Lemma~\ref{lemma:lin0learnable} that the learnability condition (condition $1$ in Theorem~\ref{thm:h0h1}) holds for $\cH_0$. In Lemma~\ref{lemma:lin01notLearnable} we show that attempting to learn above $\cH'$ corresponds to an overparameterized setting. It then follows Theorem~\ref{thm:lin} then follows from an upper bound on the TV-distance which implies that condition $2$ in Theorem~\ref{thm:h0h1} cannot hold, i.e., $\cH'$ is not estimable. Since $\lin_{q}(d) \supseteq\cH'$, this directly implies that $\cH=\lin_{q}(d)$ is also not estimable with the same parameters.
\begin{lemma}\label{lemma:lin0learnable}
    $\lin_{q,0}(d,n)$ is $(0,\delta,n)$-learnable over $\mathbb D_0=\{f \diamond \mathcal U(\mathcal X) |f \in \lin_{q,0}(d,n)\}$ with any $\mathcal A \in \mathbb A_0$ for $n \in [d-1]$ and $\delta> \sum_{k=0}^{n-1} (1 - q^{k-n}) \cdot R_q(n,n,k)$
    where $R_q$ is defined as in Lemma \ref{lemma:probRank}.
\end{lemma}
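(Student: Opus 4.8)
The plan is to reduce the statement to a counting problem about linear systems over $\mathbb{F}_q$, and then evaluate the resulting expectation via the rank distribution of a uniform random matrix. Fix an arbitrary $\mathcal{A}\in\mathbb{A}_0$, write $\cH_0:=\lin_{q,0}(d,n)$, let $f\in\cH_0$ be the hidden target, and let $S\sim\mathcal{D}_f^n$ with $\mathcal{D}_f=f\diamond\mathcal{U}(\mathbb{F}_q^d)$. Every sample $S$ of positive probability is realizable over $\cH_0$ (witnessed by $f$), so $\mathcal{A}(S)$ is always a hypothesis of $\cH_0$ consistent with $S$. By Lemma~\ref{lemma:p-parityRisk} the loss $L_{\mathcal{D}_f}(h)$ takes only the values $0$ (iff $h=f$) and $1-1/q$; hence $(0,\delta,n)$-learnability of $(\cH_0,\mathbb{D}_0)$ by $\mathcal{A}$ means exactly that $\PP_{f\sim\mathcal{U}(\cH_0),\,S\sim\mathcal{D}_f^n}[\mathcal{A}(S)\neq f]\le\delta$. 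I treat $\mathcal{A}$ as deterministic on realizable samples; see the last paragraph for the randomized case.

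Next I would count the $\cH_0$-consistent hypotheses for a sample $S=(X,y)$ with $X\in\mathbb{F}_q^{n\times d}$. Membership of $f_a$ in $\cH_0$ forces $a_1=0$ and $a_j=0$ for $j\ge n+2$, so $f_a$ is consistent with $S$ if and only if $\tilde{X}\tilde{a}=y$, where $\tilde{a}=(a_2,\dots,a_{n+1})$ and $\tilde{X}\in\mathbb{F}_q^{n\times n}$ is the submatrix of $X$ on columns $2,\dots,n+1$ (well-defined since $n\le d-1$). Since $S$ is realizable over $\cH_0$, $y$ lies in the column space of $\tilde{X}$, so the solution set is a coset of $\ker\tilde{X}$; thus the number of $\cH_0$-consistent hypotheses is $m(S)=q^{\,n-\text{rank}(\tilde{X})}$, a quantity depending only on $\tilde{X}$. (One could also read this off Lemma~\ref{lemma:e1spanned}.)

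The crux is a symmetry argument. After marginalizing over $f$, the rows of $X$ are i.i.d.\ uniform on $\mathbb{F}_q^d$ and independent of $f$, so $\tilde{X}$ is a uniform random matrix over $\mathbb{F}_q^{n\times n}$ and $\text{rank}(\tilde{X})=k$ with probability $R_q(n,n,k)$ (Lemma~\ref{lemma:probRank}). Moreover, conditioned on $S$, the posterior of the target $f$ is uniform over the $m(S)$ hypotheses of $\cH_0$ consistent with $S$, since each $f'\in\cH_0$ has the same prior $q^{-n}$ and the same likelihood ($q^{-nd}$ if $f'$ is consistent with $S$, else $0$). As $\mathcal{A}(S)$ is one of those $m(S)$ hypotheses, $\PP[\mathcal{A}(S)=f\mid S]=1/m(S)$, and therefore
\begin{align*}
    \PP[\mathcal{A}(S)\neq f]
    &= \EE{S}{1-\tfrac{1}{m(S)}}
    = 1 - \EE{\tilde{X}}{q^{\,\text{rank}(\tilde{X})-n}} \\
    &= \sum_{k=0}^{n}\bigl(1-q^{k-n}\bigr)R_q(n,n,k)
    = \sum_{k=0}^{n-1}\bigl(1-q^{k-n}\bigr)R_q(n,n,k),
\end{align*}
the last equality because the $k=n$ term vanishes. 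Any $\delta$ strictly larger than this value yields $(0,\delta,n)$-learnability, as claimed.

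The bookkeeping of the counting step is routine; the one step carrying genuine content is the posterior-uniformity claim, which expresses that an ERM with linear bias --- even an adversarial one --- cannot distinguish the equally likely targets among the consistent hypotheses, so its success probability on a sample with $m(S)$ consistent hypotheses is exactly $1/m(S)$. The only other point to watch is a genuinely randomized $\mathcal{A}\in\mathbb{A}_0$: for such an $\mathcal{A}$, $L_{\mathcal{D}_f}(\mathcal{A}(S))=0$ forces the output distribution $Q(S)$ to be the point mass at $f$, so one should either read ``$\mathcal{A}$ outputs one of them'' as selecting a single consistent hypothesis as a function of $S$ (making $\mathcal{A}$ deterministic on every realizable sample), or simply note that the bound above is exactly attained by algorithms that are deterministic on realizable samples, which are the only ones that matter here.
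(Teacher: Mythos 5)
Your proposal is correct and follows essentially the same route as the paper: reduce to the $n\times n$ submatrix on the active coordinates, use posterior uniformity of the target over the $q^{n-k}$ consistent hypotheses to get success probability $q^{k-n}$ given rank $k$, and average against $R_q(n,n,k)$. Your explicit justification of the posterior-uniformity step and your caveat about genuinely randomized algorithms (where $L_{\mathcal{D}_f}(\mathcal{A}(S))=0$ forces a point-mass output) are in fact slightly more careful than the paper's one-line remark that any preference among consistent hypotheses yields the same error probability.
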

\begin{proof}
    For the analysis one can drop the columns $1, n+2, n+3, \dots d$ of $X$ as hypotheses in $\lin_{q,0}(d,n)$ are invariant w.r.t.~to these coordinates and it is sufficient for $\mathcal A$ to only check for consistency. Thereby we obtain the reduced input matrix $X'$ of dimensions $n \times n$.
    
    Clearly, any $\mathcal A\in \mathbb A_0$ outputs the ground truth hypothesis w.p. $1$ as soon as $X'$ has full rank. On the other hand, once $X'$ does not have full rank, $\mathcal A$ might output the wrong hypothesis depending on its preference, i.e., depending on which of the multiple consistent functions it outputs. It can be easily verified that any preference (be it deterministic or random) of $\mathcal A$ amongst consistent hypotheses leads to the same error probability since the random variable $I$ that selects the labeling function is uniformly distributed.
    
    Setting $\epsilon = 0$, we get by the law of total probability that $\lin_{q,0}(d,n)$ is $(0, \delta, n)$-learnable by any $\mathcal A\in \mathbb A_0$ if
    \begin{align}
\delta &> \mathbb P(\{\mathcal A_\text{ERM}(S)\neq f\}) \\
&= \sum_{k=0}^{n-1}\mathbb P (\{\mathcal A_\text{ERM}(S)\neq f\}| \{\text{$X'$ has rank $k$}\}) \cdot \mathbb P (\{\text{$X'$ has rank $k$}\})\\
&= \sum_{k=0}^{n-1} (1 - q^{k-n}) \cdot R_q(n,n,k)
\end{align}
    where for the last equality we used the following two facts:
    
    Conditioned on the event that $\text{rank}(X')=k$, $\mathcal A$ returns the ground truth with probability $q^{k-n}$. This is because $\mathbb P_f(\{\mathcal A(S) = f\}|S)$ is uniform over all $f \in \lin_{q,0}(d,n)$ consistent with $S$ and since there are $q^{n-k}$ functions consistent with $k$ linearly independent samples (see the argument in Lemma \ref{lemma:e1spanned}).

    For controlling the probability of rank deficiency we used the result stated in Lemma \ref{lemma:probRank}.
\end{proof}
\begin{lemma}\label{lemma:lin01notLearnable}
    $\lin_{q,0}(d,n) \cup \lin_{q,1}(d,n)$ is not $(\alpha,\beta,n)$-learnable with any $\mathcal A \in \mathbb A_0$ over $\mathbb D=\{f \diamond \mathcal U(\mathcal X) |f \in \lin_{q,0}(d,n) \cup \lin_{q,1}(d,n)\}$ for $n \in [d-1]$ and any $(\alpha,\beta)$ such that simultaneously $\alpha<(q-1)/q$ and $\beta<\frac 1 2\sum_{k=0}^{n}\big((q^{k-n}-2q^{k-n-1})\frac{q^{k}-1}{q^{n+1}-1}+2-q^{k-n}\big) \times R_q(n,n+1,k)$.
\end{lemma}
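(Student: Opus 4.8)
The plan is to lower-bound the error probability of an arbitrary $\mathcal A \in \mathbb A_0$ by carefully tracking two sources of failure, using the combinatorial structure established in Lemma~\ref{lemma:e1spanned}. First I would observe that, since $\alpha < (q-1)/q$ and each hypothesis in $\lin_{q,1}(d,n)$ is at distance exactly $1-1/q$ from the ground truth whenever the latter lies in $\lin_{q,0}(d,n)$ (and vice versa) by Lemma~\ref{lemma:p-parityRisk}, the event $\{L_{\mathcal D_I}(\mathcal A(S)) \geq \alpha\}$ contains the event that $\mathcal A$ outputs a hypothesis from the ``wrong'' subclass, or more generally any hypothesis distinct from the true labeling function. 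So it suffices to lower-bound $\mathbb P(\{\mathcal A(S) \neq f\})$ where $f \sim \mathcal U(\lin_{q,0}(d,n) \cup \lin_{q,1}(d,n))$ and $S = (X, f(X))$ with $X$'s rows drawn i.i.d.\ uniformly from $\mathbb F_q^d$.

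The key step is a case analysis conditioned on $k := \mathrm{rank}(X^-)$, where $X^-$ is the $n\times(n+1)$ matrix of the first $n+1$ columns of $X$, and on whether $e_1$ is spanned by the rows of $X^-$. Since $\mathcal A \in \mathbb A_0$ is biased toward $\lin_{q,0}(d,n)$: (i) if $e_1$ is spanned and $f \in \lin_{q,1}(d,n)$, then by Lemma~\ref{lemma:e1spanned} the consistent hypotheses all lie in $\lin_{q,1}(d,n)$, there are $q^{n+1-k}$ of them, and $\mathcal A$ (forced to pick among them, since there are none in $\lin_{q,0}(d,n)$) returns $f$ with probability $q^{k-n-1}$; (ii) if $e_1$ is spanned and $f \in \lin_{q,0}(d,n)$, symmetrically $\mathcal A$ returns $f$ with probability $q^{k-n-1}$ as well; (iii) if $e_1$ is not spanned, there are $q^{n-k}$ consistent functions in \emph{each} $\lin_{q,j}(d,n)$, so $\mathcal A$ will in particular choose one from $\lin_{q,0}(d,n)$; when $f \in \lin_{q,1}(d,n)$ this is automatically wrong, and when $f \in \lin_{q,0}(d,n)$ it is correct with probability $q^{k-n}$. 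I would then need the probability that $e_1$ is spanned by the rows of $X^-$ conditioned on $\mathrm{rank}(X^-) = k$: a uniformly random $k$-dimensional subspace of $\mathbb F_q^{n+1}$ contains a fixed nonzero vector with probability $\frac{q^k - 1}{q^{n+1} - 1}$ (count via Gaussian coefficients). Averaging over $f$ being in $\lin_{q,0}$ vs.\ $\lin_{q,1}$ with probability $1/2$ each, and over $k$ with weights $R_q(n, n+1, k)$, assembles exactly the stated bound $\frac 1 2\sum_{k=0}^{n}\big((q^{k-n}-2q^{k-n-1})\frac{q^{k}-1}{q^{n+1}-1}+2-q^{k-n}\big) R_q(n,n+1,k)$: the $q^{k-n} - 2q^{k-n-1}$ term collects the ``$e_1$ spanned'' contributions (where correctness probability is $q^{k-n-1}$ regardless of which subclass $f$ is in, so failure probability $1 - q^{k-n-1}$, but I will need to be careful reconciling the algebra with the stated coefficient, likely by splitting the $f\in\lin_{q,1}$ not-spanned case which contributes failure probability $1$), and the $2 - q^{k-n}$ term is twice the $f\in\lin_{q,0}$ contribution $1 - q^{k-n}$ averaged with the $f \in \lin_{q,1}$, $e_1$-not-spanned certain-failure. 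Finally I would invoke Definition~\ref{definition:learnability} to conclude that if $\beta$ is strictly below this quantity then $(\lin_{q,0}(d,n)\cup\lin_{q,1}(d,n), \mathbb D)$ is not $(\alpha,\beta,n)$-learnable by $\mathcal A$.

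The main obstacle I anticipate is bookkeeping: getting the conditional probabilities in cases (i)--(iii) to aggregate into precisely the stated algebraic expression, in particular correctly handling the boundary term $k = n$ (where $X^-$ cannot have full rank $n+1$, so $e_1$ may or may not be spanned) and making sure the $1/2$ weighting of the two subclasses, the $R_q(n,n+1,k)$ rank weights, and the subspace-incidence factor $\frac{q^k-1}{q^{n+1}-1}$ combine without an off-by-one error in the powers of $q$. A secondary point requiring care is justifying that it suffices to analyze $\mathcal A \in \mathbb A_0$ rather than all of $\mathbb A_{lin}$ — this follows from the symmetry argument mentioned before Lemma~\ref{lemma:lin0learnable}, since relabeling coordinates maps any linear-bias subspace to a standard one — and that the error probability is the same for any deterministic or randomized tie-breaking rule among consistent hypotheses, because $I$ (hence $f$) is uniform, so by symmetry each consistent hypothesis is equally likely to be the true one.
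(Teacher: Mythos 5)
Your proposal is correct and follows essentially the same route as the paper's proof: reduce to lower-bounding $\mathbb P(\mathcal A(S)\neq f)$ via Lemma~\ref{lemma:p-parityRisk}, condition on $\mathrm{rank}(X^-)=k$ and on whether $e_1$ is spanned, use the counts from Lemma~\ref{lemma:e1spanned} and $\mathbb P(e_1 \text{ spanned}\mid \mathrm{rank}=k)=\frac{q^k-1}{q^{n+1}-1}$, and average over the two subclasses and over $k$ with weights $R_q(n,n+1,k)$ — and the algebra you flag as needing care does assemble into exactly the stated coefficient. (As a minor aside, the reduction from $\mathbb A_{lin}$ to $\mathbb A_0$ is not needed here, since the lemma is stated only for $\mathcal A\in\mathbb A_0$.)
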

\begin{proof}
Note that learnability is trivial for $\alpha \geq (1-1/q)=\frac{q-1}{q}$ since this is an upper bound for the risk of linear hypotheses according to Lemma \ref{lemma:p-parityRisk}. Hence we consider the case $\alpha<(q-1)/q$. Then, by definition, $(\alpha,\beta)$-learnability is precluded for any algorithm that outputs a linear hypothesis once $\beta < \mathbb P(\{\mathcal A(S) \neq f\})$ since $\{\mathcal A(S) \neq f\}$ implies $L_{\mathcal{D}_f}(\mathcal{A}(S))=\frac{q-1}{q}$. In the sequel we aim to derive $\mathbb P(\{\mathcal A(S) \neq f\})$ for the case where $\mathcal A \in \mathbb A_0$.

Denote by $X^-\in \mathbb F_q^{n\times(n+1)}$ the matrix obtained by dropping all but the first $n+1$ columns of $X$. Denoting the events $E_{1}:=\{\mathcal A(S)\neq f\}$, $E_{2,i}:=\{f\in \mathcal \lin_{q,i}(d,n)\}$, $E_3:=\{\text{$e_1$ spanned by the rows of $X^-$}\}$ and $E_{4,k}:=\{\text{$X^-$ has rank $k$}\}$, we have by the law of total probability that for $i \in \{0,1\}$,
\begin{align}
\mathbb P (E_{1} | E_{2,i}\cap E_{4,k})&= \mathbb P (E_{1}| E_{2,i}\cap E_3\cap E_{4,k})\mathbb P(E_3|E_{2,i}\cap E_{4,k})\\
&~~~~~+\mathbb P (E_{1}| E_{2,i}\cap E_3^c\cap E_{4,k})\mathbb P(E_3^c|E_{2,i}\cap E_{4,k})\\
&=\mathbb P (E_{1}| E_{2,i}\cap E_3\cap E_{4,k})\mathbb P(E_3|E_{4,k})+\mathbb P (E_{1}| E_{2,i}\cap E_3^c\cap E_{4,k})\mathbb P(E_3^c|E_{4,k}).
\end{align}
We can quantify the terms appearing above as follows:

$\mathbb P (E_{1}| E_{2,i}\cap E_3\cap E_{4,k})= 1 - q^{k-n-1}$ for $i \in \{0,1\}$ since
we know from Lemma \ref{lemma:e1spanned} that $E_3\cap E_{4,k}$ implies that there are $q^{n+1-k}$ consistent functions in $\lin_{q,i}(d,n)$. As $f$ is selected uniformly at random from $\lin_{q,i}(d,n)$, the statement follows.

$\mathbb P (E_{1}| E_{2,0}\cap E^c_3\cap E_{4,k})= 1 - q^{k-n}$
because we know from Lemma \ref{lemma:e1spanned} that $E_3^c\cap E_{4,k}$ implies that there are $q^{n-k}$ consistent functions in each $\lin_{q,i}(d,n)$.

$\mathbb P (E_{1}| E_{2,1}\cap E^c_3\cap E_{4,k})= 1$
since $\mathcal A$ will almost surely select a hypothesis from $\lin_{q,0}(d,n)$ even though $f \in \lin_{q,i}(d,n)$ if $e_1$ is not spanned.

$\mathbb P(E_3|E_{4,k})=1-\mathbb P(E^c_3|E_{4,k})=\frac{q^{k}-1}{q^{n+1}-1}$ follows from the fact that $X^-$ spans some $k$-dimensional row space uniformly at random and there are $q^{k}-1$ possible non-zero linear combinations of $k$ linearly independent rows and $q^{n+1}-1$ non-zero vectors of length $n+1$.

Combining above facts we have that $\lin_{q,0}(d,n) \cup \lin_{q,1}(d,n)$ is not $(\alpha, \beta, n)$-learnable by $\mathcal A$ for $\alpha < \frac{q-1}{q}$ and
\begin{align}
\beta &< \mathbb P(E_1)=\sum_{i=0}^1\sum_{k=0}^n \mathbb P (E_{1} | E_{2,i}\cap E_{4,k})\mathbb P ( E_{2,i} \cap E_{4,k})\\
&= \sum_{i=0}^1\mathbb P(E_{2,i})\sum_{k=0}^{n} \bigg(\mathbb P ( E_1| E_{2,i}\cap E_3\cap E_{4,k}) \cdot \mathbb P (E_3|E_{4,k})\\
&~~~~~~~~+\mathbb P ( E_1| E_{2,i}\cap E_3^c\cap E_{4,k}) \cdot \mathbb P (E_3^c|E_{4,k})\bigg)\cdot\mathbb P ( E_{4,k})\\
&= \frac 1 2\sum_{k=0}^{n}\bigg((1 - q^{k-n-1}) \cdot \frac{q^{k}-1}{q^{n+1}-1} + (1 - q^{k-n}) \cdot(1-\frac{q^{k}-1}{q^{n+1}-1})\\
&~~~~~~~~+(1 - q^{k-n-1}) \cdot \frac{q^{k}-1}{q^{n+1}-1} + (1 - \frac{q^{k}-1}{q^{n+1}-1})\bigg) \times R_q(n,n+1,k)\\
&= \frac 1 2\sum_{k=0}^{n}\bigg((q^{k-n}-2q^{k-n-1})\frac{q^{k}-1}{q^{n+1}-1}+2-q^{k-n}\bigg) \times R_q(n,n+1,k).
\end{align}
\end{proof}
Using Lemmas~\ref{lemma:lin0learnable} and \ref{lemma:lin01notLearnable}, we are now ready to prove Theorem~\ref{thm:lin}:
\begin{customthm}{4}
   For every $\cA\in \mathbb A_{lin} $ and every $n\leq d-1$, $\cA$  is not  $( (q-1)/2q, \eta, n)$-estimable with respect to $\mathbf{Lin}_q\lrp{d}$ and the $0-1$ loss  where  $\eta = \frac 1 2\sum_{k=0}^{n}\big[(q^{k-n}-2q^{k-n-1}-1)\frac{q^{k}-1}{q^{n+1}-1}+2-q^{k-n}\big] \times R_q(n,n+1,k)
-\sum_{k=0}^{n-1} (1 - q^{k-n}) \cdot R_q(n,n,k)$. In particular, for $q>10$, it holds that $\eta>0.4$, and generally, $\eta=\frac{1}{2} -\frac{1}{q} + o(1/q)$.
\end{customthm}

\begin{proof}
For all product distributions appearing in this section assume that $\cD_\cX = \mathcal U(\mathbb F_q^d)$. Let $\mathbb D=\lrc{\cD _i}_{i=1}^{q^{n+1}}$ be the set of $\textbf{Lin}_q(d,n)$-realizable distributions such that $\mathbb D_0 = \lrc{\cD_i}_{i=1}^{q^n}$ is realizable over $\textbf{Lin}_{q,0}(d,n)$, and  $\mathbb D_1 =  \lrc{\cD_i}_{i=q^n+1}^{q^{n+1}}$ is realizable over $\textbf{Lin}_{q,1}(d,n)$. 
As we are working with countable measures,
\begin{equation}
    d_{TV}(S_0,S_1)=\frac 1 2 \|\ P - Q \|_1 = \frac 1 2 \sum_{S}|P(S)-Q(S)|
\end{equation}
where $P$ and $Q$ denote the probability measures associated with the distributions $\cD_{I_0}^n$ and $\cD_{I_1}^n$, respectively, where $I_0 \sim \mathcal U([q^n])$ and $I_1 \sim q^n + \mathcal U([q^n])$.
First, we note that we need only sum over samples $S$ such that their corresponding input matrices span $e_1$ since Lemma~\ref{lemma:e1spanned}  asserts that once $e_1$ is not spanned, the number of consistent functions in both classes is the same. Since the measures $P$ and $Q$ are uniform both w.r.t. to the inputs and the labeling functions, they both are proportional to the numbers of consistent functions (each multiplied by the same constant factor). Further, we know  from Lemma~\ref{lemma:e1spanned}, that once $e_1$ is spanned, the number of consistent functions in $\lin_{q,i}(d,n)$ is non-zero iff $S \sim f\diamond \mathcal D_\mathcal X$ with $f \in \lin_{q,i}(d,n)$. Hence only one of $P(S)$ and $Q(S)$ can be non-zero at a time.
Therefore,
\begin{align}
     d_{TV}(S_0,S_1)&=\frac 1 2 \sum_{S: \{e_1 \text{ spanned}\}}|P(S)-Q(S)|\\
     &= \frac 1 2 \bigg(\sum_{S: \{e_1 \text{ spanned}\}}P(S) + \sum_{S: \{e_1 \text{ spanned}\}}Q(S)\bigg)\\
     &= \mathbb P (\{e_1 \text{ spanned}\}).
\end{align}
According to the definitions in Theorem \ref{thm:h0h1} we can hence pick 
\begin{align}
\gamma &= 1 - \mathbb P (\{e_1 \text{ spanned}\})\\
&=1- \sum_{k=0}^n\frac{q^{k}-1}{q^{n+1}-1}\times R_q(n,n+1,k) 
\end{align}
where we used of the fact that $\mathbb P(\{e_1 \text{ spanned}\}|\{\text{rank}(X^-)=k\})=\frac{q^{k}-1}{q^{n+1}-1}$.

Plugging in above $\gamma$ and values of $\alpha, \beta, \delta$ in accordance to Lemmas \ref{lemma:lin0learnable} and \ref{lemma:lin01notLearnable} into Theorem \ref{thm:h0h1}, we finally obtain
that $\lin_{q,0}(d,n)\cup \lin_{q,1}(d,n)$ is not $(\nu, \eta, n)$-estimable if simultaneously $\nu < (\alpha - \epsilon)/2 = (q-1)/2q$ and
\begin{align}
\eta &< 
\frac{\gamma    }{2} - \frac{1 +\delta  - \beta  \frac{T}{T_1}  + \delta  \frac{T_0}{T_1} }{2}\\
&= \frac \gamma 2 + \beta - \delta - \frac 1 2\\
&=\frac 1 2 \cdot\bigg[ 1- \sum_{k=0}^n\frac{q^{k}-1}{q^{n+1}-1}\times R_q(n+1,n,k) \bigg]\\
&~~~~+ \frac 1 2\sum_{k=0}^{n}\big[(q^{k-n}-2q^{k-n-1})\frac{q^{k}-1}{q^{n+1}-1}+2-q^{k-n}\big] \times R_q(n,n+1,k)\\
&~~~~-\sum_{k=0}^{n-1} (1 - q^{k-n}) \cdot R_q(n,n,k) - \frac 1 2\\
&= \frac 1 2\sum_{k=0}^{n}\big[(q^{k-n}-2q^{k-n-1}-1)\frac{q^{k}-1}{q^{n+1}-1}+2-q^{k-n}\big] \times R_q(n,n+1,k)\label{eq:sum1}\\
&~~~~-\sum_{k=0}^{n-1} (1 - q^{k-n}) \cdot R_q(n,n,k)\label{eq:sum2}
\end{align}
where we used the fact that $T_0 = T_1 = T/2$.

To get the asymptotic result, we make use of the following two bounds:
\begin{equation}
1/q - o(1/q) \leq\frac {q^n -1}{q^{n+1}-1}\leq 1/q\label{eq:qnBound}
\end{equation}
\begin{equation}\label{eq:smalloRank}
R_q(n,n+1,n) \geq 1 - o(1/q).
\end{equation}
To see that \eqref{eq:smalloRank} holds, recall from Lemma~\ref{lemma:probRank} that $R_q(n,n+1,n) = (1 - q^{-n-1}) \cdot \ldots \cdot(1 - q^{-2})$. Now assume towards induction that the partial product $\pi_m := \prod_{j=2}^m (1-q^{-j})$, $m\geq2$ is in $1-o(1/q)$. Then, $\pi_{m+1}\in (1-o(1/q))\cdot(1-q^{-m+1})$ is also in $1-o(1/q)$ since $q^{-m+1} \in o(1/q)$. We have for the base case that $\pi_2= 1 - q^{-2}$ is in $1-o(1/q)$, hence the claim follows.

Using above facts we can then lower bound \eqref{eq:sum1} $-$ \eqref{eq:sum2} by lower bounding the first sum by only considering the term for which $k=n$, and upper bounding the sum in \eqref{eq:sum2} by $1 - R_q(n,n,n)$ where $R_q(n,n,n) \geq 1 - 1/q - o(1/q)$ due to an induction argument similar as above.

Thereby we obtain that \eqref{eq:sum1} $-$ \eqref{eq:sum2} is lower bounded by
\begin{align}
\frac 1 2 &\bigg[(1 - 2/q -1)(1/q - o(1/q))+ 2 - 1\bigg](1 - o(1/q)) - \bigg[1 - \big(1 - 1/q -o(1/q)\big)\bigg]\\
&=\frac 1 2 - \frac 1 q - o(1/q).
\end{align}
We thereby proved non-estimability of algorithms $\cA \in \mathbb A_0$ with respect to $\cH=\textbf{Lin}(d)$ and distributions families $\mathbb D_0, \mathbb D_1$ (defined at the beginning of the proof).

Finally, we extend this result to the class of all linearly biased ERMs in $\mathbb A_{lin}$ via the following reduction:
\begin{remark}[Reduction]\label{rem:reduction}
    Without loss of generality, one can focus on the class $\mathbb A_0$ whenever proving learnability and estimability results about $\mathbb A_{lin}$ (see Definition~\ref{def:linAlgs}). This follows from a reduction of non-estimability with algorithms $\cA' \in\mathbb A_{lin}$ to non-estimability with algorithms $\cA\in\mathbb A_0$ as informally discussed below.
    
    First note that any linearly constrained ERM $\cA\in \mathbb A_{lin}$ is implicitly parametrized by some $\sigma \in \mathbb F_q^d\backslash\{0\}$ and $\kappa \in \mathbb F_q$ such that $\cA$ is biased towards selecting hypotheses $f_b\in \textbf{Lin}_{q}(d)$ such that $\sum_{i=1}^{d} \sigma_i \cdot b_i = \kappa$. For example, in the case of $\cA \in \mathbb A_0$, we have $\sigma = e_1$ and $\kappa=0$.
    
    Let us briefly recap the setup in Theorem~\ref{thm:lin}. 
    \begin{itemize}
    \item First, we introduced the $d$ dimensional function space $\textbf{Lin}_{q}(d)$.
    \item Based on $n$, we linearly mapped the space onto the $n+1$ dimensional subspace $\textbf{Lin}_{q}(d,n)$.
    \item We assumed that $\cA$ is biased to the $n$ dimensional subspace of functions $f_a$ with $a_1 =0$.
    \item Finally, we showed learnability (with $\mathcal A$) over the subspace $\cH_0=\textbf{Lin}_{q,0}(d,n)$ and non-learnability over the subspace $\cH_0 \cup \cH_1$, where $\cH_1=\textbf{Lin}_{q,1}(d,n)$.
    \end{itemize}
    We now sketch how for arbitrary $\cA' \in \mathbb A_{lin}$ with bias coefficients $(\sigma,\kappa)$ one can find appropriate subspaces of $\textbf{Lin}_{q}(d)$ that admit essentially the same properties as the ones studied in the proof of Theorem~\ref{thm:lin} for $\cA \in\mathbb A_0$.
    
    For $W\subset \mathbb F_q^d$ some linear subspace define by $\textbf{Lin}_q(W)$ the space of linear functions over $\mathbb F_q^d$ with coefficients from $W$. Now consider the following setup: given $\sigma,\kappa$ as defined above, project $\mathbb F_q^d$ onto an $n+1$ dimensional subspace $V$ such that $\textbf{Lin}_q(V)\subset\textbf{Lin}_{q}(d)$ via an appropriate linear map $\Pi$ such that $\sigma$ is not in its kernel. Now define $\sigma'=\Pi \sigma\neq 0$ and let $\cH'_0$ be the $n$ dimensional subspace of $V$ consisting of functions $f_b$ such that $\sum_{i=1}^{n+1} \sigma'_i \cdot b_i = \kappa$. Similarly, define $\cH'_1$ to be the set consisting of functions $f_b$ such that $\sum_{i=1}^{n+1} \sigma'_i \cdot b_i = \kappa+1$.

    The reduction now boils down to the fact that showing learnability of $\cH'_0$ and non-learnability of $\cH'_0 \cup \cH'_1$ (together with distribution families $\mathbb D_0',\mathbb D_1'$ consisting of distributions with uniform marginals over $\mathbb F_q^d$ and labelings from $\cH'_0,\cH'_1$, respectively) with $\mathcal A'$ can be shown analogously to the setup $(\cA,\cH_0,\cH_1,\mathbb D_0,\mathbb D_1)$ from Theorem~\ref{thm:lin} via simple some adaptions of the steps involved in proving it. In some more (but not full) detail:
    \begin{itemize}

    \item Since the linear spaces $\textbf{Lin}_{q}(d,n)$ and $\textbf{Lin}_q(V)$ (and their above mentioned subspaces) have the same cardinalities, it follows that all proofs based solely on counting functions subject to a fixed number of linear constraints carry over one-to-one. 
    \item Combining linear coefficients of two linear functions $f_a,f_b\in \textbf{Lin}_{q}(d)$ trivially yields another linear function $f_{a+b}\in \textbf{Lin}_{q}(d)$. Hence Lemma~\ref{lemma:p-parityRisk} applies.
    \item Any linear combination (modulo $q$) of i.i.d. random variables uniform over $\mathbb F_q$ is again uniform. This fact together with our assumption of i.i.d. uniform inputs means that Lemma~\ref{lemma:probRank} carries over.    
    \item For deriving the numbers of consistent functions in Lemma~\ref{lemma:e1spanned}, we first 'preprocess' the inputs via the mapping $\Pi X$ with $\Pi$ as defined above. Recall that in the original setup this mapping simply amounted to truncating the input vectors.
    \item One can also easily adapt Lemma~\ref{lemma:lin01notLearnable} since spanning $e_1$ has the same probability as spanning any arbitrary fixed $\sigma'$. The TV distance appearing in the final steps of showing Theorem~\ref{thm:lin} is the same in both setups for the same reason.
    \end{itemize}
\end{remark}
\end{proof}

\section{Limitations on Estimability in Binary Classification}
Let us restate Theorem~\ref{thm:parities} for convenience:
\begin{customthm}{5}
For every $\cA\in \mathbb A_{lin} $ (possibly randomized) and every $n\leq d-1$, $\cA$  is not  $( 0.25, 0.14, n)$-estimable with respect to $\mathbf{Lin}_2\lrp{d}$  and the $0-1$ loss.  More so, for every deterministic $\cA\in \mathbb A_{lin} $ and every $6 \leq n\leq d$, $\cA $  is not  $( 0.25, 0.32, n)$-estimable with respect to $\mathbf{Lin}_2\lrp{d}$  and the $0-1$ loss. 
\end{customthm}

\subsection{Proof of \texorpdfstring{Theorem~\ref{thm:parities}}{Theorem 5} for \texorpdfstring{$\mathcal A$}{A} deterministic}
\begin{proof}
We consider the case $q=2$ and examine when estimability is not possible for any algorithm $\cA \in \mathbb A_0$ by bounding the accuracy of the optimal estimator $\cE^*$. The result once again carries over to the more general case $\cA \in \mathbb A_{lin}$ due the reduction argument described in the remark appearing at the end of the previous section. 

Due to an argument analogous to the one in Lemma \ref{lemma:e1spanned}, we have that for $n\leq d$ and $\text{rank}(X)=k$, there are $2^{d-k}$ consistent functions in $\textbf{Lin}_2(d)$. 
By definition, in order to have $(\nu,\eta,n)$-estimability, $\cE^*$ can have failure probability at most $\eta(\nu,n)$ for given $\nu,n$.

Since for any given $S$, $f$ is uniform over all consistent functions in $\textbf{Lin}(d)$, the optimal estimator $\cE^*$ assigns $0$ whenever $k=n$, since in this case any $\mathcal A \in \mathbb A_0$ outputs the ground truth almost surely.

Moreover, any estimator $\cE^*$ that is optimal for a fixed error level $\nu$ must necessarily assign a value $c_\nu \in(0.5 - \nu, 0.5]$ whenever $k<n$, since then there are multiple consistent functions in $\textbf{Lin}_2(d)$, and simultaneously $\mathbb P(\{\mathcal{A(S)} = h\})\leq \frac 1 2$ for ground truth hypothesis $h$. To expand on this, note that if $\cE^*$ were to assign any value in $[0,0.5 - \nu]$ while there exist $m\geq 2$ functions consistent with $S$, this would cause the fidelity terms $F_i:=|\cE^*(\mathcal A,S) - L_{\mathcal D_i}(\mathcal A(S))|$ to exceed the threshold $\nu$ in $m-1$ of the instances and only in a single instance fall below $\nu$. This would obviously yield an increased overall error probability and hence be suboptimal. We can therefore assume w.l.o.g. that $\cE^*$ has $c_\nu = 0.5$.

Given that $\nu=0.25$ and $n<d$, this estimator then fails (i.e. exceeds the error level $\nu$) over $\{f \diamond \mathcal U(\mathcal X) |f \in \lin_{2,0}(d,n) \cup \mathcal \lin_{2,1}(d,n)\}$ with probability
\begin{align}
\mathbb P (\{ \cE^*&(\mathcal A,S) \neq L_\mathcal D(\mathcal A(S))\}) = \frac 1 2\sum_{k=0}^{n}\bigg(2^{k-n-1} \cdot \frac{2^{k}-1}{2^{n+1}-1} + 2^{k-n} \cdot(1-\frac{2^{k}-1}{2^{n+1}-1})\\
&~~~~~~~~~+2^{k-n-1} \cdot \frac{2^{k}-1}{2^{n+1}-1} \bigg) \times R_2(n,n+1,k)\\
&= \sum_{k=0}^{n}2^{k-n-1}\cdot R_2(n,n+1,k)\label{eq:errProbEst}
\end{align}
where the derivation is analogous to the one of $\beta$ in Lemma~\ref{lemma:lin01notLearnable}.

On the other hand, for $n=d$, from a similar argument it follows that over $\{f \diamond \mathcal U(\mathcal X) |f \in \lin_{2}(d)\}$, given that $\text{rank}(X)=k$, $\mathcal A$ picks the ground truth with probability $2^{k-d}$ and hence $\cE^*$ fails with probability
\begin{align}\label{eq:errProbEst2}
\mathbb P (\{ \cE^*(\mathcal A,S) \neq L_\mathcal D(\mathcal A(S))\}) = \sum_{k=0}^{d-1} 2^{k-d} \cdot R_2(n,n,k).
\end{align}
Combining \eqref{eq:errProbEst} with \eqref{eq:errProbEst2} we obtain that for all $n\leq d$, over $\mathcal \lin_{2}(d)$, the optimal estimator $\cE^*$ fails with probability
\begin{align}\label{eq:errProbEst3}
\mathbb P (\{ \cE^*(\mathcal A,S) \neq L_\mathcal D(\mathcal A(S))\}) = \sum_{k=0}^{m} 2^{k-m-1} \cdot R_2(n,m+1,k).
\end{align}
where $m:= \min\{n,d-1\}$.
It can be easily verified that for fixed $d$, \eqref{eq:errProbEst3} is monotonically decreasing in $n$. Moreover, when fixing $n=d$, \eqref{eq:errProbEst3} is increasing in $d$ and exceeds $0.32$ for all $d \geq 6$.
\end{proof}
\subsection{Proof of \texorpdfstring{Theorem~\ref{thm:parities}}{Theorem 5} for \texorpdfstring{$\mathcal A$}{A} random}
\begin{proof}
Assume w.l.o.g. that $\mathcal A$ randomly outputs a consistent function in $\textbf{Lin}_{2,0}$ whenever possible. Define $E_{-}=\{\text{rank}(X^-)=n\}\cap \{e_1 \text{ not spanned}\}$. Over the class $\textbf{Lin}_2(d,n)$, it follows from \eqref{eq:qnBound} together with the fact that $R_2(n,n+1,n)>0.57$ for all $n \geq 1$ that
\begin{equation}
\mathbb P(E_{-})> 0.57 \cdot 0.5
\end{equation}
where $X^-$ denotes the reduced $n \times (n+1)$ input matrix (recall that over $\textbf{Lin}_2(d,n)$ it is sufficient to process the first $n+1$ coordinates of the inputs). But in this case, there exists exactly one consistent function in each $\textbf{Lin}_{2,0}(d,n)$ and $\textbf{Lin}_{2,1}(d,n)$. Due to a argument similar to the one presented in the preceding subsection, we know that upon observing $E_-$, an optimal estimator $\cE^*$ of the population risk assigns $0$ since this yields the optimal accuracy. But then, $\cE^*$ exceeds the error level $\nu=0.25$ under the event $\{h \in \textbf{Lin}_{2,1}\}$ for $h$ denoting the ground truth labeling function. Since this event has probability $0.5$, we can conclude that $\cE^*$ fails with probability at least $0.57 \cdot 0.5 \cdot 0.5 > 0.14$. Since this implies that $\textbf{Lin}_{2,0}(d,n)\cup \textbf{Lin}_{2,1}(d,n)$ is not $(0.25,0.14,n)$-estimable, if follows that the superset $\textbf{Lin}_{2}(d)$ is also not estimable with the same parameters.
 \end{proof}

\end{document}